\newcommand{\E}{\mathbb{E}}
\newcommand{\R}{\mathbb{R}}
\newcommand{\N}{\mathbb{N}}
\newcommand*\diff{\mathop{}\!\mathrm{d}}
\theoremstyle{definition}
\newtheorem{definition}{Definition}
\newtheorem{example}{Example}
\theoremstyle{plain}
\newtheorem{lemma}{Lemma}
\newtheorem{theorem}{Theorem}
\newtheorem{corollary}{Corollary}
\title{Sharp Bounds for Generalized Causal Sensitivity Analysis}
\author{Dennis Frauen, Valentyn Melnychuk \& Stefan Feuerriegel \\
LMU Munich\\
Munich Center for Machine Learning\\
\texttt{\{frauen,melnychuk,feuerriegel\}@lmu.de} \\
}
\begin{document}

\maketitle

\begin{abstract}
Causal inference from observational data is crucial for many disciplines such as medicine and economics. However, sharp bounds for causal effects under relaxations of the unconfoundedness assumption (causal sensitivity analysis) are subject to ongoing research. So far, works with sharp bounds are restricted to fairly simple settings (e.g., a single binary treatment). In this paper, we propose a unified framework for causal sensitivity analysis under unobserved confounding in various settings. For this, we propose a flexible generalization of the marginal sensitivity model (MSM) and then derive sharp bounds for a large class of causal effects. This includes (conditional) average treatment effects, effects for mediation analysis and path analysis, and distributional effects. Furthermore, our sensitivity model is applicable to discrete, continuous, and time-varying treatments. It allows us to interpret the partial identification problem under unobserved confounding as a distribution shift in the latent confounders while evaluating the causal effect of interest. In the special case of a single binary treatment, our bounds for (conditional) average treatment effects coincide with recent optimality results for causal sensitivity analysis. Finally, we propose a scalable algorithm to estimate our sharp bounds from observational data. 
\end{abstract}

\section{Introduction}

Causal effects are crucial for decision-making in many disciplines, such as marketing \cite{Varian.2016}, medicine \cite{Yazdani.2015}, and economics \cite{Angrist.1990}. For example, physicians need to know the treatment effects to personalize medical care, and governments are interested in the causal effects of policies on infection rates during a pandemic. In many such applications, randomized experiments are costly or infeasible, because of which causal effects must be estimated from observational data \cite{Robins.2000}.

Estimating causal effects from observational data may lead to bias due to the existence of confounders, i.e., variables that affect both treatment and outcome \cite{Pearl.2009}. A remedy is to observe all confounders and thus to assume unconfoundedness (e.g., as in \cite{Curth.2021, Shalit.2017, Shi.2019}). However, in many practical applications, the assumption of unconfoundedness is violated. For example, electronic health records do not capture a patient's ethnic background, which is a known confounder in medicine \cite{Obermeyer.2019b}. In such cases, the causal effect is not identified from observational data, and unbiased estimation is thus impossible \cite{Hartford.2017}.

A popular way to perform causal inference in the presence of unobserved confounders is \textbf{\emph{causal sensitivity analysis}}. Causal sensitivity analysis aims to derive bounds on the causal effect of interest under relaxations of the unconfoundedness assumption. Here, the strength of unobserved confounding is typically controlled by some sensitivity parameter $\Gamma$, which also determines the tightness of the bounds. In practice, one chooses $\Gamma$ through domain knowledge \cite{Cornfield.1959, Jin.2023, Zhao.2019} or data-driven heuristics \cite{Hsu.2013}. Then, one derives that the causal effect of interest lies in some informative region. For example, a suitable $\Gamma$ may enable -- despite unobserved confounding -- to infer the sign of the treatment effect, which is often sufficient for consequential decision-making \cite{Kallus.2019}. 


A common model for causal sensitivity analysis is the marginal sensitivity model (MSM) \cite{Jesson.2021, Kallus.2019, Kallus.2018c, Tan.2006}. A benefit of the MSM is that it does not impose any kind of parametric assumptions on the data-generating process. However, the standard MSM is only applicable in settings where the treatment is a single binary variable. Different extensions have been proposed for continuous treatments \cite{Bonvini.2022, Jesson.2022, Marmarelis.2023} and for time-varying treatments and confounders \cite{Bonvini.2022}. However, these works are restricted to specific settings, while a unified framework for causal sensitivity analysis is still missing. 

\begin{wrapfigure}{r}{0.6\textwidth}
\vspace{-0.5cm}
\centering
\includegraphics[width=0.6\textwidth]{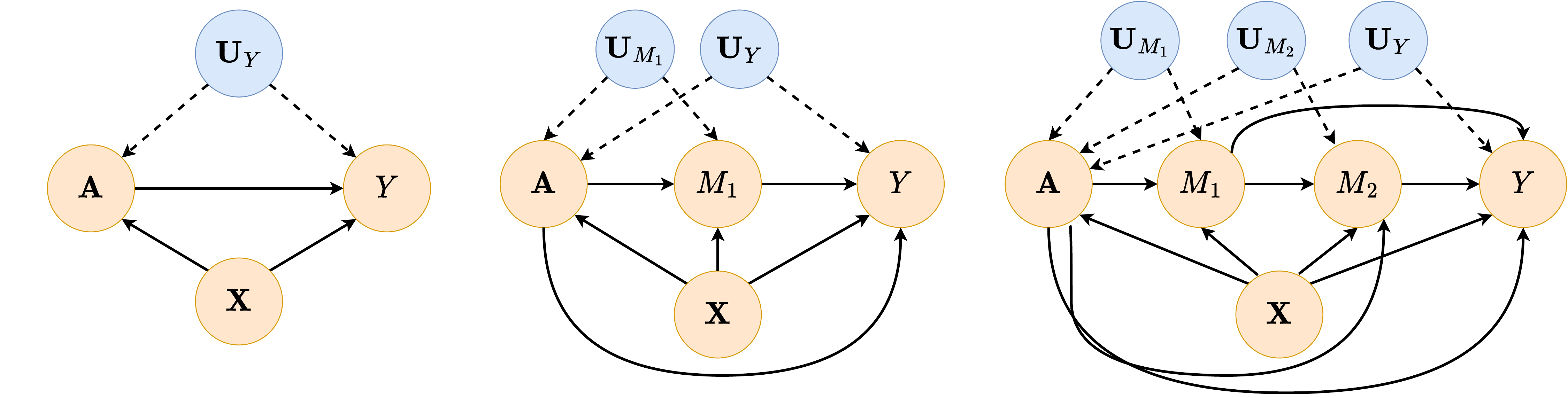}
\vspace{-0.4cm}
\caption{Three examples of causal inference settings where our GMSM and bounds are applicable. $M_1$, $M_2$ are mediators, $Y$ is the outcome, and $\mathbf{X}$ and $\mathbf{A}$ are (multiple) discrete, continuous, or time-varying covariates and treatments. Variables $\mathbf{U}_{W}$ are unobserved confounders between $\mathbf{A}$ and $W \in \{M_1, M_2, Y\}$. }
\label{fig:causal_graphs}
\vspace{-0.4cm}
\end{wrapfigure}
\textbf{Contributions:}\footnote{Code is available at \href{https://github.com/DennisFrauen/SharpCausalSensitivity}{https://github.com/DennisFrauen/SharpCausalSensitivity}.} In this paper, we propose a \emph{generalized marginal sensitivity model (GMSM)}. Our GMSM provides a unified framework for causal sensitivity analysis under unobserved confounding in various settings with multiple discrete, continuous, and time-varying treatments. Crucially, our GMSM includes existing models, such as those in \cite{Tan.2006} and \cite{Jesson.2022}, as special cases. As a result, our GMSM enables a unified approach to deriving sharp bounds for a large class of causal effects. To do so, we bound a distribution shift in the unobserved confounders while performing an intervention on the treatments. As a result, we obtain sharp bounds for various causal effects, including (conditional) average treatment effects but also effects for mediation analysis and path analysis (see Fig.~\ref{fig:causal_graphs}) and for distributional effects, which have not been studied under an MSM-type sensitivity analysis previously. We also show that, for binary treatments, our bounds coincide with recent optimality results for (conditional) average treatment effects under the MSM. Finally, we propose a scalable algorithm to estimate our sharp bounds from observational data and perform extensive computational experiments to show the validity of our bounds empirically.

\section{Related work}\label{sec:rw}

In the following, we review related work on causal sensitivity analysis. For a more general review of partial identification and treatment effect estimation under unconfoundedness, we refer to Appendix~\ref{app:rw}.

\textbf{Causal sensitivity analysis:} Causal sensitivity analysis dates back to a study from 1959 showing that unobserved confounders cannot explain away the causal effect of smoking on cancer \cite{Cornfield.1959}. This was formalized by introducing \emph{sensitivity models} that yield bounds on the causal effect of interest under some restriction on the amount of confounding. Previous works introduced a variety of sensitivity models that make different assumptions about the data-generating process and the confounding mechanism. Examples include sensitivity models based on parametric assumptions \cite{Imbens.2003, Rosenbaum.1983b}, difference between potential outcome distributions \cite{Robins.2000b, Vansteelandt.2006}, and Rosenbaum's sensitivity model that uses randomization tests \cite{Rosenbaum.1987}.

\textbf{Marginal sensitivity model (MSM):} The \emph{marginal sensitivity model (MSM)} \cite{Tan.2006} is a common model for sensitivity analysis aimed at settings with binary treatments. Many methods were proposed to estimate bounds from observational data under the MSM. Examples of such methods include linear fractional programs \cite{Zhao.2019} and machine learning such as including kernel-based methods \cite{Kallus.2019} and deep neural networks \cite{Jesson.2021}. Recently, \citeauthor*{Dorn.2022}~\cite{Dorn.2022} and \citeauthor*{Jin.2023}~\cite{Jin.2023} showed that estimates from the previous methods are too conservative bounds and, as a remedy, derived closed-form solutions for sharp bounds under the MSM. This was also extended to semiparametric inference \cite{Dorn.2022b, Oprescu.2023}. However, all previous methods are limited to binary treatments and (conditional) average treatment effects. 

Different extensions for the MSM have been proposed. \citeauthor*{Jesson.2022}~\cite{Jesson.2022}, \citeauthor*{Bonvini.2022}~\cite{Bonvini.2022}, and \citeauthor*{Marmarelis.2023}~\cite{Marmarelis.2023} developed extensions for continuous treatments. Furthermore, \citeauthor*{Bonvini.2022}~\cite{Bonvini.2022} also extended the MSM to time-varying treatments and confounders. However, both works are \emph{not} applicable to settings beyond the estimation of (conditional) average treatment effects, such as mediation analysis and path analysis or distributional effects. In contrast, we propose a generalized marginal sensitivity model that is compatible with binary, continuous, and time-varying treatments for a variety of causal effects.

\textbf{Research gap:} To the best of our knowledge, no existing causal sensitivity analysis based on the MSM provides a unified framework for deriving bounds for binary, continuous, and time-varying treatments. Furthermore, MSM-based causal sensitivity analysis is restricted to (conditional) average treatment effects and is thus \emph{not} applicable for complex settings such as effects for mediation analysis and path analysis or distributional effects. 

\section{Generalized marginal sensitivity model (GMSM)}

We first formally define a general setting for causal sensitivity analysis that includes mediation and path analysis (Sec.~\ref{sec:scm} and Sec.~\ref{sec:sensitivity}). We then propose our generalized marginal sensitivity model (GMSM) in Sec.~\ref{sec:gmsm} and compare it with existing sensitivity models from the literature.

\textbf{Notation:} We write random variables as capital letters $X$ and their realizations in lowercase $x$. Bold letters $\mathbf{X}$ or $\mathbf{x}$ represent (random) vectors. If $\mathbf{X} = (X_1, \dots, X_\ell)$ is a sequence of random variables of length $\ell$, we denote $\Bar{\mathbf{X}}_{k} = (X_1, \dots, X_k)$ for $1 \leq k \leq \ell$. We denote probability distributions over $X$ as $\mathbb{P}^X$ where required. The probability mass function for a discrete $X$ is denoted as $\mathbb{P}(x) = \mathbb{P}(X = x)$. If $X$ is continuous, $\mathbb{P}(x)$ is the probability density function. We denote $\mathbb{P}(\cdot)$ as the corresponding probability mass/density function not evaluated at a specific $x$. Similarly, we write conditional probability mass functions/density functions as $\mathbb{P}(y \mid x) = \mathbb{P}(Y = y \mid X = x)$ and conditional expectations as $\E[Y \mid x] = \E[Y \mid X = x] = \int y \, \mathbb{P}(y \mid x) \diff y$. We denote $\mathbb{P}(y \mid do(X = x))$ as the probability mass function/density function of $Y$ after performing the do-intervention $do(X = x)$ \cite{Pearl.2009}. Finally, we define $\mathbf{X}_Y = \mathit{pa}(Y) \cap \mathbf{X}$, where $\mathit{pa}(Y)$ denotes the parents of $Y$ in a given causal graph.

\subsection{Problem setup for generalized causal sensitivity analysis}\label{sec:scm}

We formalize causal sensitivity analysis based on Pearl's structural causal model framework \cite{Pearl.2009}.

\begin{definition}
A \emph{structural causal model (SCM)} $\mathcal{M}$ is a tuple $\left(\mathbf{V}, \mathbf{U}, \mathcal{F}, \mathbb{P}^\mathbf{U} \right)$, where $\mathbf{V} = (V_1, \dots, V_k)$ are observed endogenous variables, $\mathbf{U}$ are unobserved exogenous variables determined outside of the model, $\mathcal{F} = \{f_{V_1}, \dots, f_{V_k}\}$ is a set of functions so that each $f_{V_i}$ maps a set of parents $\mathit{pa}(V_i) \subseteq \mathbf{V} \cup \mathbf{U}$ to $V_i$, and $\mathbb{P}^\mathbf{U}$ is a probability distribution on $\mathbf{U}$. 
\end{definition}

Every SCM $\mathcal{M}$ induces unique directed graph\footnote{Note that we consider graphs over both observed and unobserved variables. In the causal inference literature, these are also called augmented causal graphs\cite{Bongers.2021}.} $\mathcal{G}_{\mathcal{M}}$ on $\mathbf{V} \cup \mathbf{U}$ by drawing a directed edge from $V_1$ to $V_2$ if $V_1 \in \mathit{pa}(V_2)$ . In this paper, we assume that $\mathcal{G}_{\mathcal{M}}$ is acyclic, i.e., does not contain any directed cycle. Then, $\mathcal{M}$ induces a unique joint probability distribution $\mathbb{P}^{\mathbf{V} \cup \mathbf{U}}$ on $\mathbf{V} \cup \mathbf{U}$. Furthermore, $\mathcal{M}$ induces unique interventional distributions $\mathbb{P}^{\mathbf{V} \cup \mathbf{U}}_{do(\mathbf{A} = \mathbf{a})}$ when performing the do-intervention $do(\mathbf{A} = \mathbf{a})$ for some observed treatments $\mathbf{A} \subseteq \mathbf{V}$ \cite{Bareinboim.2022, Pearl.2009}.

We consider settings where we observe four distinct types of endogenous variables $\mathbf{V} = \{\mathbf{X}, \mathbf{A}, \Bar{\mathbf{M}}_\ell, Y\}$: observed confounders $\mathbf{X} \in \mathcal{X} \subseteq \R^{d_x}$, treatments $\mathbf{A} \in \mathcal{A} \subseteq \R^{d_a}$, discrete mediators $\Bar{\mathbf{M}}_\ell = (M_1, \dots, M_\ell)$ with $M_i \in \N$ for $1 \leq i \leq \ell$, and an outcome $Y \in \R$. In a medical setting, $\mathbf{X}$ might be patient characteristics (gender, age, medical history, etc.), $\mathbf{A}$ a medical treatment, $\Bar{\mathbf{M}}_\ell$ a change in diet, and $Y$ a variable indicating a health outcome. Possible causal graphs are shown in Fig.~\ref{fig:causal_graphs}. We assume w.l.o.g. that $(M_1, \dots, M_\ell)$ are ordered causally, i.e., $\Bar{\mathbf{M}}_{i-1}$ are parents of $M_i$ for each $i \in \{2, \dots, \ell\}$. Given treatment interventions $\Bar{\mathbf{a}}_{\ell + 1} = (\mathbf{a}_1, \dots, \mathbf{a}_{\ell+1})$, we are interested in causal effects of the form
\begin{equation}\label{eq:query}
  Q(\mathbf{x}, \Bar{\mathbf{a}}_{\ell + 1}, \mathcal{M}) = \sum_{\Bar{\mathbf{m}}_\ell \in \N^\ell} \mathcal{D}\left(\mathbb{P}^Y(\cdot \mid \mathbf{x}, \Bar{\mathbf{m}}_\ell, do(\mathbf{A} = \mathbf{a}_{\ell+1})\right) \prod_{i = 1}^\ell \mathbb{P}(m_i \mid \mathbf{x}, \Bar{\mathbf{m}}_{i-1} , do(\mathbf{A} = \mathbf{a}_i)),
\end{equation}
where $\mathcal{D}$ is some functional that maps the density $\mathbb{P}^Y(\cdot \mid \mathbf{x}, \mathbf{m}, do(\mathbf{A} = \mathbf{a}_{\ell+1}))$ to a scalar value and we sum over all possible realizations $\Bar{\mathbf{m}}_\ell$ of $\Bar{\mathbf{M}}_\ell$. We are also interested in averaged causal effects $\int_\mathcal{X} Q(\mathbf{x}, \Bar{\mathbf{a}}_{\ell + 1}, \mathcal{M}) \, \mathbb{P}(\mathbf{x}) \diff \mathbf{x}$ or differences (Appendix~\ref{app:average}). The effect $Q(\mathbf{x}, \Bar{\mathbf{a}}_{\ell + 1}, \mathcal{M})$ generalizes many common effects across different causal inference settings.

\begin{example}[CATE, $\ell = 0$]
If $\mathcal{D} = \E[\cdot]$ is the expectation functional, Eq.~\eqref{eq:query} reduces to $Q(\mathbf{x}, \mathbf{a}, \mathcal{M}) = \E\left[Y \mid \mathbf{x}, do(\mathbf{A} = \mathbf{a})\right]$. When $\mathbf{A}$ is continuous, this is known as the conditional dose-response function \cite{Bica.2020c, Jesson.2022}. For binary treatments $\mathbf{A} \in \{0,1\}$, the query $Q(\mathbf{x}, 1, \mathcal{M}) -  Q(\mathbf{x}, 0, \mathcal{M})$ is known as the conditional average treatment effect (CATE) \cite{Curth.2021, Yoon.2018}, and its averaged version as the average treatment effect (ATE) \cite{Shi.2019, vanderLaan.2006}.
\end{example}

\begin{example}[Mediation analysis, $\ell = 1$]\label{example:mediation}
If $\mathcal{D} = \E[\cdot]$ and $\Bar{\mathbf{M}}_1 = M$ is a single mediator, Eq.~\eqref{eq:query} reduces to $Q(\mathbf{x}, \Bar{\mathbf{a}}_1, \mathcal{M}) = \sum_{m} \E\left[Y \mid \mathbf{x}, m,  do(\mathbf{A} = \mathbf{a}_2)\right] \, \mathbb{P}(m \mid \mathbf{x}, do(\mathbf{A} = \mathbf{a}_1))$.
If ${A} \in \{0,1\}$ is binary and the $M$--$Y$ relationship is unconfounded, we obtain the following causal effects studied in mediation analysis \cite{Pearl.2014}: $Q(\mathbf{x}, (\mathbf{a}_1=0,\mathbf{a}_2=1), \mathcal{M})-Q(\mathbf{x}, (\mathbf{a}_1=0,\mathbf{a}_2=0), \mathcal{M})$ is the (conditional) natural direct effect (NDE), and $Q(\mathbf{x}, (\mathbf{a}_1=1,\mathbf{a}_2=0), \mathcal{M})-Q(\mathbf{x}, (\mathbf{a}_1=0,\mathbf{a}_2=0), \mathcal{M})$ is the (conditional) natural indirect effect (NIE).
\end{example}

In general, Eq.~\eqref{eq:query} includes so-called \emph{path-specific effects} if the relationship between mediators and outcome is unconfounded \cite{Avin.2005, Correa.2021}. For example, by setting $\mathbf{a}_1 = 1$ and $\mathbf{a}_k = 0$ for all $2 \leq k \leq \ell+1$, we obtain the indirect effect that is passed through the mediator sequence $(M_1, \dots, M_\ell)$. Path-specific effects are important in various applications, including algorithmic fairness, where the aim is to mitigate effects through paths that are considered unfair \cite{Nabi.2019, Nabi.2018}. Furthermore, we can set $\mathcal{D}$ to a quantile instead of using the mean in all the examples above. This results in \emph{distributional} versions of CATE, NDE, NIE, and path-specific effects. For example, if the outcome distribution is skewed or contains outliers, practitioners might prefer the median or other quantiles over the mean due to their robustness properties \cite{Donoho.1983}.

If all confounders between treatment and mediators and outcome were observed, we could (under additional assumptions) identify the causal effect $Q(\mathbf{x}, \Bar{\mathbf{a}}_{\ell + 1}, \mathcal{M})$ from the observational distribution $\mathbb{P}^\mathbf{V}$ on $\mathbf{V}$ by replacing all $do$-operations with conditional probabilities according to the backdoor criterion \cite{Pearl.2009}. However, under unobserved confounding, identifiability is not possible because SCMs $\mathcal{M}$ and $\mathcal{M}^\prime$ exist, which induce the same observational distribution $\mathbb{P}^\mathbf{V}$, but which result in different causal effects $Q(\mathbf{x}, \Bar{\mathbf{a}}_{\ell + 1}, \mathcal{M}) \neq Q(\mathbf{x}, \Bar{\mathbf{a}}_{\ell + 1}, \mathcal{M}^\prime)$ \cite{Padh.2023, Xia.2023}. 

\subsection{Causal sensitivity analysis}\label{sec:sensitivity}

In the following, we formalize causal sensitivity analysis as maximizing/minimizing the causal effect $Q(\mathbf{x}, \Bar{\mathbf{a}}_{\ell + 1}, \mathcal{M})$ over all SCMs $\mathcal{M}$ that are compatible with a predefined \emph{sensitivity model}. Similar approaches have been used for partial identification with instrumental variables \cite{Kilbertus.2020, Padh.2023} and testing identifiability of counterfactuals \cite{Xia.2023, Xia.2021}, where all SCMs are considered that are compatible with the observational data. However, a sensitivity model must additionally restrict the joint distribution of both observed and unobserved variables in order to allow for informative bounds on the causal effect.

\begin{definition}[Sensitivity model]\label{def:sensitivity}
Let $\mathbb{P}^\mathbf{V}$ denote the distribution of the observed variables $\mathbf{V} = (\mathbf{X}, \mathbf{A}, \Bar{\mathbf{M}}_\ell, Y)$. A \emph{sensitivity model} $\mathcal{S}$ is a tuple $(\mathbf{U}, \mathcal{P})$, where $\mathbf{U}= (\mathbf{U}_{W})_{W}$ are unobserved confounders between $\mathbf{A}$ and $W \in \{M_1, \dots, M_\ell, Y\}$, respectively (see Fig.~\ref{fig:causal_graphs}) and a family $\mathcal{P}$ of joint probability distributions on $\mathbf{V} \cup \mathbf{U}$, such that, for all $\mathbb{P} \in \mathcal{P}$, it holds that $\int \mathbb{P}(\mathbf{v}, \mathbf{u}) \diff \mathbf{u} = \mathbb{P}^\mathbf{V}(\mathbf{v})$. We denote the set of all SCMs $\mathcal{M}$ \emph{compatible} with $\mathcal{S}$ (i.e., that respect the causal graph, induce a distribution $\mathbb{P} \in \mathcal{P}$, and do not contain additional confounders) as $\mathcal{C}(\mathcal{S})$ (see Appendix~\ref{app:compatibility}).
\end{definition}

Our definition of sensitivity models excludes unobserved confounding between mediators and the outcome. This ensures that the causal effect from Eq.~\eqref{eq:query} can be interpreted as a path-specific effect \cite{Avin.2005, Pearl.2014}. We refer to Sec.~\ref{sec:discussion} for a detailed discussion. 

Using the definitions above, we can define \emph{causal sensitivity analysis} as the following partial identification problem: we aim to obtain bounds $Q^-(\mathbf{x}, \Bar{\mathbf{a}}_{\ell + 1}, \mathcal{S}) \leq Q^+(\mathbf{x}, \Bar{\mathbf{a}}_{\ell + 1}, \mathcal{S})$ so that
\begin{equation}\label{eq:bounds}
Q^+(\mathbf{x}, \Bar{\mathbf{a}}_{\ell + 1}, \mathcal{S}) = \sup_{\mathcal{M} \in \mathcal{C}(\mathcal{S})} Q(\mathbf{x}, \Bar{\mathbf{a}}_{\ell + 1}, \mathcal{M}) \text{ and }
Q^-(\mathbf{x}, \Bar{\mathbf{a}}_{\ell + 1}, \mathcal{S}) = \inf_{\mathcal{M} \in \mathcal{C}(\mathcal{S})} Q(\mathbf{x}, \Bar{\mathbf{a}}_{\ell + 1}, \mathcal{M}).
\end{equation}
$Q^+(\mathbf{x}, \Bar{\mathbf{a}}_{\ell + 1}, \mathcal{S})$ is the maximal causal effect that can be achieved by any SCM that is compatible with the sensitivity mode $\mathcal{S}$ (and vice versa for $Q^-(\mathbf{x}, \Bar{\mathbf{a}}_{\ell + 1}, \mathcal{S})$). Hence, if the sensitivity model is valid, i.e., contains the ground-truth distribution over observed variables and unobserved confounders, we know that the ground-truth causal effect must be contained in the interval $[Q^-(\mathbf{x}, \Bar{\mathbf{a}}_{\ell + 1}, \mathcal{S}), Q^+(\mathbf{x}, \Bar{\mathbf{a}}_{\ell + 1}, \mathcal{S})]$. Bounds for average causal effects and effect differences follow immediately (see Appendix~\ref{app:average}).

\subsection{Generalized marginal sensitivity model (GMSM)}\label{sec:gmsm}

We introduce now the GMSM. We begin by providing the general definition and then show that this extends various marginal sensitivity models from existing literature. 
\begin{definition}[GMSM]
The \emph{generalized marginal sensitivity model (GMSM)} is a sensitivity model $(\mathbf{U}, \mathcal{P})$, where $\mathcal{P}$ contains all $\mathbb{P}$ that satisfy the following sensitivity constraint: For each $W \in \{M_1, \dots, M_\ell, Y\}$, there exist bounds $s_W^-(\mathbf{a}, \mathbf{x}) \leq 1 \leq s_W^+(\mathbf{a}, \mathbf{x})$, so that, for all $\mathbf{u}_W$, $\mathbf{x}$, and $\mathbf{a}$
\begin{equation}\label{eq:gmsm_def}
s_W^-(\mathbf{a}, \mathbf{x}) \leq \frac{\mathbb{P}(\mathbf{U}_W = \mathbf{u}_W \mid \mathbf{x}, \mathbf{a})}{\mathbb{P}(\mathbf{U}_W = \mathbf{u}_W \mid \mathbf{x}, do(\mathbf{A} = \mathbf{a}))} \leq s_W^+(\mathbf{a}, \mathbf{x}).
\end{equation}
\end{definition}

The GMSM bounds the distribution shift in the unobserved confounders $\mathbf{U}_W$ when performing the intervention $do(\mathbf{A} = \mathbf{a})$ instead of conditioning on $\mathbf{A} = \mathbf{a}$. This is a restriction on the strength of the effect the unobserved confounders $\mathbf{U}_W$ can have on the treatment $\mathbf{A}$. If $\mathbf{U}_W$ has no effect on $\mathbf{A}$, Eq.~\eqref{eq:gmsm_def} holds with $s_W^-(\mathbf{a}, \mathbf{x}) =  s_W^+(\mathbf{a}, \mathbf{x}) = 1$. Hence, the further $s_W^-(\mathbf{a}, \mathbf{x})$ and $s_W^+(\mathbf{a}, \mathbf{x})$ deviate from $1$, the larger is the effect from $\mathbf{U}_W$ on $\mathbf{A}$ that the GMSM allows for. We will often use a \emph{weighted} GMSM, which expresses the bounds in terms of a sensitivity parameter.

\begin{definition}[Weighted GMSM]\label{def:gmsm_weighted_def}
A \emph{weighted GMSM} is a GMSM where  $s_W^-(\mathbf{a}, \mathbf{x})$ and $s_W^+(\mathbf{a}, \mathbf{x})$ can be written as $s_W^-(\mathbf{a}, \mathbf{x}) =  \frac{1}{(1 - \Gamma_W) q_W(\mathbf{a}, \mathbf{x}) + \Gamma_W}$ and $s_W^+(\mathbf{a}, \mathbf{x}) = \frac{1}{(1 - \Gamma_W^{-1}) q_W(\mathbf{a}, \mathbf{x}) + \Gamma_W^{-1}}$ for a sensitivity parameter $\Gamma_W \geq 1$ and a weight function $q_W(\mathbf{a}, \mathbf{x}) \in [0,1]$ for all $\mathbf{x}$ and $\mathbf{a}$.
\end{definition}

In a weighted GMSM, the sensitivity parameter $\Gamma_W$ captures the overall restriction on the unobserved confounding strength across individuals. If $\Gamma_W = 1$, no unobserved confounding is allowed, and unconfoundedness holds. For $\Gamma_W \to \infty$, the restriction is relaxed completely, and arbitrary confounding strength is allowed. The weight function $q_W(\mathbf{a}, \mathbf{x}) \in [0,1]$ offers several advantages. It allows to further restrict confounding for individuals with treatments $\mathbf{a}$ and covariates $\mathbf{x}$. As $q_W(\mathbf{a}, \mathbf{x}) \to 1$, confounding is more strongly restricted until unconfoundedness is reached. This is helpful in applications where prior knowledge about the confounding structure is available. As an example, consider a medical setting where we want to estimate the effect of new drug treatments on the risk of developing a certain disease, but we suspect that the data is confounded by the individual's genetic risk for the disease, which is not measured in the observational data. However, we also might know that genetic risk for the disease is not relevant for a specific combination of age and gender. 

\textbf{Comparison with the MSM and its extensions:} In the following, we show that the weighted GMSM extends popular sensitivity models from the literature. Proofs are in Appendix~\ref{app:msm_gmsm}. Note that existing sensitivity models do not consider settings with mediators (i.e., $\ell=0$), we thus can write $\Gamma = \Gamma_Y$ for the sensitivity parameter, $q(\mathbf{a}, \mathbf{x}) = q_Y(\mathbf{a}, \mathbf{x})$ for the weight function, and $\mathbf{U} = \mathbf{U}_Y$ for the unobserved confounders. For binary treatments $\mathbf{A} = A \in \{0,1\}$, the marginal sensitivity model (MSM) \cite{Tan.2006} is defined via $\frac{1}{\Gamma} \leq  \frac{\pi(\mathbf{x})}{1 - \pi(\mathbf{x})} \frac{1 - \pi(\mathbf{x}, \mathbf{u})}{\pi(\mathbf{x}, \mathbf{u})} \leq \Gamma$, where $\pi(\mathbf{x}) = \mathbb{P}(A = 1 \mid \mathbf{x})$ denotes the observed propensity score and $\pi(\mathbf{x}, \mathbf{u}) = \mathbb{P}(A = 1 \mid \mathbf{x}, \mathbf{u})$ denotes the full propensity score. For continuous treatments $\mathbf{A}$, the continuous marginal sensitivity model (CMSM) \cite{Jesson.2022} is defined via $\frac{1}{\Gamma} \leq \frac{\mathbb{P}(\mathbf{a} \mid \mathbf{x}, \mathbf{u})}{\mathbb{P}(\mathbf{a} \mid \mathbf{x})}  \leq \Gamma$. For longitudinal settings with time-varying observed confounders $\mathbf{X} = (\mathbf{X}_1, \dots, \mathbf{X}_T)$, unobserved confounders $\mathbf{U} = (\mathbf{U}_1, \dots, \mathbf{U}_T)$, treatments $\mathbf{A} = (\mathbf{A}_1, \dots, \mathbf{A}_T)$, we define the longitudinal marginal sensitivity model (LMSM) via $\frac{1}{\Gamma} \leq \prod_{t=1}^T \frac{\mathbb{P}(\mathbf{a}_t \mid \Bar{\mathbf{x}}_T, \Bar{\mathbf{u}}_t, \Bar{\mathbf{a}}_{t-1})}{\mathbb{P}(\mathbf{a}_t \mid \Bar{\mathbf{x}}_T, \Bar{\mathbf{a}}_{t-1})} \leq \Gamma$.

\begin{lemma}\label{lem:msm}
The MSM, CMSM and LMSM are special cases of the weighted GMSM by choosing the weight functions $q(\mathbf{a}, \mathbf{x}) = \mathbb{P}(\mathbf{a} \mid \mathbf{x})$ (MSM) and $q(\mathbf{a}, \mathbf{x}) = 0$ (CMSM, LMSM).
\end{lemma}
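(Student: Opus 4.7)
The unifying step is to rewrite the ratio in the GMSM constraint as a density ratio in $\mathbf{a}$. Because $\mathbf{U}_W$ is never a descendant of $\mathbf{A}$ in any of the augmented graphs under consideration (cf.\ Fig.~\ref{fig:causal_graphs}), the truncated factorization formula gives $\mathbb{P}(\mathbf{u}_W\mid\mathbf{x},do(\mathbf{A}=\mathbf{a})) = \mathbb{P}(\mathbf{u}_W\mid\mathbf{x})$. Combining this with Bayes' rule yields the key identity
\begin{equation*}
\frac{\mathbb{P}(\mathbf{u}_W\mid\mathbf{x},\mathbf{a})}{\mathbb{P}(\mathbf{u}_W\mid\mathbf{x},do(\mathbf{A}=\mathbf{a}))} \;=\; \frac{\mathbb{P}(\mathbf{a}\mid\mathbf{x},\mathbf{u}_W)}{\mathbb{P}(\mathbf{a}\mid\mathbf{x})}.
\end{equation*}
Hence the GMSM constraint is equivalent (pointwise in $\mathbf{u}_W$) to a bound on the propensity-style density ratio $\mathbb{P}(\mathbf{a}\mid\mathbf{x},\mathbf{u}_W)/\mathbb{P}(\mathbf{a}\mid\mathbf{x})$, which is precisely the object appearing in the CMSM, LMSM, and (in odds-ratio form) the MSM.

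For the CMSM case, plugging $q(\mathbf{a},\mathbf{x}) = 0$ into Def.~\ref{def:gmsm_weighted_def} immediately gives $s^-(\mathbf{a},\mathbf{x}) = 1/\Gamma$ and $s^+(\mathbf{a},\mathbf{x}) = \Gamma$, so the transformed GMSM bound is the CMSM verbatim. For the LMSM, the same choice $q=0$ again yields $s^\pm = \Gamma^{\mp 1}$, and it remains to show that the LMSM's product of per-time-step density ratios equals the joint ratio $\mathbb{P}(\Bar{\mathbf{a}}_T\mid\Bar{\mathbf{x}}_T,\Bar{\mathbf{u}}_T)/\mathbb{P}(\Bar{\mathbf{a}}_T\mid\Bar{\mathbf{x}}_T)$ that my key identity produces. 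This is a chain-rule computation on the numerator and denominator separately, using the graph assumption that $\mathbf{A}_t$ is conditionally independent of future confounders $\mathbf{U}_{t+1:T}$ given $(\Bar{\mathbf{X}}_T,\Bar{\mathbf{U}}_t,\Bar{\mathbf{A}}_{t-1})$.

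The main work is the MSM case. With $q(a,\mathbf{x}) = \mathbb{P}(a\mid\mathbf{x})$, the weighted GMSM bounds at $a=1$ become $s^-(1,\mathbf{x}) = 1/[(1-\Gamma)\pi(\mathbf{x}) + \Gamma]$ and $s^+(1,\mathbf{x}) = 1/[(1-\Gamma^{-1})\pi(\mathbf{x}) + \Gamma^{-1}]$. My plan is to start from the MSM upper bound $\tfrac{\pi(\mathbf{x})}{1-\pi(\mathbf{x})}\tfrac{1-\pi(\mathbf{x},\mathbf{u})}{\pi(\mathbf{x},\mathbf{u})} \le \Gamma$, clear denominators, and solve for $\pi(\mathbf{x},\mathbf{u})/\pi(\mathbf{x})$ to obtain exactly $s^-(1,\mathbf{x})$; symmetrically, the MSM lower bound yields $s^+(1,\mathbf{x})$. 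Since every step is a reversible algebraic manipulation, this gives equivalence in both directions. The $a=0$ case follows either by repeating the same computation or by observing that the MSM odds ratio is invariant under swapping $0$ and $1$, so it simultaneously implies the weighted GMSM bound at both treatment values. The step I expect to be the main obstacle is precisely this algebraic verification: the MSM is phrased as an odds-ratio constraint while the weighted GMSM is phrased as a probability-ratio constraint, and one has to check that the particular weight $q = \pi$ is exactly what converts one form into the other at both endpoints and both treatment levels.
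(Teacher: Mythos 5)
Your treatment of the MSM and CMSM is correct and follows essentially the same route as the paper: Bayes' theorem converts the GMSM ratio $\mathbb{P}(\mathbf{u}\mid\mathbf{x},\mathbf{a})/\mathbb{P}(\mathbf{u}\mid\mathbf{x},do(\mathbf{A}=\mathbf{a}))$ into the propensity ratio $\mathbb{P}(\mathbf{a}\mid\mathbf{x},\mathbf{u})/\mathbb{P}(\mathbf{a}\mid\mathbf{x})$ using $\mathbb{P}(\mathbf{u}\mid\mathbf{x},do(\mathbf{A}=\mathbf{a}))=\mathbb{P}(\mathbf{u}\mid\mathbf{x})$, and the algebraic step you flag as the main obstacle (clearing denominators in the odds-ratio constraint to obtain $s^\pm$ with $q=\mathbb{P}(a\mid\mathbf{x})$, at both treatment levels) goes through exactly as you plan; this is precisely the paper's Eq.~\eqref{eq:ratio_bound_msm} and Eq.~\eqref{eq:proof_msm_bayes}.

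The LMSM case, however, has a genuine gap. Your ``key identity'' rests on $\mathbb{P}(\mathbf{u}_W\mid\mathbf{x},do(\mathbf{A}=\mathbf{a}))=\mathbb{P}(\mathbf{u}_W\mid\mathbf{x})$ on the grounds that $\mathbf{U}_W$ is never a descendant of $\mathbf{A}$. In the longitudinal setting this is false: the time-varying confounders $\mathbf{U}_t$ for $t\geq 2$ may be affected by earlier treatments $\Bar{\mathbf{A}}_{t-1}$ (the paper itself remarks after Theorem~\ref{thrm:shift} that the assumption $\mathbb{P}(\mathbf{u}_W \mid \mathbf{x}, do(\mathbf{A} = \mathbf{a})) = \mathbb{P}(\mathbf{u}_W \mid \mathbf{x})$ ``excludes the time-varying case (LMSM)''). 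For the same reason, the conditional independence you invoke for the numerator chain rule, $\mathbf{A}_t \perp \mathbf{U}_{t+1:T}\mid \Bar{\mathbf{X}}_T,\Bar{\mathbf{U}}_t,\Bar{\mathbf{A}}_{t-1}$, fails whenever $\mathbf{U}_{t+1}$ is a child of $\mathbf{A}_t$. The paper avoids both problems by never introducing the marginal $\mathbb{P}(\Bar{\mathbf{u}}_T\mid\Bar{\mathbf{x}}_T)$: it telescopes $\mathbb{P}(\Bar{\mathbf{u}}_T\mid\Bar{\mathbf{x}}_T,\Bar{\mathbf{a}}_T)$ as a product of ratios $\mathbb{P}(\Bar{\mathbf{u}}_t\mid\Bar{\mathbf{x}}_T,\Bar{\mathbf{a}}_t)/\mathbb{P}(\Bar{\mathbf{u}}_{t-1}\mid\Bar{\mathbf{x}}_T,\Bar{\mathbf{a}}_{t-1})$, applies Bayes' theorem at each step, and identifies the leftover factor $\prod_{t=1}^T\mathbb{P}(\mathbf{u}_t\mid\Bar{\mathbf{x}}_T,\Bar{\mathbf{a}}_{t-1},\Bar{\mathbf{u}}_{t-1})$ with the g-formula expression for $\mathbb{P}(\Bar{\mathbf{u}}_T\mid\Bar{\mathbf{x}}_T,do(\Bar{\mathbf{A}}_T=\Bar{\mathbf{a}}_T))$, which is in general different from $\mathbb{P}(\Bar{\mathbf{u}}_T\mid\Bar{\mathbf{x}}_T)$. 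To repair your argument you would need to replace the marginal on the right-hand side of your key identity with this sequential interventional density and redo the bookkeeping accordingly.
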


Lemma~\ref{lem:msm} provides a \emph{new interpretation of the MSM}: If the probability $\mathbb{P}(\mathbf{a} \mid \mathbf{x})$ is large, the MSM restricts the confounding strength because most of the randomness in the treatment is already explained by the observed confounders $\mathbf{x}$. We can extend this approach to arbitrary discrete treatments within the weighted GMSM framework. The CMSM and LMSM apply the same confounding restriction for all $\mathbf{a}$ and $\mathbf{x}$.

\section{Bounding causal effects under the GMSM}

In this section, we derive sharp bounds under our GMSM. We first quantify the maximal shift in conditional distributions (Sec.~\ref{sec:shift}). This allows us to derive an algorithm to compute explicit bounds (Sec.~\ref{sec:bounds}). Finally, we show how these bounds can be estimated from finite data (Sec.~\ref{sec:estimation}).

\subsection{Shifting interventional distributions}\label{sec:shift}

In the following, we provide some intuition before stating the main result. Let us consider a simple setting with treatment $\mathbf{A}$, a single unobserved confounder $\mathbf{U}_Y = U \in \R$, outcome $Y$, and GMSM bounds $s_Y^-(\mathbf{a})$ and $s_Y^+(\mathbf{a})$ (see Fig.~\ref{fig:intuition}, left). Any SCM $\mathcal{M}$ that is compatible with $\mathcal{S}$ describes the relationship between $\mathbf{A}$, $U$, and $Y$ via a functional assignment $Y = f_Y(\mathbf{A}, U)$. For a fixed treatment $\mathbf{a}$, we denote $f_Y(\mathbf{a}, \cdot)$ as $f_\mathbf{a}$.  We are interested in the \emph{interventional density} $\mathbb{P}\left(y \mid do(\mathbf{A} = \mathbf{a})\right) = \int \mathbb{P}\left(y \mid \mathbf{a}, u \right) \mathbb{P}(u) \diff u = {f_\mathbf{a}}_\# \mathbb{P}^{U}(y)$, where ${f_\mathbf{a}}_\# \mathbb{P}^{U}$ denotes the push-forward distribution induced by $f_\mathbf{a}$ on $\mathbb{P}^U$. However, we only have access to the \emph{observational density} $\mathbb{P}\left(y \mid \mathbf{a}\right) = \int \mathbb{P}\left(y \mid \mathbf{a}, u \right) \mathbb{P}(u \mid \mathbf{a}) \diff u = {f_\mathbf{a}}_\# \mathbb{P}^{U \mid \mathbf{a}}(y)$. Hence, for a fixed functional assignment $f_\mathbf{a}$, we can quantify the discrepancy between $\mathbb{P}\left(y \mid do(\mathbf{A} = \mathbf{a})\right)$ and $\mathbb{P}\left(y \mid \mathbf{a}\right)$ via the distribution shift between $\mathbb{P}^U$ and $\mathbb{P}^{U \mid \mathbf{a}}$.

Fig.~\ref{fig:intuition} shows a toy example where $\mathbb{P}^{U\mid \mathbf{a}}$ is the uniform distribution on $[0,1]$, and the functional assignment $f_\mathbf{a}$ is the inverse standard normal CDF $\Phi^{-1}$, so that $\mathbb{P}\left(y \mid \mathbf{a}\right)$ is the standard normal probability density. We now want to \say{right-shift} the interventional density $\mathbb{P}\left(y \mid do(\mathbf{A} = \mathbf{a})\right)$ as much as possible, so that $F(y \mid a) \gg F(y \mid do(\mathbf{A} = \mathbf{a}))$ for the CDFs. To achieve this, the distribution $\mathbb{P}^U$ must put more probability mass on the right-hand side of the unit interval $[0,1]$ as compared to $\mathbb{P}^{U \mid \mathbf{a}}$ (see Fig.~\ref{fig:intuition}, left).
\begin{wrapfigure}{r}{0.6\textwidth}
\vspace{-0.5cm}
\begin{center}
\includegraphics[width=0.6\textwidth]{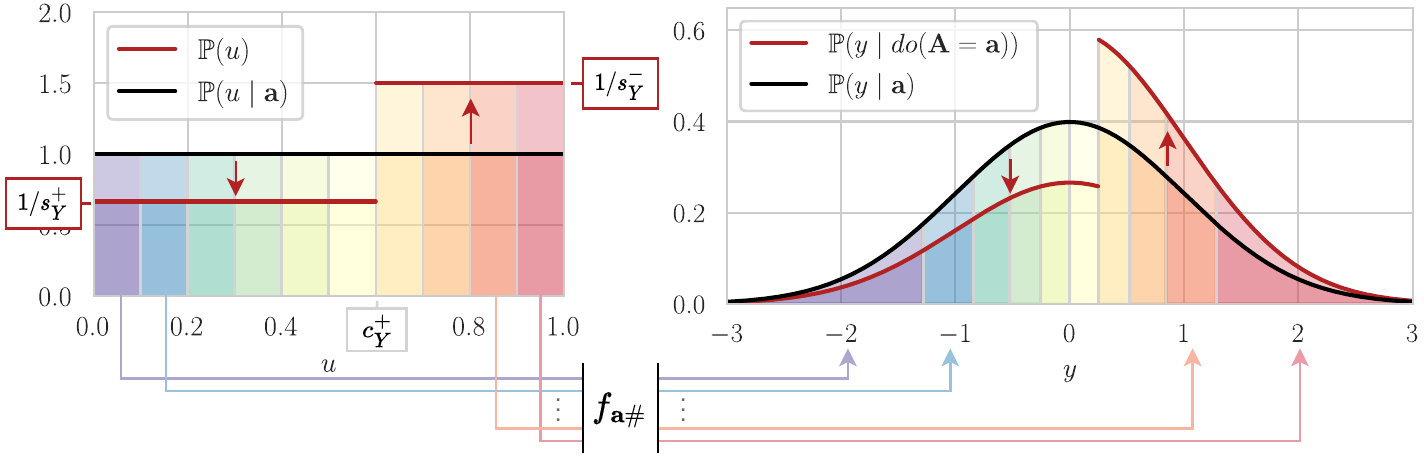}
\end{center}
\vspace{-0.3cm}
\caption{Intuition for bounding interventional distributions under our GMSM.}
\vspace{-0.5cm}
\label{fig:intuition}
\end{wrapfigure}
Then, the functional assignment will also push more probability mass to the right of $\mathbb{P}(y \mid do(\mathbf{A} = \mathbf{a}))$ as compared to $\mathbb{P}\left(y \mid \mathbf{a}\right)$ (see Fig.~\ref{fig:intuition}, right). However, the GMSM bounds the distribution shift between $\mathbb{P}^U$ and $\mathbb{P}^{U \mid \mathbf{a}}$ via $1/s_Y^+(\mathbf{a}) \leq \mathbb{P}(u) \leq 1/s_Y^-(\mathbf{a})$ (Eq.~\eqref{eq:gmsm_def} using that $\mathbb{P}(u) = \mathbb{P}(u \mid do(\mathbf{A} = \mathbf{a})$). Intuitively, the maximal possible right shift under the GMSM should occur by choosing $\mathbb{P}(u)$ so that the bounds are attained, i.e., $\mathbb{P}(u) = \mathbbm{1}( u \leq c^+_{Y}) (1/s_{Y}^+) + \mathbbm{1}(u > c^+_{Y}) (1/s_{Y}^-)$ for some $c^+_{Y}$ that can be obtained via the normalization constraint $\int \mathbb{P}(u) \diff u= 1$. The corresponding \say{right-shifted} interventional density $\mathbb{P}_+\left(y \mid do(\mathbf{A} = \mathbf{a})\right)$ is defined via the push-forward (see Fig.~\ref{fig:intuition}).

It turns out that the density $\mathbb{P}_+\left(y \mid do(\mathbf{A} = \mathbf{a})\right)$ is the \emph{maximally right-shifted} interventional distribution that can be obtained under any SCM that is compatible with the GMSM. Furthermore, the above arguments can be generalized beyond the toy example from Fig.~\ref{fig:intuition}.

\begin{theorem}\label{thrm:shift}
Let $\mathcal{S}$ be a GMSM with bounds $s_W^- = s_W^-(\mathbf{a}, \mathbf{x})$ and $s_W^+ = s_W^+(\mathbf{a}, \mathbf{x})$ for $W \in \{M_1, \dots, M_\ell, Y\}$. We define $c^+_W = \frac{(1 - s_W^-) s_W^+}{s_W^+ - s_W^-}$. If $W \in \R$ is continuous, we define the probability density function
\begin{equation}\label{eq:shifted_continuous}
\mathbb{P}_+(w \mid \mathbf{x}, \mathbf{m}_W, \mathbf{a}) = \left\{ \begin{array}{lll}
(1/s_{W}^+) \, \mathbb{P}(w \mid \mathbf{x}, \mathbf{m}_W, \mathbf{a}), & \textrm{if } F(w) \leq c^+_{W}, \\
(1/s_{W}^-) \, \mathbb{P}(w \mid \mathbf{x}, \mathbf{m}_W, \mathbf{a}), &  \textrm{if } F(w) > c^+_{W},\\
\end{array}\right.
\end{equation}
where $F(\cdot)$ is the CDF corresponding to $\mathbb{P}(\cdot \mid \mathbf{x}, \mathbf{m}_W, \mathbf{a})$. If $W \in \N$ is discrete, we define the probability mass function
\begin{equation}\label{eq:shifted_discrete}
\mathbb{P}_+(w \mid \mathbf{x}, \mathbf{m}_W, \mathbf{a}) = \left\{ \begin{array}{lll}
(1/s_{W}^+) \, \mathbb{P}(w \mid \mathbf{x}, \mathbf{m}_W, \mathbf{a}) ,& \textrm{if } F(w) < c^+_{W}, \\
(1/s_{W}^-) \, \mathbb{P}(w \mid \mathbf{x}, \mathbf{m}_W, \mathbf{a}), &  \textrm{if } F(w - 1) > c^+_{W} ,\\
(1/s_{W}^+) \left(c^+_{W} - F(w - 1) \right) + (1/s_{W}^-) \left(F(w) - c^+_{W} \right), \hspace{-0.2cm} &  \textrm{else.}\\
\end{array}\right.
\end{equation}
Let $F_+(\cdot)$ denote the conditional CDF corresponding to $\mathbb{P}_+(\cdot \mid \mathbf{x}, \mathbf{m}_W, \mathbf{a})$ and let $F_-(\cdot)$ be the conditional CDF of $\mathbb{P}_-(\cdot \mid \mathbf{x}, \mathbf{m}_W, \mathbf{a})$, which is defined by swapping signs (in $c_{W}$ and $s_{W}$). For any SCM $\mathcal{M}$, denote the CDF corresponding to $\mathbb{P}_\mathcal{M}(\cdot \mid \mathbf{x}, \mathbf{m}_W, do(\mathbf{A} = \mathbf{a}))$ as $F_\mathcal{M}(\cdot)$. Then, for all $w$, 
\begin{equation}\label{eq:thrm_main}
F_+(w) \leq \inf_{\mathcal{M} \in \mathcal{C}(\mathcal{S})} F_\mathcal{M}(w) \quad \textrm{and} \quad F_-(w) \geq \sup_{\mathcal{M} \in \mathcal{C}(\mathcal{S})} F_\mathcal{M}(w).
\end{equation}
Assume now that $\mathbb{P}(\mathbf{u}_W \mid \mathbf{x}, do(\mathbf{A} = \mathbf{a})) = \mathbb{P}(\mathbf{u}_W \mid \mathbf{x})$. Then, the bounds are sharp (i.e., equality holds in Eq.~\eqref{eq:thrm_main}) whenever $\mathbf{A}$ is discrete and it holds that $1/s_W^+ \geq \mathbb{P}(\mathbf{a}\mid \mathbf{x})$, or $\mathbf{A}$ is continuous.
\end{theorem}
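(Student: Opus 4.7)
The plan is to reduce the SCM-level optimization in Eq.~\eqref{eq:thrm_main} to an optimization over a one-dimensional reweighting of the observational density of $W$, solve the reduced problem by a quantile-thresholding (rearrangement / Neyman--Pearson) argument, and then handle sharpness by exhibiting an explicit compatible SCM.

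First, for any $\mathcal{M} \in \mathcal{C}(\mathcal{S})$, I would rewrite
\[
  \mathbb{P}_\mathcal{M}(w \mid \mathbf{x}, \mathbf{m}_W, do(\mathbf{a})) = \int \mathbb{P}(w \mid \mathbf{x}, \mathbf{m}_W, \mathbf{a}, \mathbf{u}_W)\,\mathbb{P}(\mathbf{u}_W \mid \mathbf{x}, do(\mathbf{a}))\,d\mathbf{u}_W,
\]
and introduce the density ratio $h(\mathbf{u}_W) = \mathbb{P}(\mathbf{u}_W \mid \mathbf{x}, do(\mathbf{a}))/\mathbb{P}(\mathbf{u}_W \mid \mathbf{x}, \mathbf{a})$, which by Eq.~\eqref{eq:gmsm_def} takes values in $[1/s_W^+, 1/s_W^-]$ and integrates to $1$ against $\mathbb{P}(\cdot \mid \mathbf{x}, \mathbf{a})$. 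Conditioning $h(\mathbf{U}_W)$ on $W$ yields $\tilde h_\mathcal{M}(w) := \mathbb{E}[h(\mathbf{U}_W) \mid w, \mathbf{x}, \mathbf{m}_W, \mathbf{a}]$, so that $\mathbb{P}_\mathcal{M}(w \mid \mathbf{x}, \mathbf{m}_W, do(\mathbf{a})) = \tilde h_\mathcal{M}(w)\,\mathbb{P}(w \mid \mathbf{x}, \mathbf{m}_W, \mathbf{a})$, with $\tilde h_\mathcal{M}$ inheriting the bounds $[1/s_W^+, 1/s_W^-]$ and unit mean against $\mathbb{P}(\cdot \mid \mathbf{x}, \mathbf{m}_W, \mathbf{a})$.

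It now suffices to minimize $F_\mathcal{M}(w) = \int_{-\infty}^w \tilde h(w')\,\mathbb{P}(w' \mid \mathbf{x}, \mathbf{m}_W, \mathbf{a})\,dw'$ over all such $\tilde h$. I would take the bang-bang profile $\tilde h^\ast$ equal to $1/s_W^+$ on $\{F \leq c_W^+\}$ and $1/s_W^-$ on $\{F > c_W^+\}$; a direct calculation confirms that the stated $c_W^+$ is precisely the split that normalizes $\tilde h^\ast$ to one. Optimality at an arbitrary $w$ follows from a two-case argument. If $F(w) \leq c_W^+$, the integrand $(\tilde h - \tilde h^\ast)\mathbb{P}$ is pointwise nonnegative on $(-\infty, w]$ since $\tilde h \geq 1/s_W^+ = \tilde h^\ast$ there. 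If $F(w) > c_W^+$, I would use $\int (\tilde h - \tilde h^\ast)\mathbb{P} = 0$ to rewrite $\int_{-\infty}^w (\tilde h - \tilde h^\ast)\mathbb{P} = -\int_w^\infty (\tilde h - \tilde h^\ast)\mathbb{P}$, which is nonnegative since $\tilde h \leq 1/s_W^- = \tilde h^\ast$ on $(w, \infty)$. The density $\tilde h^\ast \mathbb{P}$ is then precisely $\mathbb{P}_+$ from Eq.~\eqref{eq:shifted_continuous}, and the bound on $F_-$ follows by the same argument with the roles of $s_W^+, s_W^-$ swapped. For discrete $W$ the threshold may fall inside an atom; assigning the appropriate convex combination of $1/s_W^+$ and $1/s_W^-$ on that atom reproduces the middle case of Eq.~\eqref{eq:shifted_discrete}.

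For sharpness under $\mathbb{P}(\mathbf{u}_W \mid \mathbf{x}, do(\mathbf{a})) = \mathbb{P}(\mathbf{u}_W \mid \mathbf{x})$, I would realize $\tilde h^\ast$ by an SCM with a binary latent $U_W = \mathbf{1}[F(W) > c_W^+]$, setting $\mathbb{P}(U_W = 0 \mid \mathbf{x}) = c_W^+/s_W^+$ and $\mathbb{P}(U_W = 1 \mid \mathbf{x}) = (1-c_W^+)/s_W^-$; the defining equation of $c_W^+$ makes this a valid probability measure, and the induced $h$ equals the bang-bang profile on its two atoms. The only remaining consistency condition is the latent-marginal identity $\mathbb{P}(\mathbf{u}_W \mid \mathbf{x}) = \sum_{\mathbf{a}'} \mathbb{P}(\mathbf{u}_W \mid \mathbf{x}, \mathbf{a}')\mathbb{P}(\mathbf{a}' \mid \mathbf{x})$, which forces $\mathbb{P}(\mathbf{u}_W \mid \mathbf{x}, \mathbf{a})\mathbb{P}(\mathbf{a} \mid \mathbf{x}) \leq \mathbb{P}(\mathbf{u}_W \mid \mathbf{x})$; plugging in rearranges to exactly the condition $1/s_W^+ \geq \mathbb{P}(\mathbf{a} \mid \mathbf{x})$ in the discrete-$\mathbf{A}$ case and is automatic for continuous $\mathbf{A}$, where individual treatments have measure zero. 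The hardest step will be the uniform optimality above: showing that a single global quantile threshold $c_W^+$ minimizes $F_\mathcal{M}(w)$ at every $w$ simultaneously, rather than yielding a $w$-dependent family of optima. This universal-threshold property is the technical crux and is what delivers the bound in closed form; the discrete case contributes only the additional bookkeeping needed when the split lies inside an atom.
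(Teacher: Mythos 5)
Your proposal is correct, but the validity half takes a genuinely different route from the paper. For the inequality $F_+(w) \leq \inf_{\mathcal{M}} F_\mathcal{M}(w)$, you reduce the problem to a one-dimensional reweighting $\tilde h(w) \in [1/s_W^+, 1/s_W^-]$ with unit mean against $\mathbb{P}(\cdot \mid \mathbf{x}, \mathbf{m}_W, \mathbf{a})$ (obtained by conditioning the latent density ratio on $W$) and verify directly, via the two-case mass-conservation argument, that the bang-bang profile minimizes the CDF at every $w$ simultaneously. This is the Neyman--Pearson/rearrangement style that the paper explicitly contrasts itself with: the paper instead argues by contradiction entirely in the latent space, assuming $F_+(w) > F_\mathcal{M}(w)$, locating sets where the interventional density over- or under-shoots $\mathbb{P}_+$, and pulling them back through the functional assignment to violate the GMSM ratio bound pointwise. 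Your forward argument is arguably more self-contained and makes the ``universal threshold'' property transparent -- note that the step you flag at the end as the remaining technical crux is in fact already discharged by your own two-case computation, since it holds for an arbitrary admissible $\tilde h$ and arbitrary $w$; the paper's version buys a construction-free contradiction but leans on the existence and nonemptiness of the sets $\mathcal{W}_1, \mathcal{W}_2$ and their preimages. For sharpness the two proofs are essentially the same: the paper realizes the extremal reweighting with a uniform latent on $[0,1]^d$ and an inverse-CDF assignment so that the step-function interventional latent density pushes forward to $\mathbb{P}_+$, whereas you use the binary coarsening $U_W = \mathbbm{1}[F(W) > c_W^+]$ with marginal $(c_W^+/s_W^+,\, (1-c_W^+)/s_W^-)$ plus independent exogenous noise (permitted by the compatibility definition); both derive $1/s_W^+ \geq \mathbb{P}(\mathbf{a} \mid \mathbf{x})$ from the requirement that the latent marginal decompose as a mixture over treatment values, and both dispose of continuous $\mathbf{A}$ by noting the condition degenerates.
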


\begin{proof}
See Appendix~\ref{app:proofs}.
\end{proof}

The result in Theorem~\ref{thrm:shift} does not depend on the distribution or dimensionality of unobserved confounders, and it does not depend on any specific SCM. As such, our sharp bounds are applicable to a wide class of causal effects, without restricting assumptions on the confounding structure beyond the sensitivity constraint of the GMSM. 

In case $\mathcal{S}$ is a weighted GMSM with sensitivity parameters $\Gamma_W$ for all $W \in \{M_1, \dots, M_\ell, Y\}$, the quantiles $c^+_W$ and $c^-_W$ are of the particular simple form $c^+_W = {\Gamma_W}/(1 + \Gamma_W)$ and $c^-_W = 1/(1 + \Gamma_W)$. Furthermore, the discrete sharpness condition simplifies to $(1 - \Gamma_W^{-1}) q_W(\mathbf{a}, \mathbf{x}) + \Gamma_W^{-1} \geq \mathbb{P}(\mathbf{a}\mid \mathbf{x})$. In particular, the bounds are sharp whenever we choose a weighting function that satisfies $q_W(\mathbf{a}, \mathbf{x}) \geq \mathbb{P}(\mathbf{a}\mid \mathbf{x})$. The assumption $\mathbb{P}(\mathbf{u}_W \mid \mathbf{x}, do(\mathbf{A} = \mathbf{a})) = \mathbb{P}(\mathbf{u}_W \mid \mathbf{x})$ excludes the time-varying case (LMSM). Deriving sharp bounds for the LMSM is an interesting direction for future research.

\subsection{Bounding causal effects}\label{sec:bounds}

We now leverage Theorem~\ref{thrm:shift} to obtain explicit solutions for the partial identification problem from Eq.~\eqref{eq:bounds} with \emph{monotone} $\mathcal{D}$ (see Appendix~\ref{app:proofs}). This includes expectation and distributional effects.

\begin{corollary}[Bounds without mediators]\label{cor:bounds}
If $\ell = 0$ and $\mathcal{D}$ is monotone, we obtain sharp bounds
\begin{equation}
    Q^+(\mathbf{x}, \mathbf{a}, \mathcal{S}) \leq \mathcal{D}\left(\mathbb{P}_+^Y(\cdot \mid \mathbf{x}, \mathbf{a})\right) \quad \textrm{and} \quad Q^-(\mathbf{x}, \mathbf{a}, \mathcal{S}) \geq \mathcal{D}\left(\mathbb{P}_-^Y(\cdot \mid \mathbf{x}, \mathbf{a})\right),
\end{equation}
and sharpness holds under the same conditions as in Theorem~\ref{thrm:shift}.
\end{corollary}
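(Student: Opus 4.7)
The plan is to reduce the corollary to Theorem~\ref{thrm:shift} by specializing to $\ell = 0$ and then translating the pointwise CDF bounds into bounds on $\mathcal{D}$ via monotonicity. First, I would note that when $\ell = 0$ the sum in Eq.~\eqref{eq:query} collapses (empty product equal to one), so $Q(\mathbf{x}, \mathbf{a}, \mathcal{M}) = \mathcal{D}\!\left(\mathbb{P}^Y_\mathcal{M}(\cdot \mid \mathbf{x}, do(\mathbf{A}=\mathbf{a}))\right)$. The only relevant part of the GMSM is the sensitivity constraint on $\mathbf{U}_Y$, and the shifted densities $\mathbb{P}^Y_\pm(\cdot \mid \mathbf{x}, \mathbf{a})$ of Theorem~\ref{thrm:shift} (with $W = Y$ and no $\mathbf{m}_W$) are the right comparison objects.

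Next, I would apply Theorem~\ref{thrm:shift} directly to obtain, for every $\mathcal{M} \in \mathcal{C}(\mathcal{S})$ and every $y$,
\begin{equation*}
F_+(y) \;\leq\; F_\mathcal{M}(y) \;\leq\; F_-(y),
\end{equation*}
where $F_\mathcal{M}$ is the CDF of $\mathbb{P}^Y_\mathcal{M}(\cdot \mid \mathbf{x}, do(\mathbf{A}=\mathbf{a}))$. This is exactly first-order stochastic dominance: $\mathbb{P}^Y_+$ dominates every feasible interventional law, which in turn dominates $\mathbb{P}^Y_-$. The monotonicity hypothesis on $\mathcal{D}$ (in its natural reading: $F_1 \leq F_2$ pointwise implies $\mathcal{D}(F_1) \geq \mathcal{D}(F_2)$, which covers expectations of integrable $Y$ and arbitrary quantiles) then yields $\mathcal{D}(\mathbb{P}^Y_-) \leq \mathcal{D}(\mathbb{P}^Y_\mathcal{M}) \leq \mathcal{D}(\mathbb{P}^Y_+)$ for every compatible $\mathcal{M}$. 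Taking supremum and infimum over $\mathcal{M} \in \mathcal{C}(\mathcal{S})$ delivers the two inequalities of the corollary.

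For sharpness, I would transfer the sharpness part of Theorem~\ref{thrm:shift} to the $\mathcal{D}$-level. The main obstacle is that pointwise-in-$y$ sharpness of the CDF bounds does not automatically give sharpness of a functional depending on the whole distribution. I expect this gap to be closed by observing that the proof of Theorem~\ref{thrm:shift} actually produces a single compatible SCM $\mathcal{M}^\star \in \mathcal{C}(\mathcal{S})$ whose interventional CDF equals $F_+$ everywhere (built from the explicit two-level confounder law $\mathbb{P}(\mathbf{u}_Y) \propto \mathbbm{1}(\cdot \leq c^+_Y)/s^+_Y + \mathbbm{1}(\cdot > c^+_Y)/s^-_Y$ motivated around Fig.~\ref{fig:intuition}); then $\mathcal{D}(\mathbb{P}^Y_{\mathcal{M}^\star}) = \mathcal{D}(\mathbb{P}^Y_+)$ attains the upper bound, and symmetrically for $\mathbb{P}^Y_-$. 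If the theorem only yields an approximating sequence $\mathcal{M}_n$ with $F_{\mathcal{M}_n} \to F_+$, I would supplement with a weak-convergence argument, using continuity of $\mathcal{D}$ at $F_\pm$ (standard for expectations with uniformly integrable $Y$ and for quantiles at continuity points of the limit CDFs).
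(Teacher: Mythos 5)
Your proposal is correct and follows essentially the same route as the paper, whose entire proof is ``follows directly from Theorem~\ref{thrm:shift} for $W=Y$'': you specialize the CDF bounds to $W=Y$, invoke the paper's definition of monotonicity (which is exactly stochastic-dominance monotonicity, $F \geq F'$ implying $\mathcal{D}(\mathbb{P}) \leq \mathcal{D}(\mathbb{P}')$), and transfer sharpness via the explicit compatible SCM constructed in the theorem's proof. Your added caveat about the continuous-treatment case --- where the theorem's sharpness argument only supplies an approximating sequence of discrete treatments, so attaining the bound at the level of $\mathcal{D}$ implicitly needs continuity of $\mathcal{D}$ under the resulting convergence --- is a legitimate detail that the paper glosses over, and your proposed fix is the right one.
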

\begin{proof}
See Appendix~\ref{app:proofs}.
\end{proof}

\begin{wrapfigure}{L}{0.55\textwidth}
\begin{minipage}{0.55\textwidth}
\vspace{-0.4cm}
\begin{algorithm}[H]
\DontPrintSemicolon
\caption{\footnotesize Causal sensitivity analysis with mediators}
\label{alg:bounds}
\scriptsize
\SetKwInOut{Input}{Input}
\SetKwInOut{Output}{Output}
\Input{~Causal query $Q(\mathbf{x}, \Bar{\mathbf{a}}_{\ell + 1}, \mathcal{M})$, GMSM $\mathcal{S}$ with $s_W^+$ and $s_W^-$.}
\Output{~Upper bound $Q^+(\mathbf{x}, \Bar{\mathbf{a}}_{\ell + 1}, \mathcal{S})$}
\tcp{Outcome bound}
$c^+_W \gets \frac{(1 - s_W^-) s_W^+}{s_W^+ - s_W^-}$ for $W \in \{M_1, \dots, M_\ell, Y\}$\;
$Q_{\ell+1}^+(\Bar{\mathbf{m}}_\ell) \gets \mathcal{D}\left(\mathbb{P}_+^Y(\cdot \mid \mathbf{x}, \Bar{\mathbf{m}}_\ell, \mathbf{a}_{\ell+1})\right)$ for $\Bar{\mathbf{m}}_\ell \in supp(\Bar{\mathbf{M}}_\ell)$\;

\tcp{Adjusting for confounding in mediators}
\For{$i \in \{\ell, \dots, 1\}$}{
\For{$\Bar{\mathbf{m}}_{i-1} \in supp(\Bar{\mathbf{M}}_{i-1} )$}{
$\pi \gets$ Permutation map in ascending order of $\left(Q_{i+1}^+(\Bar{\mathbf{m}}_{i-1}, \pi(m_i))\right)_{m_i \in supp(M_i)}$ \;
$\widetilde{F}(m_i) \gets \sum_{m: \pi(m) \leq m_i} \mathbb{P}(M_i = m \mid \mathbf{x}, \Bar{\mathbf{m}}_{i-1}, \mathbf{a}_i)$\;
$\mathbb{P}_{+}(m_i) \gets ~
\left\{ \begin{array}{lll}
(1/s_{M_i}^+) \mathbb{P}(m_i \mid \mathbf{x}, \Bar{\mathbf{m}}_{i-1}, \mathbf{a}_i), \\ \quad \quad \quad \quad  \textrm{if } \widetilde{F}(\pi(m_i)) < c^+_{M_i}, \\
(1/s_{M_i}^-) \mathbb{P}(m_i \mid \mathbf{x}, \Bar{\mathbf{m}}_{i-1}, \mathbf{a}_i), \\ \quad \quad \quad \quad   \textrm{if } \widetilde{F}(\pi(m_i) - 1) > c^+_{M_i},\\
(1/s_{M_i}^+) \left(c^+_{M_i} - \widetilde{F}(\pi(m_i) - 1) \right) \\ + (1/s_{M_i}^-) \left(\widetilde{F}(\pi(m_i)) - c^+_{M_i} \right),\\ \quad \quad \quad \quad \textrm{else. }\\
\end{array}\right. $\;
$Q_i^+(\Bar{\mathbf{m}}_{i-1}) \gets \sum_{m_i} Q_{i+1}^+(\Bar{\mathbf{m}}_{i-1}, m_i) \, \mathbb{P}_{+}(m_i)$\;
}
}
$Q^+(\mathbf{x}, \Bar{\mathbf{a}}_{\ell + 1}, \mathcal{S}) \gets Q_1^+$\;
\end{algorithm}
\vspace{-0.5cm}
\end{minipage}
\end{wrapfigure}
If $\mathcal{D}$ is the expectation functional, $\mathbf{A}$ is binary, and $Y$ is continuous, this coincides with the optimality result from \citeauthor{Dorn.2022} \cite{Dorn.2022}. Hence, Corollary~\ref{cor:bounds} generalizes the result from \citeauthor{Dorn.2022} \cite{Dorn.2022} to distributional effects and arbitrary treatments (e.g., categorical, continuous, or time-varying). In their paper, \citeauthor{Dorn.2022} proved the sharpness of the bounds by using the Neyman-Pearson Lemma. In contrast, we take the more principled approach outlined in Sec.~\ref{sec:shift}, which is applicable to more general settings.

In the following, we consider settings with mediators, i.e., we aim to derive bounds for the causal effect in Eq.~\eqref{eq:query}. The idea is to first obtain outcome bounds $\mathcal{D}\left(\mathbb{P}_+^Y(\cdot \mid \mathbf{x}, {\Bar{\mathbf{m}}_\ell}, \mathbf{a}_{\ell+1})\right)$ conditioned on all possible values ${\Bar{\mathbf{m}}_\ell} \in supp(\Bar{\mathbf{M}}_\ell)$ in the support of the mediators $\Bar{\mathbf{M}}_\ell$. Without unobserved confounding between treatments and mediators, we can obtain the upper bound $Q^+(\mathbf{x}, \Bar{\mathbf{a}}_{\ell + 1}, \mathcal{S})$ from Eq.~\eqref{eq:query} by replacing $\mathcal{D}\left(\mathbb{P}^Y(\cdot \mid \mathbf{x}, \Bar{\mathbf{m}}_\ell, do(\mathbf{A} = \mathbf{a}_{\ell+1})\right)$ with $\mathcal{D}\left(\mathbb{P}_+^Y(\cdot \mid \mathbf{x}, \Bar{\mathbf{m}}_\ell, \mathbf{a}_{\ell+1})\right)$. With unobserved confounding, we need to additionally take the distribution shift in the mediators into account. To maximize the causal effect, the shifted mediator distribution should put more probability mass on values $\Bar{\mathbf{m}}_\ell$ for which $\mathcal{D}\left(\mathbb{P}_+^Y(\cdot \mid \mathbf{x}, {\Bar{\mathbf{m}}_\ell}, \mathbf{a}_{\ell+1})\right)$ is large. Hence, we can apply the maximal right-shift from Theorem~\ref{thrm:shift} to the mediator distributions in Eq.~\eqref{eq:query}, but where the values $\Bar{\mathbf{m}}_\ell$ are permuted to order $\mathcal{D}\left(\mathbb{P}_+^Y(\cdot \mid \mathbf{x}, \Bar{\mathbf{m}}_\ell, \mathbf{a}_{\ell+1})\right)$ in ascending order. We provide the details on our iterative procedure to compute the upper bound $Q^+(\mathbf{x}, \Bar{\mathbf{a}}_{\ell + 1}, \mathcal{S})$ in Algorithm~\ref{alg:bounds}. The lower bound $Q^-(\mathbf{x}, \Bar{\mathbf{a}}_{\ell + 1}, \mathcal{S})$ can be computed analogously by swapping signs in $c_W$ and $s_W$.

\begin{corollary}\label{cor::bounds_mediators}
Under the Assumptions of Theorem~\ref{thrm:shift}, Algorithm~\ref{alg:bounds} returns the sharp bounds from Eq.~\eqref{eq:bounds}.
\end{corollary}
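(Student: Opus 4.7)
The plan is to prove the corollary by backward induction on the mediator layer $i = \ell+1, \ell, \dots, 1$, exploiting two structural properties of the setting: (i) the causal effect in Eq.~\eqref{eq:query} factorizes into a product of conditional distributions, one per $M_i$ and one for $Y$; and (ii) under the GMSM (together with the assumption that mediator--outcome edges are unconfounded) the sensitivity constraint decomposes into independent constraints on each of these factors, one per $W \in \{M_1, \dots, M_\ell, Y\}$. The induction invariant is that $Q_i^+(\bar{\mathbf{m}}_{i-1})$ equals the sharp supremum, over $\mathcal{C}(\mathcal{S})$, of the tail sum starting at level $i$, conditionally on $\mathbf{x}$ and $\bar{\mathbf{m}}_{i-1}$.

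\textbf{Base case.} For $i = \ell+1$ the invariant reduces to Corollary~\ref{cor:bounds} applied conditionally: the value $Q_{\ell+1}^+(\bar{\mathbf{m}}_\ell) = \mathcal{D}(\mathbb{P}_+^Y(\cdot \mid \mathbf{x}, \bar{\mathbf{m}}_\ell, \mathbf{a}_{\ell+1}))$ is a sharp upper bound on $\mathcal{D}(\mathbb{P}^Y(\cdot \mid \mathbf{x}, \bar{\mathbf{m}}_\ell, do(\mathbf{A}=\mathbf{a}_{\ell+1})))$, since $\mathcal{D}$ is monotone and Theorem~\ref{thrm:shift} applies verbatim with $\bar{\mathbf{m}}_\ell$ absorbed into the conditioning context of $Y$.

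\textbf{Inductive step.} Assume the invariant at level $i+1$. Fix $\bar{\mathbf{m}}_{i-1}$; the level-$i$ contribution is
$$\sum_{m_i} Q_{i+1}^+(\bar{\mathbf{m}}_{i-1}, m_i) \, \mathbb{P}(m_i \mid \mathbf{x}, \bar{\mathbf{m}}_{i-1}, do(\mathbf{A}=\mathbf{a}_i)),$$
i.e., the expectation of $m_i \mapsto Q_{i+1}^+(\bar{\mathbf{m}}_{i-1}, m_i)$ under the interventional law of $M_i$. Maximizing the expectation of a fixed function under a GMSM-constrained distribution shift is, after reindexing the support by the permutation $\pi$ that sorts $Q_{i+1}^+(\bar{\mathbf{m}}_{i-1}, \cdot)$ ascendingly, exactly the monotone case of Theorem~\ref{thrm:shift}: the optimal shifted mass function concentrates extra weight on the largest $Q_{i+1}^+$ values through the discrete formula in Eq.~\eqref{eq:shifted_discrete}, with the CDF replaced by the permuted CDF $\widetilde{F}$. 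This is precisely the inner-loop update of Algorithm~\ref{alg:bounds}, so $Q_i^+(\bar{\mathbf{m}}_{i-1})$ equals the sharp supremum at level $i$, establishing the invariant.

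\textbf{Sharpness across layers.} The remaining, and main, obstacle is to show that the per-layer worst-case mechanisms can be realized simultaneously in one SCM in $\mathcal{C}(\mathcal{S})$; otherwise the backward recursion would only yield an upper bound on the supremum rather than the supremum itself. I would handle this exactly as in the proof of Theorem~\ref{thrm:shift}: because the GMSM constrains each $\mathbf{U}_W$ separately, and (by Definition~\ref{def:sensitivity}) the model forbids unobserved confounding between mediators and outcome, the worst-case conditional mechanism from the proof of Theorem~\ref{thrm:shift} can be chosen independently for each $W \in \{M_1, \dots, M_\ell, Y\}$ and pasted into a single SCM preserving $\mathbb{P}^\mathbf{V}$. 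Combined with the sharpness conditions inherited from Theorem~\ref{thrm:shift} (notably $1/s_W^+ \geq \mathbb{P}(\mathbf{a}\mid\mathbf{x})$ in the discrete-treatment case, and the assumption $\mathbb{P}(\mathbf{u}_W \mid \mathbf{x}, do(\mathbf{A}=\mathbf{a})) = \mathbb{P}(\mathbf{u}_W \mid \mathbf{x})$), this yields the claimed sharpness of Algorithm~\ref{alg:bounds}. The lower bound follows by the symmetric argument with $c^-_W$ and $s^-_W$.
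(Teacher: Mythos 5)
Your proposal is correct and follows essentially the same route as the paper's proof: factorize the query into independent per-layer terms, apply Theorem~\ref{thrm:shift} to the outcome layer, and handle each mediator layer via the discrete shifted distribution after permuting the support so that $Q_{i+1}^+$ is ascending. Your final paragraph on gluing the per-layer worst-case mechanisms into a single SCM makes explicit a step the paper compresses into the remark that ``the terms in the product do not depend on each other,'' which is a welcome clarification rather than a deviation.
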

\begin{proof}
See Appendix~\ref{app:proofs}.
\end{proof}



\subsection{Empirical bounds via importance sampling}\label{sec:estimation}

In practice, we only have access to an empirical distribution $\mathbb{P}_n^\mathbf{V}$ of sample size $n$ instead of the full observational distribution $\mathbb{P}^\mathbf{V}$. We thus obtain estimates $\hat{\mathbb{P}}(w \mid \mathbf{x}, \mathbf{m}_W, \mathbf{a})$ of the conditional density or probability mass functions $\mathbb{P}(w \mid \mathbf{x}, \mathbf{m}_W, \mathbf{a})$  for all $W \in \{M_1, \dots, M_\ell, Y\}$. If $W$ is discrete, this reduces to a standard (multi-class) classification problem, and the estimated class probabilities can be plugged into Algorithm~\ref{alg:bounds}. If $W=Y$ is continuous, we can use arbitrary conditional density estimators to obtain $\hat{\mathbb{P}}(y \mid \mathbf{x}, \Bar{\mathbf{m}}_\ell, \mathbf{a})$. We propose an importance sampling approach to estimate $\mathcal{D}\left(\mathbb{P}_+^Y(\cdot \mid \mathbf{x}, \Bar{\mathbf{m}}_\ell, \mathbf{a})\right)$ assuming that we can sample from from our estimated density $\hat{\mathbb{P}}^Y(\cdot \mid \mathbf{x}, \Bar{\mathbf{m}}_\ell, \mathbf{a})$ (see Appendix~\ref{app:est} for details and derivations). If $\mathcal{D}$ is the expectation functional, our estimator is 
\begin{equation}\label{eq:estimation}
   \resizebox{.8\hsize}{!}{$\widehat{\mathcal{D}\left(\mathbb{P}_+^Y(\cdot \mid \mathbf{x}, \Bar{\mathbf{m}}_\ell, \mathbf{a})\right)} =  \frac{1}{\hat{s}_Y^+ k}\sum_{i=1}^{\lfloor k c^+_Y \rfloor} y_i + \frac{1}{\hat{s}_Y^-k}\sum_{i=\lfloor k c^+_Y \rfloor + 1}^{k} y_i, \quad \text{where} \quad (y_i)_{i=1}^k \sim \hat{\mathbb{P}}^Y(\cdot \mid \mathbf{x}, \Bar{\mathbf{m}}_\ell, \mathbf{a})$}.
\end{equation}
We also provide importance sampling estimators for distributional effects in Appendix~\ref{app:est}. We can also obtain empirical confidence intervals by using the same bootstrap procedure as described in \cite{Jesson.2022}.

\textbf{Implementation:} We use feed-forward neural networks with softmax activation function to estimate discrete probability mass functions. For densities, we use conditional normalizing flows \cite{Winkler.2019} (neural spline flows \cite{Durkan.2019}), which are universal density approximators and allow for sampling. We perform training using the Adam optimizer \cite{Kingma.2015}. We also perform extensive hyperparameter tuning in our experiments. Implementation and hyperparameter tuning details are in Appendix~\ref{app:hyper}.


\section{Experiments}\label{sec:exp}

\textbf{Baselines:} Most existing methods focus on sensitivity analysis for binary treatments under the MSM \cite{Dorn.2022, Dorn.2022b, Jesson.2021, Kallus.2019, Oprescu.2023}. In this setting, our bounds coincide with existing optimality results \cite{Dorn.2022}. For mediation analysis, we are to the best of our knowledge the first to propose bounds for MSM-based sensitivity analysis. This is why we refrain from benchmarking against baselines in our experiments and only show the validity of our bounds.

\textbf{Synthetic data:}
We perform extensive experiments using synthetic data from various causal inference settings to evaluate the validity of our bounds. Synthetic data are commonly used to evaluate causal inference methods as they ensure that the causal ground truth is available \cite{Berrevoets.2023, Curth.2021, Xu.2021}. Here, we generate six different synthetic datasets with $n=50,000$ samples in the following manner: We first construct three SCMs with binary treatments for the causal graphs in Fig.~\ref{fig:causal_graphs}, that is, (i) no mediators, (ii) a single mediator $M$, and (iii) two mediators $M_1$ and $M_2$. We then construct three more SCMs for settings (i)--(iii) with continuous treatments. Details are in Appendix~\ref{app:sim}.

\begin{wrapfigure}{l}{0.7\textwidth}
\vspace{-0.3cm}
 \centering
\begin{subfigure}{0.23\textwidth}
  \centering
  \includegraphics[width=1\linewidth]{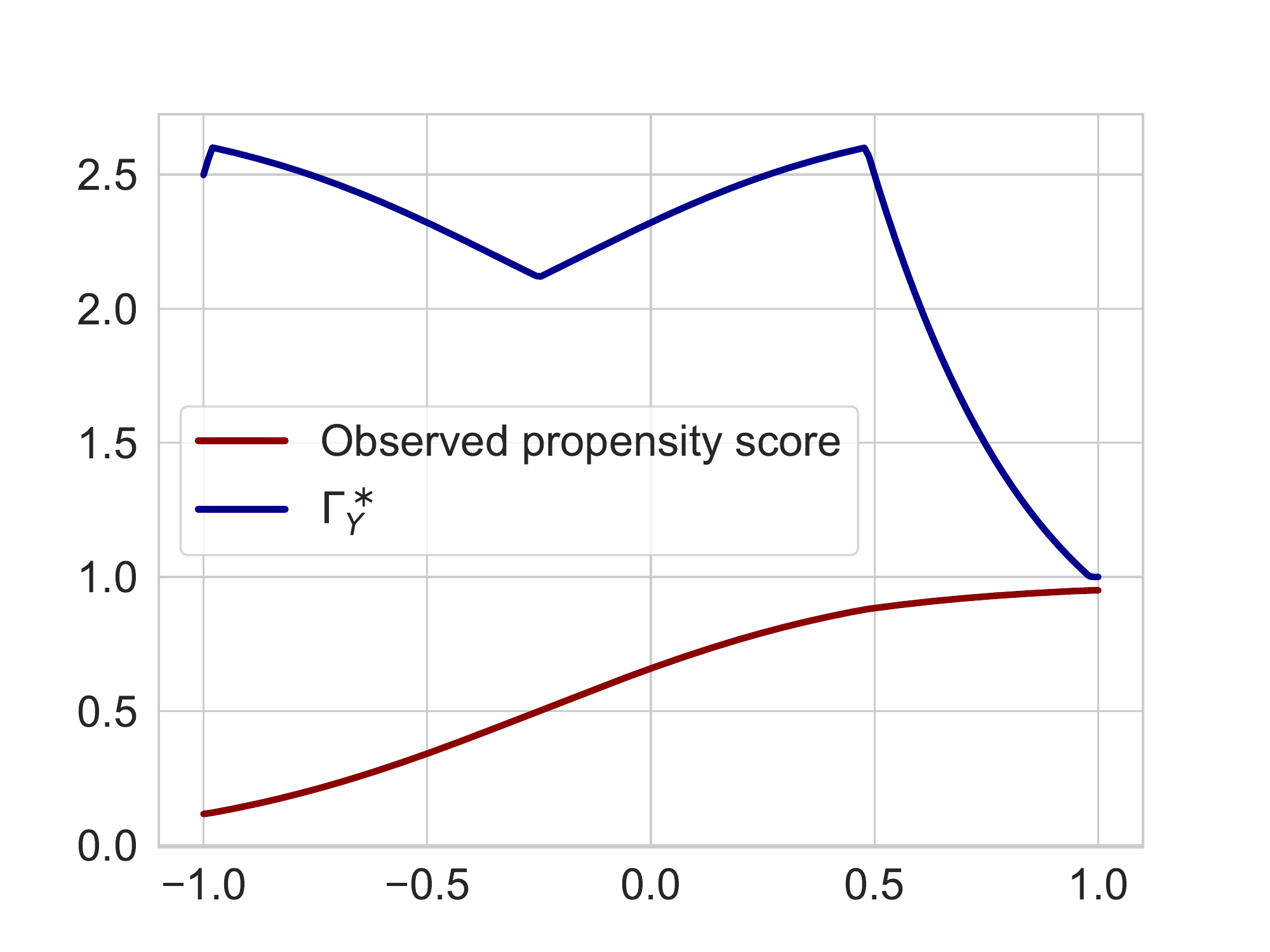}
\end{subfigure}%
\begin{subfigure}{0.23\textwidth}
  \centering
  \includegraphics[width=1\linewidth]{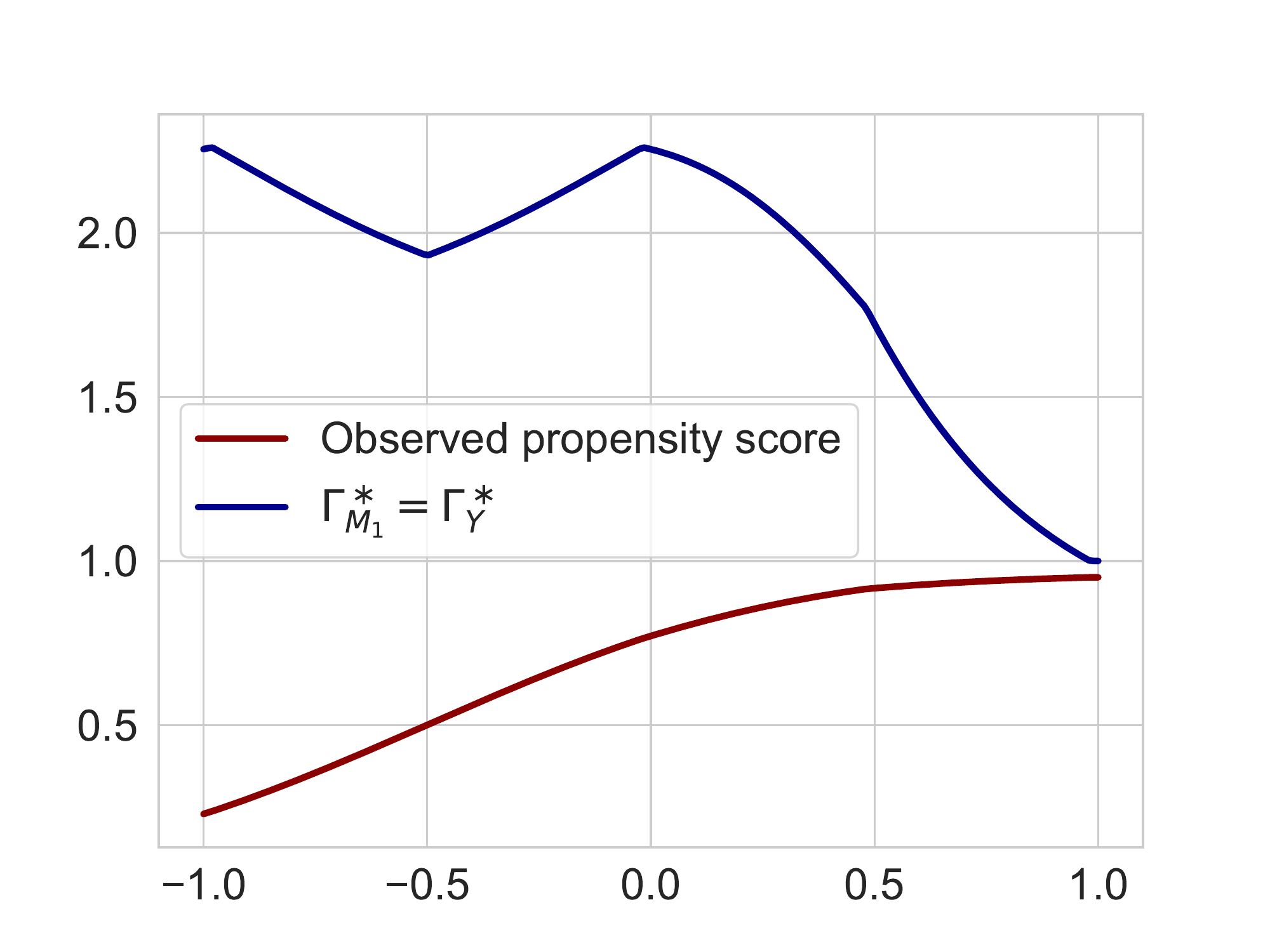}
\end{subfigure}
\begin{subfigure}{0.23\textwidth}
  \centering
  \includegraphics[width=1\linewidth]{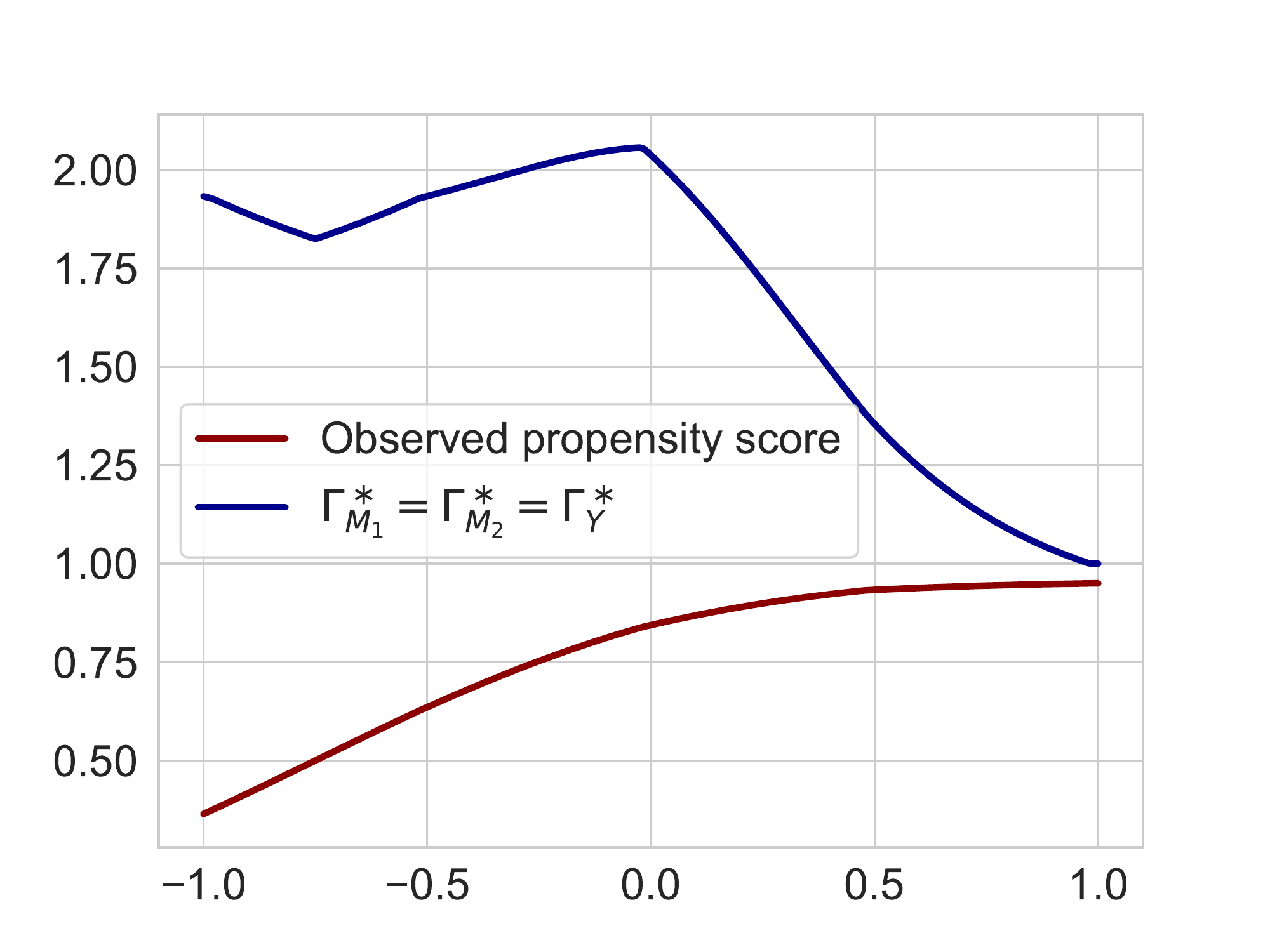}
\end{subfigure}
\begin{subfigure}{0.23\textwidth}
  \centering
  \includegraphics[width=1\linewidth]{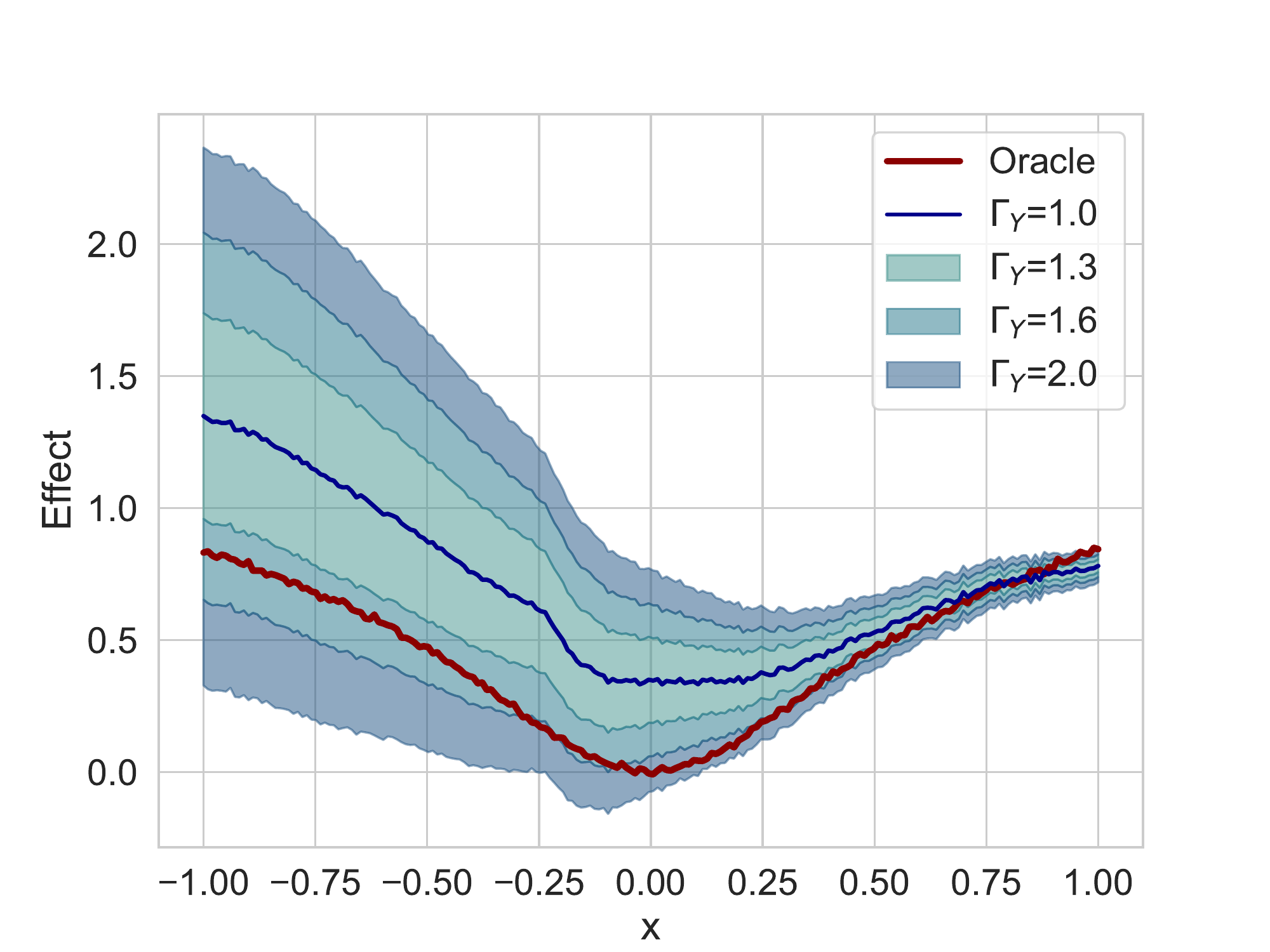}
\end{subfigure}%
\begin{subfigure}{0.23\textwidth}
  \centering
  \includegraphics[width=1\linewidth]{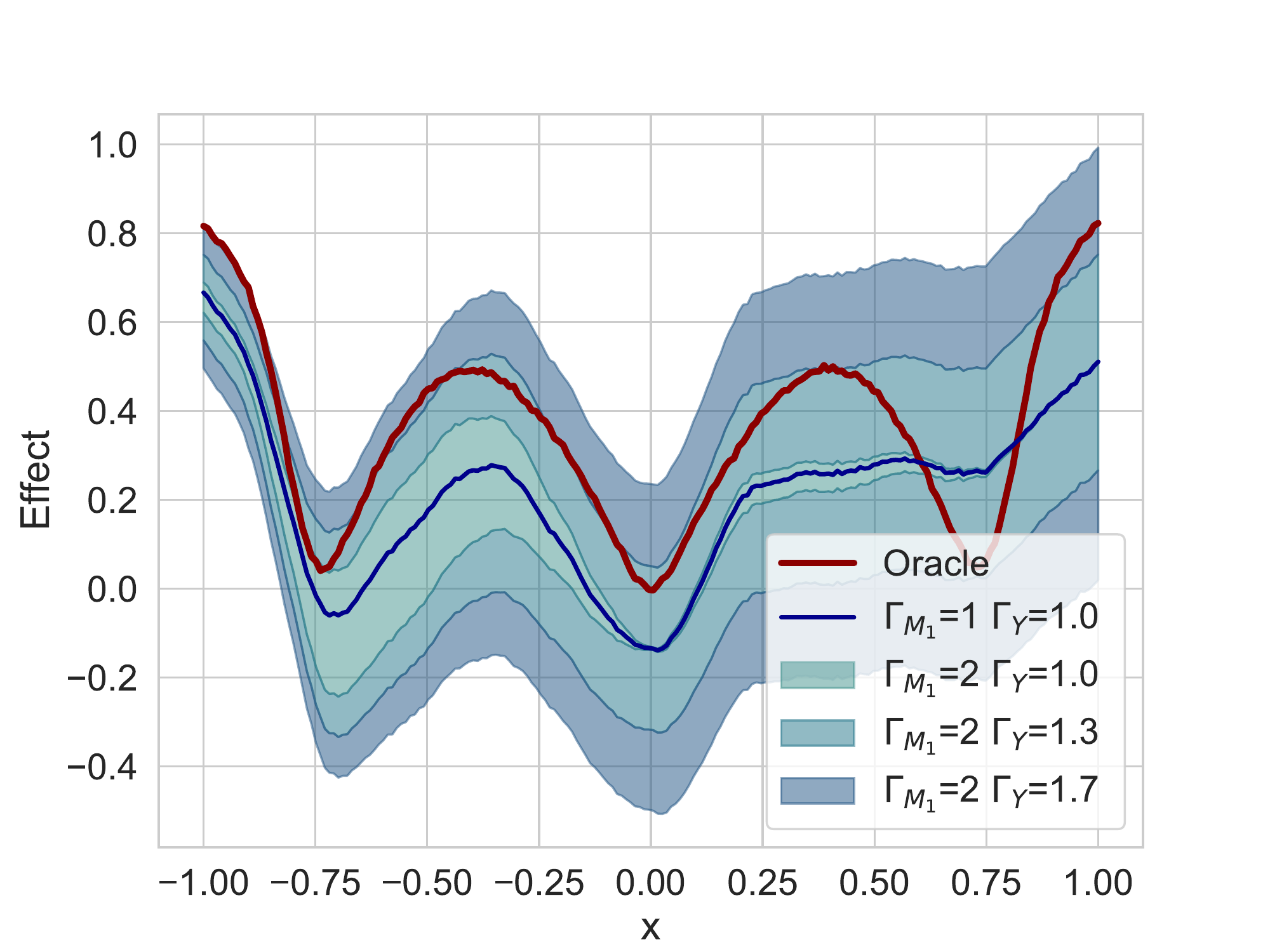}
\end{subfigure}
\begin{subfigure}{0.23\textwidth}
  \centering
  \includegraphics[width=1\linewidth]{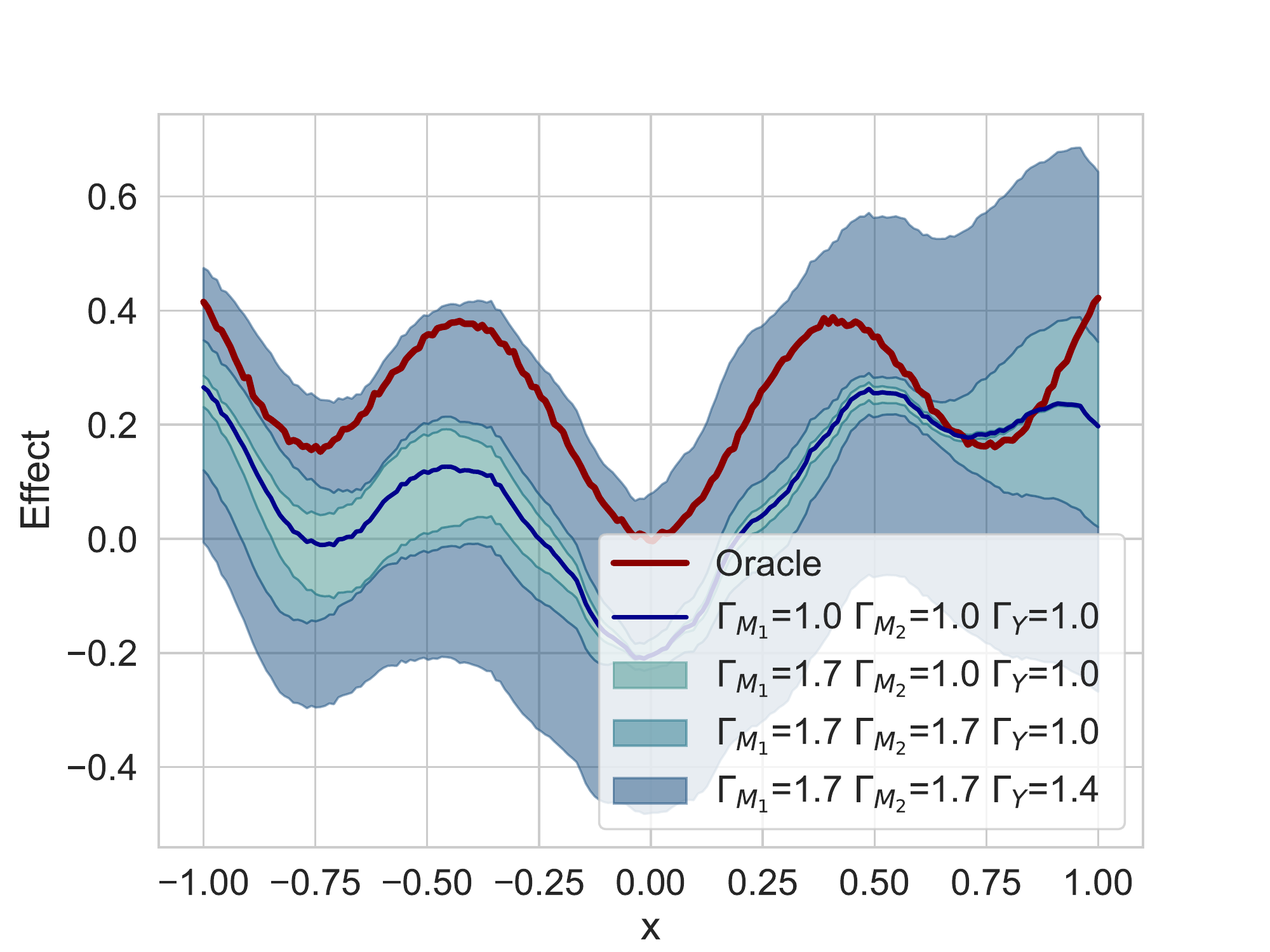}
\end{subfigure}
\caption{Results for the binary treatment setting. Settings (i)--(iii) are ordered from left to right. The top row shows the oracle sensitivity parameter $\Gamma^\ast_W$ (depending on $x$), and the bottom row shows the bounds.}
\label{fig:sim_binary}
\vspace{-0.3cm}
\end{wrapfigure}
To demonstrate the validity of our bounds, we aim to show that they contain the oracle causal effect whenever the sensitivity constraints are satisfied. We evaluate two versions of our GMSM: the MSM for binary treatments and the CMSM for continuous treatments (see Sec.~\ref{sec:gmsm}).
Using oracle knowledge from the SCMs, we estimate the respective density ratio from Eq.~\eqref{eq:gmsm_def} and obtain the oracle sensitivity parameter $\Gamma^\ast_W$ (details are in Appendix~\ref{app:sim}). We thus demonstrate that our bounds contain the oracle causal effect whenever $\Gamma_W \geq \Gamma^\ast_W$, i.e., whenever we choose sensitivity parameters $\Gamma_W$ at least as large as the oracle $\Gamma^\ast_W$.

The results for binary treatments are shown in Fig.~\ref{fig:sim_binary} and for continuous treatments in Fig.~\ref{fig:sim_continuous}. For binary treatments, we evaluate the causal effect $Q(\mathbf{x}, \Bar{\mathbf{a}}_{\ell + 1}, \mathcal{M})$ for over $x \in [-1,1]$ with three (arbitrary) treatment combinations corresponding to the three different settings: 
\begin{wrapfigure}{l}{0.7\textwidth}
\vspace{-0.3cm}
  \centering
\begin{subfigure}{0.23\textwidth}
  \centering
  \includegraphics[width=1\linewidth]{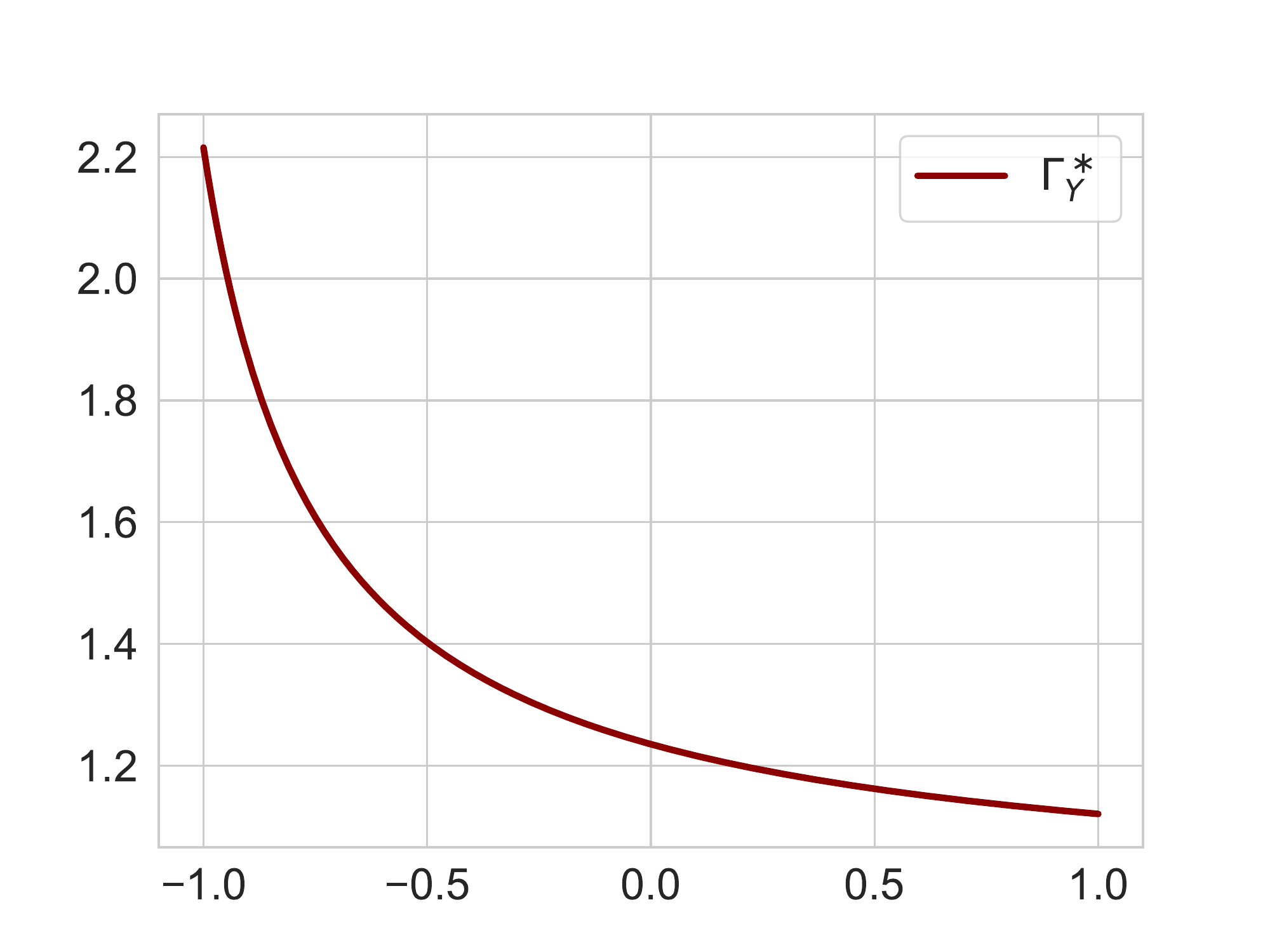}
\end{subfigure}%
\begin{subfigure}{0.23\textwidth}
  \centering
  \includegraphics[width=1\linewidth]{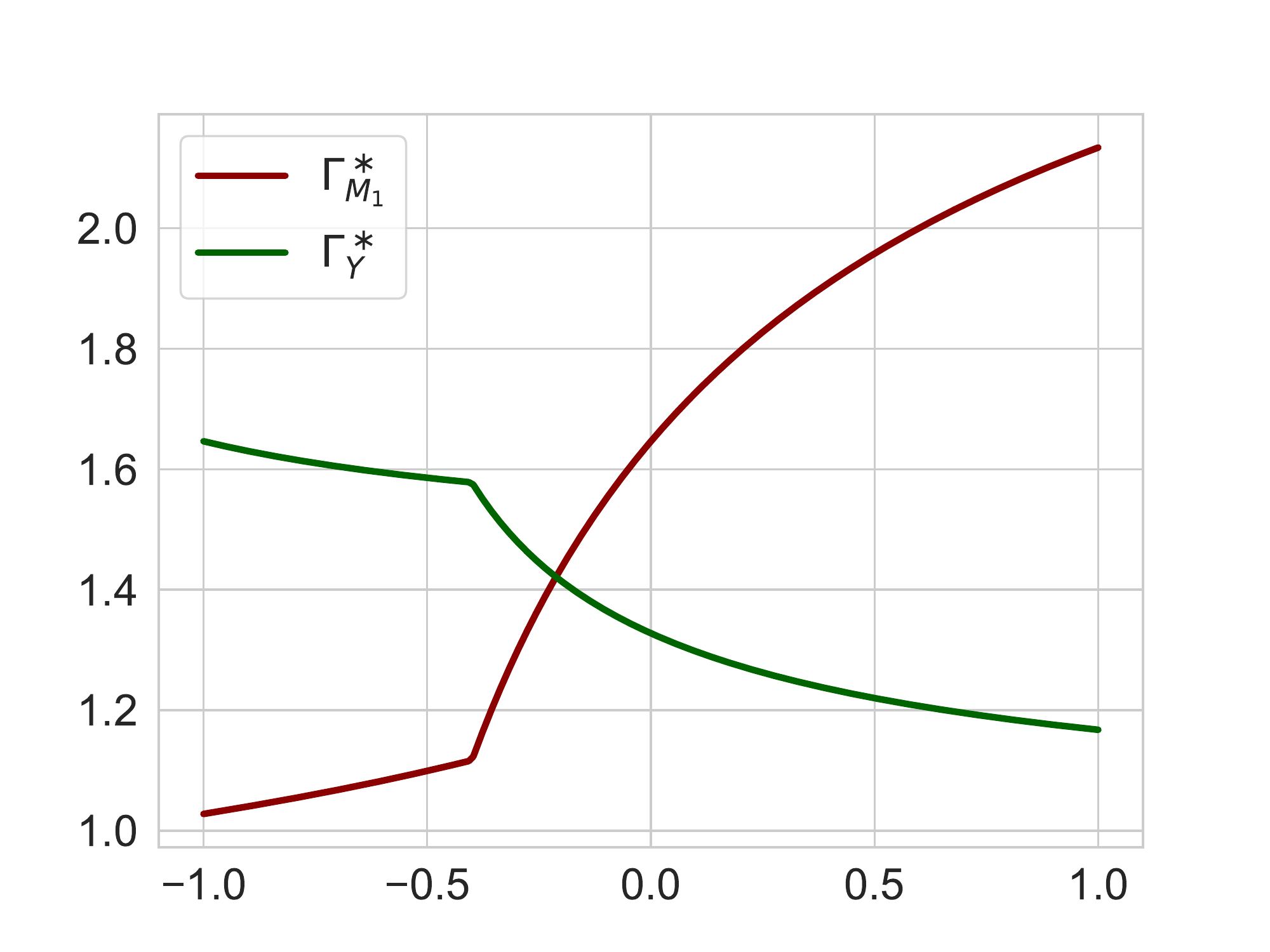}
\end{subfigure}
\begin{subfigure}{0.23\textwidth}
  \centering
  \includegraphics[width=1\linewidth]{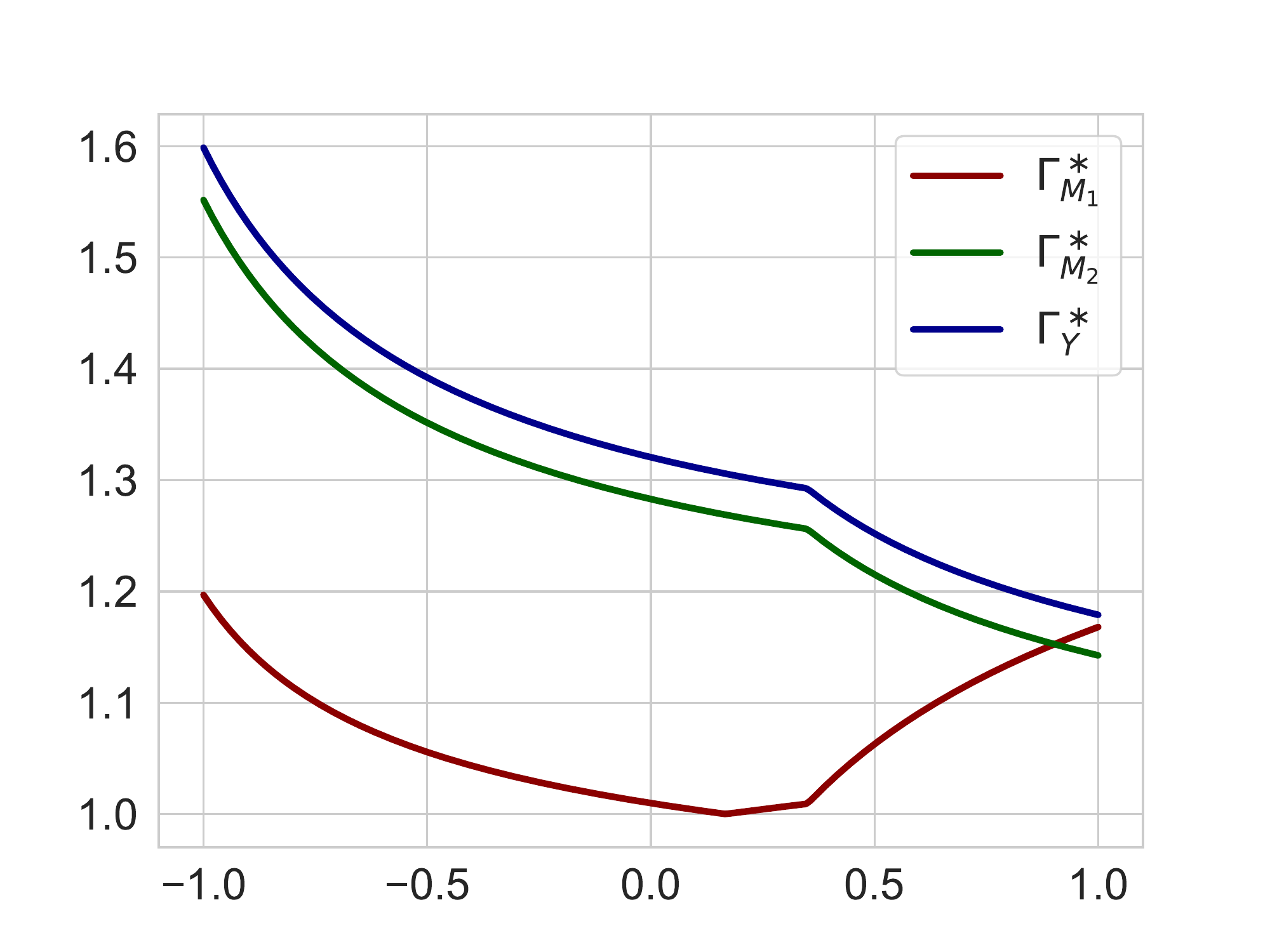}
\end{subfigure}
\begin{subfigure}{0.23\textwidth}
  \centering
  \includegraphics[width=1\linewidth]{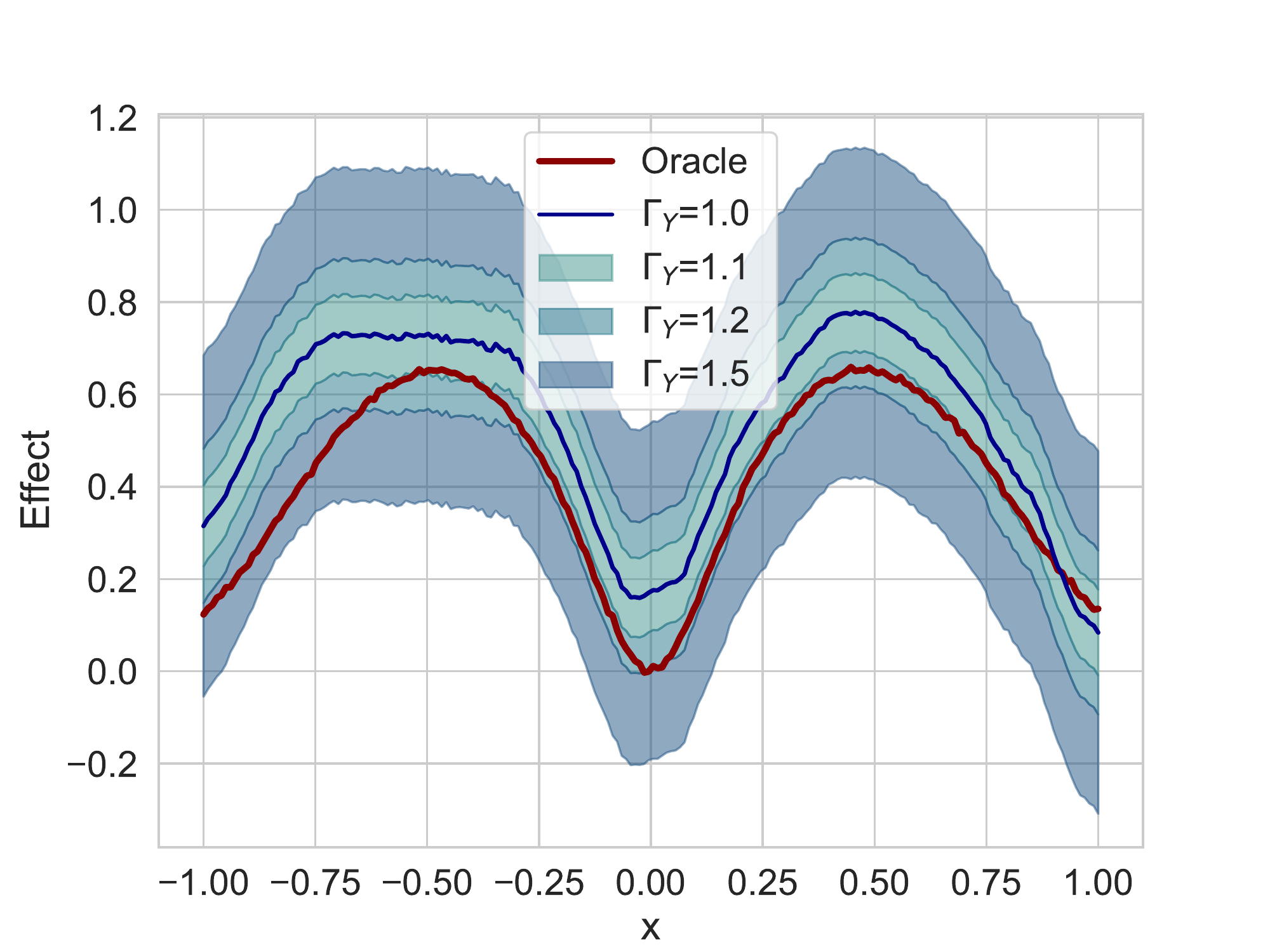}
\end{subfigure}%
\begin{subfigure}{0.23\textwidth}
  \centering
  \includegraphics[width=1\linewidth]{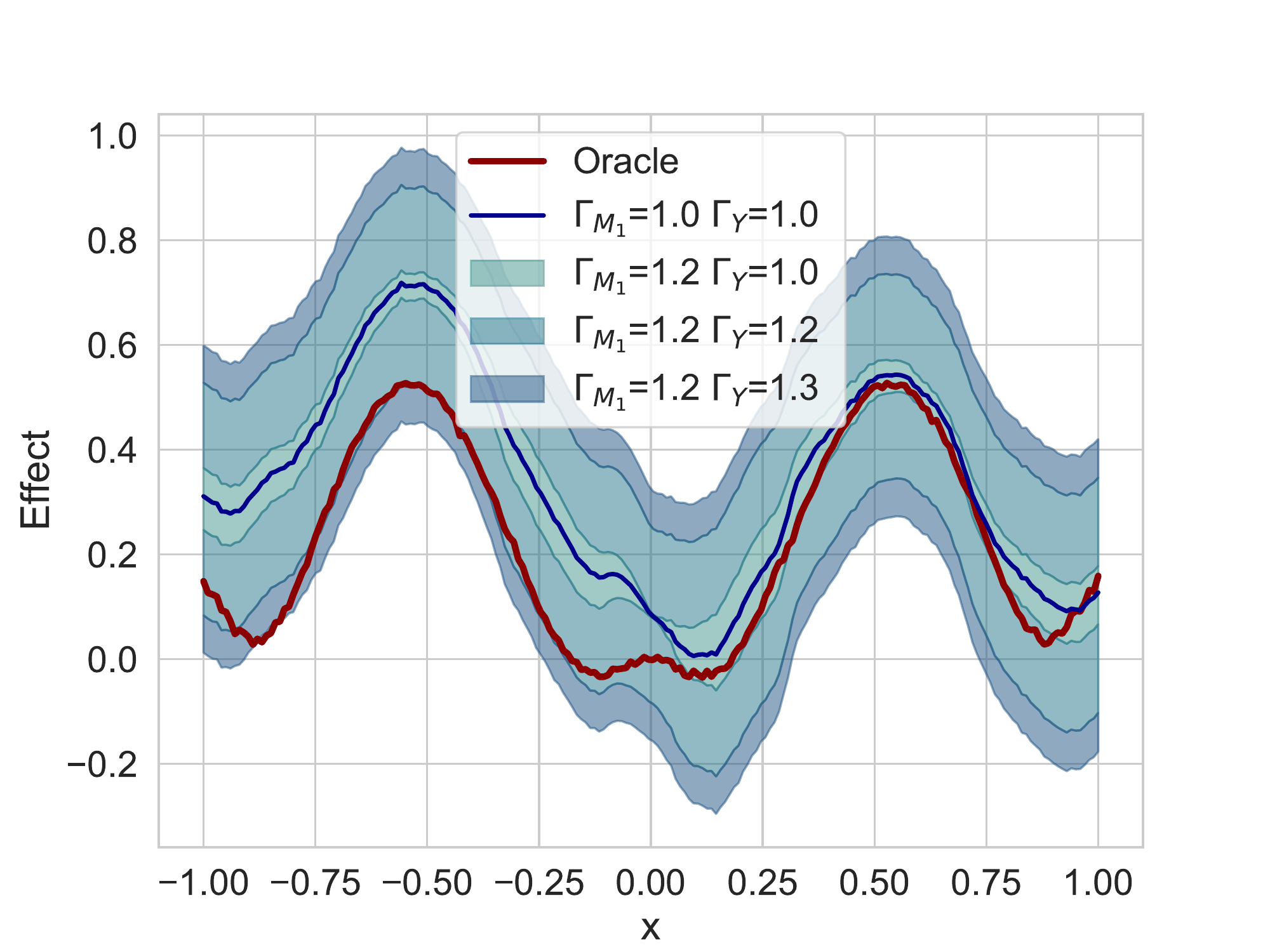}
\end{subfigure}
\begin{subfigure}{0.23\textwidth}
  \centering
  \includegraphics[width=1\linewidth]{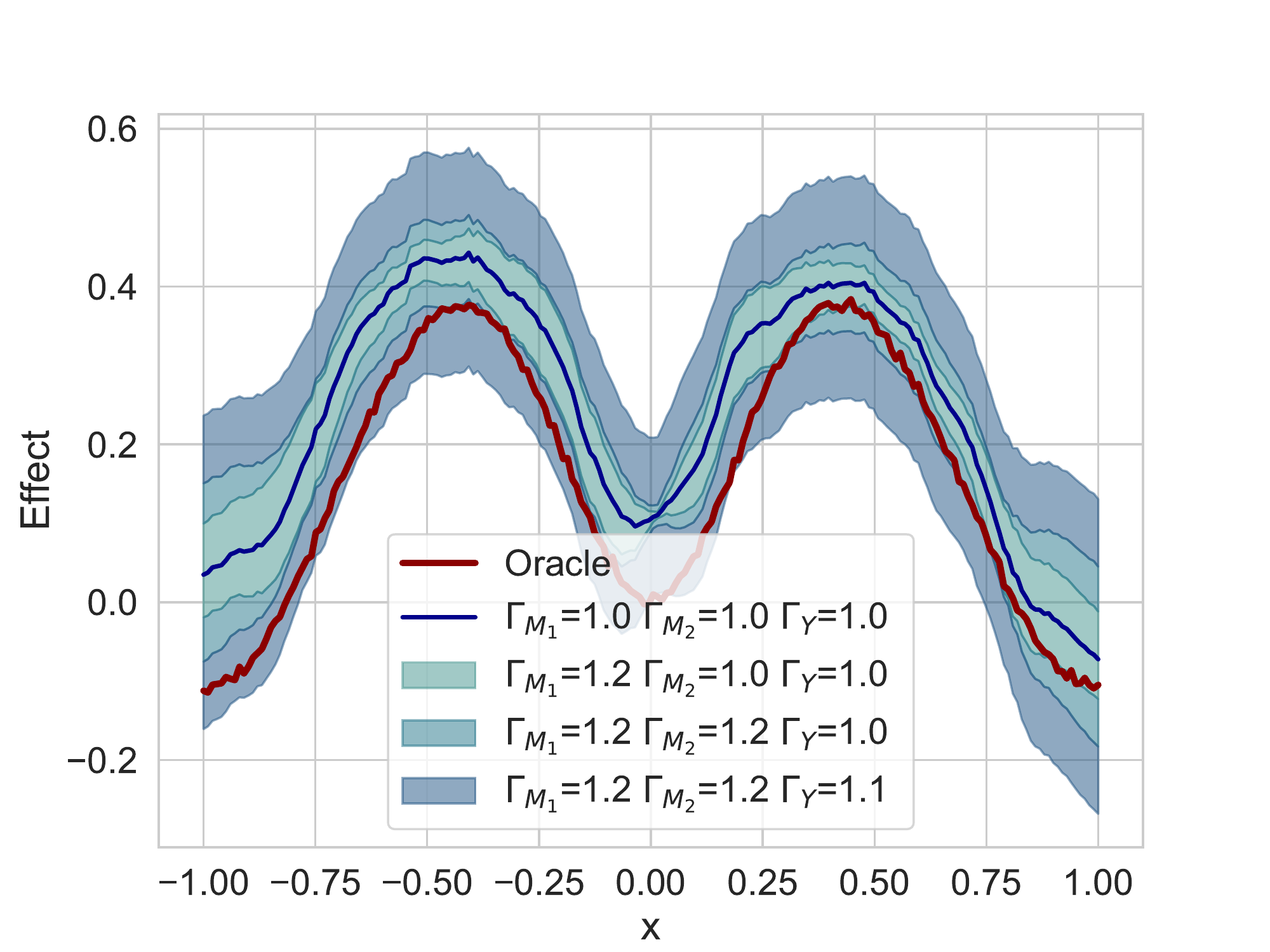}
\end{subfigure}
\caption{Results for continuous treatment setting. Settings (i)--(iii) are ordered from left to right. The top row shows the oracle sensitivity parameter $\Gamma^\ast_W$ (depending on $x$), and the bottom row shows the bounds.}
\vspace{-0.4cm}
\label{fig:sim_continuous}
\end{wrapfigure}
(i)~$\Bar{\mathbf{a}}_1 = 1$, (ii)~$\Bar{\mathbf{a}}_2 = (1, 0)$, and (iii)~$\Bar{\mathbf{a}}_3 = (1, 0, 0)$.  For continuous treatments, we also use three (arbitrary) treatment combinations: (i)~$\Bar{\mathbf{a}}_1 = 0.6$, (ii)~$\Bar{\mathbf{a}}_2 = (0.9, 0.5)$, and (iii)~$\Bar{\mathbf{a}}_3 = (0.2, 0.4, 0.5)$. Results for additional treatment combinations are in Appendix~\ref{app:exp}. Evidently, the oracle causal effect is contained within our bounds. Hence, the results confirm the validity of our bounds.

We also compare the bounds from \citeauthor{Jesson.2022} \cite{Jesson.2022} with our sharp bounds under the CMSM and examine whether we can improve on these by using a different weight function (Def.~\ref{def:gmsm_weighted_def}). 
For this purpose, we modify the SCM from setting (i) in the continuous treatment setting so that there is no unobserved confounding for individuals with $x>0$ (i.e., $\Gamma^\ast_Y = 0$). Details are in Appendix~\ref{app:sim}. Table~\ref{tab:weighted} reports the bounds from \citeauthor{Jesson.2022}, our bounds under the CMSM, and our bounds under a weighted CMSM with weight function $q_Y(x) = \mathbbm{1}(x > 0)$. The two main findings are: (i)~Using the weight function, we can leverage prior knowledge about the confounding structure to obtain even tighter bounds. (ii)~Our bounds are much faster to compute (using $5,000$ samples). The latter is because we derived closed-form solutions for our bounds, while the method proposed by \citeauthor{Jesson.2022} is an approximation that uses grid search. 

\begin{table}[ht]
\centering
\vspace{-0.2cm}
\caption{Experiment with weighted CMSM}
\label{tab:weighted}
\resizebox{0.7\textwidth}{!}{
\begin{tabular}{lccccccc}
\noalign{\smallskip} \toprule \noalign{\smallskip}
\backslashbox{Bounds}{Metric/ $\Gamma_Y$}& \multicolumn{3}{c}{Interval length} & \multicolumn{3}{c}{Coverage} & \multicolumn{1}{c}{Time (sec.)}\\
\cmidrule(lr){2-4} \cmidrule(lr){5-7} \cmidrule(lr){8-8} & $1.2$ & $1.5$ & $2$ & $1.2$ & $1.5$ & $2$ &\\
\midrule
\citeauthor{Jesson.2022}\cite{Jesson.2022} (CMSM) &$0.33 \pm 0.00$ & $0.74 \pm 0.01$& $1.27 \pm 0.01$& $1$& $1$&$1$ &$137.48 \pm 2.02$\\
Our bounds (CMSM) &$0.33 \pm 0.00$ & $0.74 \pm 0.01$& $1.25 \pm 0.01$ &$1$ & $1$&$1$ & $0.39 \pm 0.02$\\
Our bounds (weighted CMSM) &$\boldsymbol{0.17 \pm 0.01}$ & $\boldsymbol{0.37 \pm 0.02}$& $\boldsymbol{0.63 \pm 0.03}$& $0.6$ &$1$ & $1$ &$0.42 \pm 0.05$\\
\bottomrule
\multicolumn{8}{l}{Reported: Average $\pm$ standard deviation over $5$ random seeds (best in bold).}
\end{tabular}}
\end{table}

\textbf{Real-world data:} We demonstrate our bounds using an example with real-world data. We consider a setting from the COVID-19 pandemic where mobility (captured through telephone movement) was monitored to obtain a leading predictor of case growth \cite{Persson.2021}. Details regarding the data  (publicly available) and our analysis are in Appendix~\ref{app:real}. Here, mobility is the mediator, case growth is the outcome, and stay-home order (ban of gatherings with more than 5 people) is the treatment. We are interested in the natural directed effect (NDE, see Example~\ref{example:mediation}) of a stay-home order on the case growth. We suspect that unobserved confounders between treatment and mediator might exist (e.g., adherence, etc. of the population). Hence we perform a causal sensitivity analysis where we vary $\Gamma_M$ and plot the bounds in Fig.~\ref{fig:real}. We observe that the estimated effect under unconfoundedness ($\Gamma_M = 1$) is negative, i.e., the stay-home order decreases case growth. For $\Gamma_M > 1$, we obtain bounds around this estimand. For $\Gamma_M < 6$ the bounds are negative, which means that under moderate unobserved confounding, it seems likely that the NDE is nonzero, in line with prior evidence \cite{Persson.2021}. 

\section{Discussion}\label{sec:discussion}

\textbf{Assumptions:} As common in the causal inference literature, our results rely on assumptions on the data-generating process that must be justified by domain knowledge. Our main assumption is that we exclude confounders between mediators and the outcome in our analysis (see Def.~\ref{def:sensitivity}). The reason for this assumption is that it allows us to interpret the causal query from Eq.~\eqref{eq:query} as a path-specific effect. Path-specific effects are defined as so-called nested counterfactuals that lie in the third layer of Pearl’s causal hierarchy \cite{Pearl.2009} Using the assumption from Theorem 1, they can be reduced to the query from Eq.~\eqref{eq:query}, which lies in layer 2 of Pearl’s hierarchy, i.e., only depends on interventions \cite{Correa.2021}. Our sensitivity analysis then bridges the gap from layer 2 to layer 1 (observational data). Approaches for relaxing this assumption, e.g., by combining our results with a sensitivity analysis from layer 3 to layer 2, are left for future work.

\begin{wrapfigure}{l}{0.4\textwidth}
\vspace{-0.85cm}
 \begin{center}
\includegraphics[width=0.4\textwidth]{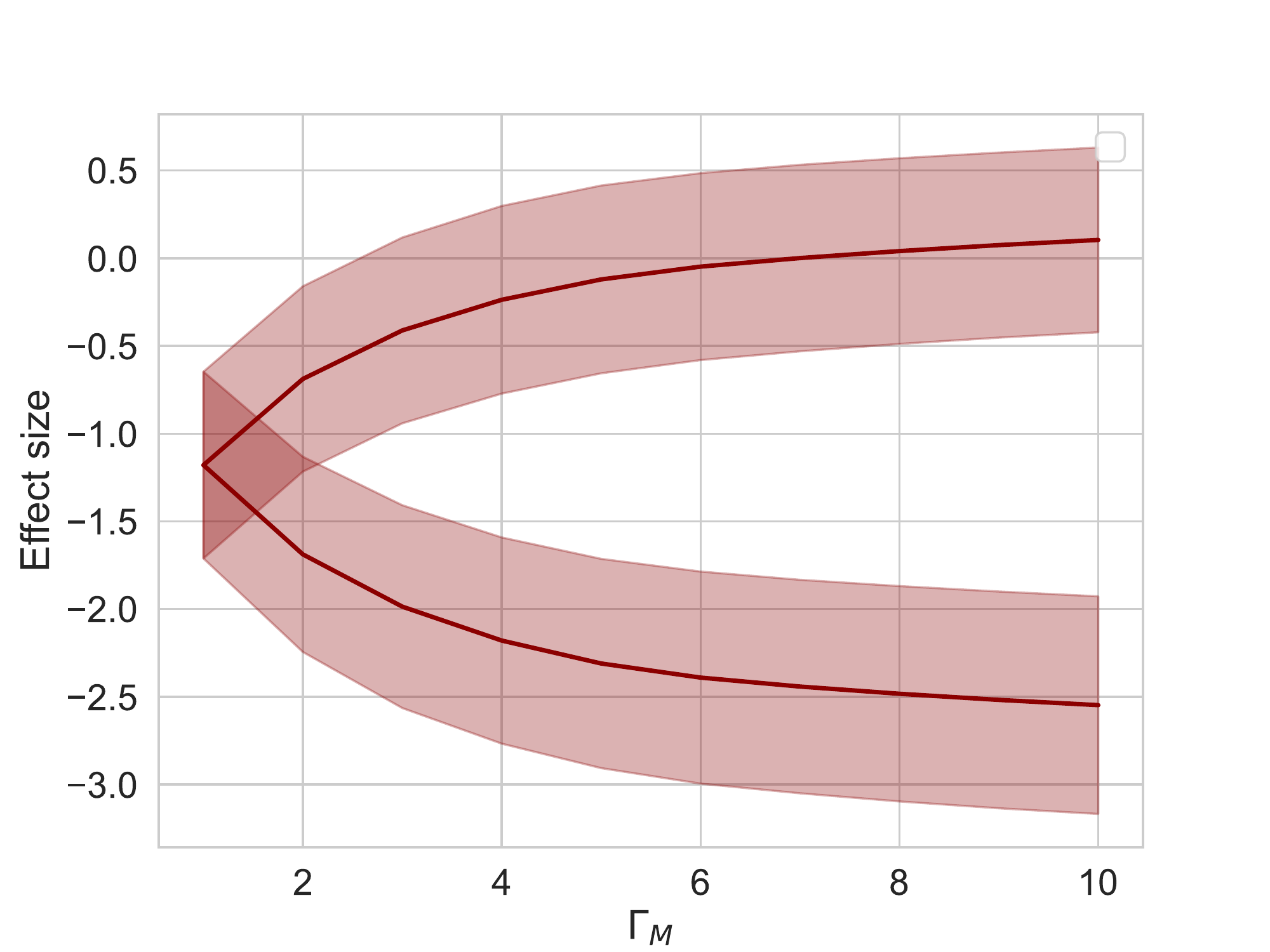}
\end{center}
\vspace{-0.3cm}
\caption{Estimated upper/lower bound for the NDE on real-world data. Reported: mean and standard deviation over $10$ runs.}
\vspace{-0.3cm}
\label{fig:real}
\end{wrapfigure}

\textbf{Sensitivity parameter and weighting function:} Both the sensitivity parameter $\Gamma_W$ and the weighting function $q(\mathbf{a}, \mathbf{x})$ incorporate domain knowledge about the unobserved confounding and need to be chosen by the practitioner accordingly.

The sensitivity parameter $\Gamma_W$ controls the strength of unobserved confounding. In practice, one typically chooses $\Gamma$ by domain knowledge or data-driven heuristics \cite{Kallus.2019, Hatt.2022b}. One approach is to obtain the smallest $\Gamma$ so that the corresponding partially identified interval includes $0$. Then, $\Gamma$ can be interpreted as a level of ``causal uncertainty'', quantifying the smallest violation of unconfoundedness that would explain away the causal effect \cite{Jesson.2021, Jin.2023}.

The weight function $q(\mathbf{a}, \mathbf{x})$ offers additional opportunities to incorporate domain knowledge about the confounding structure to obtain tighter bounds. Consider an observational study on the effect of smoking on cancer risk, confounded by certain unobserved genes.  For example, it may be known that genes that act as unobserved confounders do not affect the cancer risk for a certain population with certain covariates $\mathbf{x}$, which allows us to set $q(\mathbf{a}, \mathbf{x}) = 1$. Note that we can always set $q(\mathbf{a}, \mathbf{x}) = \mathbb{P}(\mathbf{a} \mid \mathbf{x})$ (discrete treatments) or $q(\mathbf{a}, \mathbf{x}) = 0$ (continuous treatments) if no domain knowledge is available, leading to established sensitivity models from the literature. For example, practitioners may use our bounds for the MSM in a mediation analysis setting without ever explicitly using the GMSM formulation via a weighting function. 

\textbf{Other sensitivity models:} In this paper, we provide bounds for MSM-type sensitivity models. Recently, other types of sensitivity models have been proposed in the literature, which may provide less conservative bounds in situations where the data-generating process does not follow an MSM. Examples include $f$-sensitivity models \cite{Jin.2022}, $L_2$-sensitivity models \cite{Zhang.2022b}, curvature sensitivity models \cite{Melnychuk.2023b} and the $\delta$-MSM \cite{Marmarelis.2023}. Extending our results to these sensitivity models may be another possible direction for future work.

\textbf{Efficient estimation:} Our main results (Theorem~\ref{thrm:shift}, Corollary~\ref{cor:bounds}, \ref{cor::bounds_mediators}) are \emph{identifiability results}, i.e., hold in the limit of infinite data. We did not provide results on efficient estimation. Therefore, future work may consider extending our approach to incorporate semiparametric efficiency theory \cite{Kennedy.2022}. 

\textbf{Conclusion:}
We proposed a flexible generalization of the MSM and derived sharp bounds for a variety of different causal inference settings and data types (e.g., continuous, categorical). Our work provides practitioners with a unified framework for causal sensitivity analysis. This enables reliable causal inferences from observational data in the presence of unobserved confounders, thus promoting safe decision-making.

\clearpage

\section*{Acknowledgments}
SF acknowledges funding from the Swiss National Science Foundation (SNSF) via Grant 186932.

\printbibliography 

\appendix


\clearpage

\section{Extended related work}\label{app:rw}

\subsection{Partial identification under unobserved confounding}
There are various works for partial identification (i.e., bounding causal effects) under unobserved confounding that do not impose sensitivity models. In the following, we discuss the difference between this literature stream, which we call causal partial identification (CPA), to causal sensitivity analysis (CSA).

The main difference between CSA and CPA is that CSA imposes sensitivity models, that is, assumptions on the strength of unobserved confounding, which is controlled by a sensitivity parameter $\Gamma$. Methods for CPA do not impose sensitivity constraints but instead impose other assumptions. In practice, CSA can be used to test the robustness of causal effect estimates to violations of the unconfoundedness assumption (by varying $\Gamma$), while CPA may be applicable in situations where no domain knowledge about the confounding strength is available.

Approaches for CPA can be roughly described by two categories that describe the type of assumptions that are imposed to derive informative bounds.
\begin{enumerate}
    \item \textbf{Additional variables:} In order to achieve informative bounds without restricting the strength of unobserved confounding, some works impose assumptions on the data-generating process by postulating the existence of additional variables. One example is instrumental variables (IVs), i.e., variables, which only have a direct effect on treatment variables but not on outcomes. Under certain assumptions, IVs render bounds for causal effects informative without assumptions on the underlying confounding structure \cite{Balazadeh.2022, Gunsilius.2020, Kilbertus.2020, Padh.2023}. Other examples include leaky mediation \cite{Balazadeh.2022, Padh.2023}, differential effects \cite{Chen.2023}, noisy proxy settings \cite{Guo.2022}.
    \item  \textbf{Discrete SCMs:} Another stream of literature derives informative bounds in \emph{discrete} SCMs. Examples include \cite{Duarte.2023, Xia.2021, Zaffalon.2020, Zhang.2022c}.
\end{enumerate} 

\textbf{Comparison with our bounds}: None of the methods above is applicable in the causal inference settings we consider (e.g., continuous treatments or outcomes, no IVs available, etc.). In contrast, a well-known CPA result in our setting is the so-called no-assumptions bound \cite{Manski.1990} that leverages discrete treatments. In the following, we consider the standard setting for CATE estimation without mediators ($\ell = 0$) and a binary treatment $A \in \{0, 1\}$. We show that we obtain the no-assumptions bound with our sensitivity analysis for the query $\mathbb{E}[Y \mid x, do(A=a)]$ for $\Gamma \to \infty$, i.e. when lifting the sensitivity constraint. Let $[p_1, p_2]$ denote the support of the conditional density $\mathbb{P}(y \mid x, a)$ with c.d.f. $F$. Using the MSM we obtain 
\begin{align} Q^+(\mathbf{x}, a, \mathcal{S}) &= \int_{p_1}^{F^{-1}(\frac{\Gamma}{1 + \Gamma})} y \left((1 - \Gamma^{-1}) \mathbb{P}(a \mid \mathbf{x}) + \Gamma^{-1} \right) \mathbb{P}(y \mid \mathbf{x}, a) \diff y \\
& \quad + \int_{F^{-1}(\frac{\Gamma}{1 + \Gamma})}^{p_2} y \left((1 - \Gamma) \mathbb{P}(a \mid \mathbf{x}) + \Gamma \right) \mathbb{P}(y \mid \mathbf{x}, a) \diff y \\
& \quad \xrightarrow[\Gamma \to \infty]{} \mathbb{P}(a \mid \mathbf{x}) \, \mathbb{E}[Y \mid \mathbf{x}, a] + (1 - \mathbb{P}(a \mid \mathbf{x})) \, p_2,
\end{align} 
which corresponds to the result in \cite{Manski.1990}. The result for the lower bound $Q^-(\mathbf{x}, a, \mathcal{S})$ follows analogously. 

\subsection{Estimation of causal effects under unconfoundedness}

Under certain additional assumptions, unconfoundedness makes it possible to point-identify causal effects from the observational data, so that the causal inference problem reduces to a purely statistical estimation problem. Various methods for estimating point-identified causal effects under unconfoundedness have been proposed that make use of machine learning and/or (semiparametric) statistical theory. Examples include methods for conditional average treatment effects \cite{Chernozhukov.2018, Curth.2021, Kennedy.2022d, Kunzel.2019, Shalit.2017, Wager.2018, Yoon.2018}, average treatment effects \cite{Frauen.2023, Shi.2019, vanderLaan.2006}, instrumental variables \cite{Bennett.2019, Frauen.2023b, Hartford.2017, Singh.2019, Syrgkanis.2019, Xu.2021}, time-varying data \cite{Bica.2020, Lim.2018, Melnychuk.2022}, mediation analysis \cite{Tchetgen.2012, Farbmacher.2022}, and distributional effects \cite{Chernozhukov.2013, Kennedy.2023, Melnychuk.2023, Muandet.2021}. Note that all the methods above are biased if the unconfoundedness assumption is violated, which outlines the need for our causal sensitivity analysis.

\clearpage

\section{Proofs of the GMSM bounds}\label{app:proofs}

\subsection{Proof of Theorem~\ref{thrm:shift}}
\begin{proof}
We give the proof for continuous $W \in \R$ (see Eq.~\eqref{eq:shifted_continuous} in the main paper). The derivation for discrete $W \in \N$ (see Eq.~\eqref{eq:shifted_discrete} in the main paper) follows with the same arguments and the normalization constraint $\sum_{w} \mathbb{P}^+(w \mid \mathbf{x}, \mathbf{m}_W, \mathbf{a}) = 1$.
We first show the inequality  
\begin{equation}
    F_+(w) \leq \inf_{\mathcal{M} \in \mathcal{C}(\mathcal{S})} F_\mathcal{M}(w)
\end{equation}
and then provide sharpness results by showing
\begin{equation}
    F_+(w) \geq \inf_{\mathcal{M} \in \mathcal{C}(\mathcal{S})} F_\mathcal{M}(w) 
\end{equation}
under the sharpness conditions of Theorem~\ref{thrm:shift}.
The result for $F_-(w)$ follows analogously.

\textbf{Validity of the bounds ($\leq$):}
We provide a proof by contradiction. To do so, we assume that there exists an SCM $\mathcal{M} \in \mathcal{C}(\mathcal{S})$ and $w \in \R$ so that 
\begin{equation}
    F_+(w) >  F_\mathcal{M}(w).
\end{equation}

By the definition of $F_+(w)$, there must exist a set $\mathcal{W}_1 \subseteq \R_{\leq F^{-1}(c^+_{W})}$, so that
\begin{equation}
    \mathbb{P}_{+}(w_1 \mid \mathbf{x}, \mathbf{m}_W, \mathbf{a}) > \mathbb{P}(w_1 \mid \mathbf{x}, \mathbf{m}_W, do(\mathbf{A} = \mathbf{a})) \quad \text{for all} \quad w_1 \in \mathcal{W}_1,
\end{equation}
or a set $\mathcal{W}_2 \subseteq \R_{> F^{-1}(c^+_{W})}$, so that
\begin{equation}
    \mathbb{P}_{+}(w_2 \mid \mathbf{x}, \mathbf{m}_W, \mathbf{a}) < \mathbb{P}(w_2 \mid \mathbf{x}, \mathbf{m}_W, do(\mathbf{A} = \mathbf{a})) \quad \text{for all} \quad w_2 \in \mathcal{W}_2,
\end{equation}
as otherwise $\mathbb{P}(w \mid \mathbf{x}, \mathbf{m}_W, do(\mathbf{A} = \mathbf{a}))$ would not integrate to $1$.
Let $W = f(\mathbf{X}, \mathbf{M}_W, \mathbf{A}, \mathbf{U}_W)$ be the functional assignment of $\mathcal{M}$ and let $\mathcal{U}_1 = f_{\mathbf{x}, \mathbf{m}_W, \mathbf{a}}^{-1}\left(\mathcal{W}_1\right) \subseteq \R^d$ and $\mathcal{U}_2 = f_{\mathbf{x}, \mathbf{m}_W, \mathbf{a}}^{-1}\left(\mathcal{W}_2\right) \subseteq \R^d$ denote the preimages of $\mathcal{W}_1$ and $\mathcal{W}_2$ under $f_{\mathbf{x}, \mathbf{m}_W, \mathbf{a}}$ in the confounding space.

We can again write $\mathbb{P}_+(w \mid \mathbf{x}, \mathbf{m}_W, \mathbf{a})$ as a push forward
\begin{equation}
\mathbb{P}^+(w \mid \mathbf{x}, \mathbf{m}_W, \mathbf{a}) = {f_{\mathbf{x}, \mathbf{m}_W, \mathbf{a}}}_\# \mathbb{P}_+^{\mathbf{U}_W  \mid \mathbf{x}, \mathbf{a}}(w)
\end{equation}
for some density $\mathbb{P}_+(\mathbf{u}_W \mid \mathbf{x},\mathbf{a})$ on the confounding space. By the definition of $\mathbb{P}_+(w \mid \mathbf{x}, \mathbf{m}_W, \mathbf{a})$ and Eq.~\eqref{eq:proof_push_obs}, we obtain
\begin{equation}
    \mathbb{P}_+(\mathbf{u}_1 \mid \mathbf{x},\mathbf{a}) = \frac{1}{s_W^+} \mathbb{P}(\mathbf{u}_1 \mid \mathbf{x},\mathbf{a}) \quad \text{and} \quad\mathbb{P}_+(\mathbf{u}_2 \mid \mathbf{x},\mathbf{a}) = \frac{1}{s_W^-} \mathbb{P}(\mathbf{u}_2 \mid \mathbf{x},\mathbf{a})
\end{equation}
for all $\mathbf{u}_1 \in \mathcal{U}_1$ and $\mathbf{u}_2 \in \mathcal{U}_2$. Due to the definition of $\mathcal{U}_1$ and $\mathcal{U}_2$, it follows that there exist $\mathbf{u}_1 \in \mathcal{U}_1$ and $\mathbf{u}_2 \in \mathcal{U}_2$, so that
\begin{equation}\label{eq:proof_contradiction1}
    \frac{\mathbb{P}(\mathbf{u}_1 \mid \mathbf{x}, \mathbf{a})}{\mathbb{P}(\mathbf{u}_1 \mid \mathbf{x}, do(\mathbf{A} = \mathbf{a}))} > \frac{\mathbb{P}(\mathbf{u}_1 \mid \mathbf{x}, \mathbf{a})}{\mathbb{P}_+(\mathbf{u}_1 \mid \mathbf{x},\mathbf{a})} = s_W^+
\end{equation}
and
\begin{equation}\label{eq:proof_contradiction2}
    \frac{\mathbb{P}(\mathbf{u}_1 \mid \mathbf{x}, \mathbf{a})}{\mathbb{P}(\mathbf{u}_1 \mid \mathbf{x}, do(\mathbf{A} = \mathbf{a}))} < \frac{\mathbb{P}(\mathbf{u}_1 \mid \mathbf{x}, \mathbf{a})}{\mathbb{P}_+(\mathbf{u}_1 \mid \mathbf{x},\mathbf{a})} = s_W^-.
\end{equation}
Both Eq.~\eqref{eq:proof_contradiction1} and Eq.~\eqref{eq:proof_contradiction2} are contradictions to the GMSM constraint Eq.~\eqref{eq:gmsm_def} of the main paper. Hence, $\mathcal{M} \notin \mathcal{C}(\mathcal{S})$.

\textbf{Sharpness of the bounds ($\geq$):} We show that under the sharpness conditions from Theorem~\ref{thrm:shift}, there exists an SCM $\mathcal{M} \in \mathcal{C}(\mathcal{S})$ with induced interventional density $\mathbb{P}_+(w \mid \mathbf{x}, \mathbf{m}_W, \mathbf{a})$ for all $w$. The construction of $\mathcal{M}$ is similar to that of our motivational toy example in Sec.~\ref{sec:shift} of the main paper. We first define an (interventional) probability density for the unobserved confounder $\mathbf{U}_W \in \R^d$ given $\mathbf{X}$ via
\begin{equation}\label{eq:proof_pgivenx}
\mathbb{P}(\mathbf{u}_W \mid \mathbf{x}, do(\mathbf{A} = \mathbf{a})) = \mathbbm{1}(0 \leq u_W^{(1)} \leq c^+_{W}) \, (1/s_{W}^+) + \mathbbm{1}(1 \geq u_W^{(1)} > c^+_{W})  \, (1/s_{W}^-),
\end{equation}
where $u_W^{(1)}$ denotes the first coordinate of $\mathbf{u}_W$. $\mathbb{P}(\mathbf{u}_W \mid \mathbf{x}, \mathbf{m}_W)$ is a properly normalized density with support $[0,1]^d$ because
\begin{equation}\label{eq:proof_pgivenxa}
    \int \mathbb{P}(\mathbf{u}_W \mid \mathbf{x}, do(\mathbf{A} = \mathbf{a})) \diff \mathbf{u}_W = \frac{c^+_{W}}{s_{W}^+} + \frac{1 - c^+_{W}}{s_{W}^-} = \frac{1 - s_W^-}{s_W^+ - s_W^-} + \frac{s_W^+ - 1}{s_W^+ - s_W^-} = 1,
\end{equation}
where we used the definition of $c^+_{W}= \frac{(1 - s_W^-) s_W^+}{s_W^+ - s_W^-}$.

We now define the probability density for the unobserved confounder $\mathbf{U}_W \in \R^d$ given $\mathbf{X}$, $\mathbf{M}_W$, and treatments $\mathbf{A}$ as the uniform density on $[0,1]^d$, i.e.,
\begin{equation}
\mathbb{P}(\mathbf{u}_W \mid \mathbf{x}, \mathbf{a}) = \mathbbm{1}(0 \leq \mathbf{u}_W \leq 1),
\end{equation}
where the indicator function is defined coordinate-wise.

We now need to show that there always exists an SCM $\mathcal{M}$ which induces the densities in Eq.~\eqref{eq:proof_pgivenx} and Eq.~\eqref{eq:proof_pgivenxa} and that satisfies the sensitivity constraints. For this purpose, we use the assumptions to write the interventional density for discrete $\mathbf{A}$ as
\begin{equation}
    \mathbb{P}(\mathbf{u}_W \mid \mathbf{x}, do(\mathbf{A}= \mathbf{a}))) = \mathbb{P}(\mathbf{u}_W \mid \mathbf{x}) = \mathbb{P}(\mathbf{a}\mid \mathbf{x}) + (1-\mathbb{P}(\mathbf{a}\mid \mathbf{x})) \mathbb{P}(\mathbf{u}_W \mid \mathbf{x}, \mathbf{A}\neq\mathbf{a}).
\end{equation}
Hence, our definitions for $\mathbb{P}(\mathbf{u}_W \mid \mathbf{x}, do(\mathbf{A}))$ and $\mathbb{P}(\mathbf{u}_W \mid \mathbf{x}, \mathbf{a}) $ are valid whenever
\begin{equation}
    0 \leq \mathbb{P}(\mathbf{u}_W \mid \mathbf{x}, \mathbf{A}\neq\mathbf{a}) = \frac{\mathbb{P}(\mathbf{u}_W \mid \mathbf{x}, do(\mathbf{A} = \mathbf{a})) - \mathbb{P}(\mathbf{a}\mid \mathbf{x})}{1 - \mathbb{P}(\mathbf{a}\mid \mathbf{x})},
\end{equation}
which follows from the sharpness condition $1/s_W^+ \geq \mathbb{P}(\mathbf{a}\mid \mathbf{x})$. Sharpness for continuous $\mathbf{A}$ follows because we can approximate $\mathbf{A}$ with a sequence of discrete treatments $(\mathbf{A}_n)_n$ and it holds that $\mathbb{P}(\mathbf{a}_n\mid \mathbf{x}) \xrightarrow[n \to \infty]{}0$ (as long as $\mathbb{P}(\mathbf{a}\mid \mathbf{x})$ is non-degenerate). Finally, it holds that
\begin{equation}
    s_W^- \leq \frac{\mathbb{P}(\mathbf{U}_W = \mathbf{u}_W \mid \mathbf{x}, \mathbf{a})}{\mathbb{P}(\mathbf{U}_W = \mathbf{u}_W \mid \mathbf{x}, do(\mathbf{A} = \mathbf{a}))} \leq s_W^+ \quad \text{for all} \quad \mathbf{u}_W \in [0,1]^p,
\end{equation}
so that $\mathcal{M}$ respects the sensitivity constraint of the GMSM $\mathcal{S}$.

Let now $\mathbb{P}(w \mid \mathbf{x}, \mathbf{m}_W, \mathbf{a})$ denote the observational density of $W$ given $\mathbf{X}$, $\mathbf{M}_W$, and $\mathbf{A}$ with corresponding cumulative distribution function~(CDF) given by $F(w)$. To complete our construction of $\mathcal{M}$, we define functional assignment $W = f_W(\mathbf{X}, \mathbf{M}_W, \mathbf{A}, \mathbf{U}_W)$ via the inverse CDF
\begin{equation}\label{eq:proof_inverse_F}
f_W(\mathbf{x}, \mathbf{m}_W, \mathbf{a}, \mathbf{u}_W) = F^{-1}\left(u_W^{(1)}\right).
\end{equation}
By denoting $f_W(\mathbf{x}, \mathbf{m}_W, \mathbf{a}, \cdot)$ as $f_{\mathbf{x}, \mathbf{m}_W, \mathbf{a}}$, we can write the observational distribution under $\mathcal{M}$ as the push forward
\begin{equation}\label{eq:proof_push_obs}
{f_{\mathbf{x}, \mathbf{m}_W, \mathbf{a}}}_\# \mathbb{P}^{\mathbf{U}_W  \mid \mathbf{x}, \mathbf{a}}(w) = \mathbb{P}(w \mid \mathbf{x}, \mathbf{m}_W, \mathbf{a})
\end{equation}
due to Eq.~\eqref{eq:proof_pgivenxa} and Eq.~\eqref{eq:proof_inverse_F}. Note that we used here that $\mathbf{U}_W$ is not a parent of $\mathbf{M}_W$ (see Fig.~\ref{fig:causal_graphs}). Hence, $\mathcal{M} \in \mathcal{C}(\mathcal{S})$ is compatible with the sensitivity model $\mathcal{S}$. Furthermore, the induced interventional distribution can be written as the push-forward
\begin{equation}
{f_{\mathbf{x}, \mathbf{m}_W, \mathbf{a}}}_\# \mathbb{P}^{\mathbf{U}_W  \mid \mathbf{x}, do(\mathbf{A}=\mathbf{a})}(w) =\mathbb{P}_+(w \mid \mathbf{x}, \mathbf{m}_W, \mathbf{a})
\end{equation}
because of Eq.~\eqref{eq:proof_pgivenx}.
\end{proof}

\subsection{Proof of Corollary~\ref{cor:bounds}}
Here, we formally restate Corollary~\ref{cor:bounds} for \emph{monotone} functionals. For two probability densities $\mathbb{P}(y)$ and $\mathbb{P}^\prime(y)$, we denote $\mathbb{P} \leq \mathbb{P}^\prime$ if $F \geq F^\prime$ holds almost surely for the corresponding CDFs.

\begin{definition}
    A functional $\mathcal{D}$ is called monotone if $\mathcal{D}(\mathbb{P}(\cdot)) \leq \mathcal{D}(\mathbb{P}^\prime(\cdot))$ whenever $\mathbb{P} \leq \mathbb{P}^\prime$.
\end{definition}

Intuitively, a monotone functional increases if applied on a distribution that is further right-shifted. Note that both the expectation functional $\mathcal{D}(\mathbb{P}(\cdot)) = \int y \mathbb{P}(y) \diff y$ and the quantile functionals $\mathcal{D}(\mathbb{P}(\cdot)) = F^{-1}(\alpha)$ for $\alpha \in [0,1]$ are monotone.

\begin{corollary}[Restatement]
If $\mathbf{M} = \emptyset$ and $\mathcal{D}$ is monotone, we obtain sharp bounds
\begin{equation}
    Q^+(\mathbf{x}, \mathbf{a}, \mathcal{S}) = \mathcal{D}\left(\mathbb{P}_+^{Y}(\cdot \mid \mathbf{x}, \mathbf{a})\right) \quad \textrm{and} \quad Q^-(\mathbf{x}, \mathbf{a}, \mathcal{S}) = \mathcal{D}\left(\mathbb{P}_-^Y(\cdot \mid \mathbf{x}, \mathbf{a})\right).
\end{equation}
\end{corollary}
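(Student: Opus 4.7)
The plan is to reduce the corollary to a direct consequence of Theorem~\ref{thrm:shift} applied with $W = Y$ and empty mediator set. Since $\ell = 0$, Eq.~\eqref{eq:query} collapses to $Q(\mathbf{x}, \mathbf{a}, \mathcal{M}) = \mathcal{D}\bigl(\mathbb{P}_\mathcal{M}^Y(\cdot \mid \mathbf{x}, do(\mathbf{A}=\mathbf{a}))\bigr)$, so the partial identification problem in Eq.~\eqref{eq:bounds} becomes a sup/inf of a functional of the interventional outcome distribution over all compatible SCMs $\mathcal{M}$.

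First I would apply Theorem~\ref{thrm:shift} with $\mathbf{M}_W = \emptyset$ and $W=Y$ to get, for every $\mathcal{M} \in \mathcal{C}(\mathcal{S})$ and every $y$,
\begin{equation}
F_+(y) \;\leq\; F_\mathcal{M}(y) \;\leq\; F_-(y),
\end{equation}
where $F_\pm$ are the CDFs of $\mathbb{P}_\pm^Y(\cdot \mid \mathbf{x}, \mathbf{a})$ and $F_\mathcal{M}$ is the CDF of $\mathbb{P}_\mathcal{M}^Y(\cdot \mid \mathbf{x}, do(\mathbf{A}=\mathbf{a}))$. Translating to the $\leq$ ordering on densities defined before the corollary ($\mathbb{P} \leq \mathbb{P}'$ iff $F \geq F'$), this is exactly $\mathbb{P}_-^Y \;\leq\; \mathbb{P}_\mathcal{M}^Y \;\leq\; \mathbb{P}_+^Y$.

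Next I would invoke monotonicity of $\mathcal{D}$ to pass from the CDF ordering to the functional value, yielding $\mathcal{D}(\mathbb{P}_-^Y) \leq \mathcal{D}(\mathbb{P}_\mathcal{M}^Y) \leq \mathcal{D}(\mathbb{P}_+^Y)$ for every $\mathcal{M} \in \mathcal{C}(\mathcal{S})$. Taking supremum and infimum over $\mathcal{C}(\mathcal{S})$ gives the two inequality bounds $Q^+(\mathbf{x}, \mathbf{a}, \mathcal{S}) \leq \mathcal{D}(\mathbb{P}_+^Y(\cdot \mid \mathbf{x}, \mathbf{a}))$ and $Q^-(\mathbf{x}, \mathbf{a}, \mathcal{S}) \geq \mathcal{D}(\mathbb{P}_-^Y(\cdot \mid \mathbf{x}, \mathbf{a}))$, which is the validity half of the claim.

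For sharpness, I would reuse the explicit SCM construction in the proof of Theorem~\ref{thrm:shift}: under the stated sharpness conditions (discrete $\mathbf{A}$ with $1/s_Y^+ \geq \mathbb{P}(\mathbf{a}\mid \mathbf{x})$, or continuous $\mathbf{A}$), Theorem~\ref{thrm:shift} produces an $\mathcal{M}^\star \in \mathcal{C}(\mathcal{S})$ whose induced interventional density equals $\mathbb{P}_+^Y(\cdot \mid \mathbf{x}, \mathbf{a})$ pointwise, and analogously an $\mathcal{M}^\flat \in \mathcal{C}(\mathcal{S})$ achieving $\mathbb{P}_-^Y(\cdot \mid \mathbf{x}, \mathbf{a})$. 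Evaluating $Q$ at these two SCMs yields $\mathcal{D}(\mathbb{P}_+^Y)$ and $\mathcal{D}(\mathbb{P}_-^Y)$ respectively, matching the upper/lower bounds and upgrading the inequalities to equalities. The argument is essentially mechanical once Theorem~\ref{thrm:shift} is in hand; the only subtlety worth flagging is ensuring the density ordering truly corresponds to the CDF ordering used by the definition of monotonicity, but this is immediate from the definitions.
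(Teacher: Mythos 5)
Your proposal is correct and is exactly the paper's argument: the paper's proof of this corollary is a one-line appeal to Theorem~\ref{thrm:shift} with $W=Y$, and you have simply written out the intermediate steps (CDF ordering, monotonicity of $\mathcal{D}$, sup/inf over $\mathcal{C}(\mathcal{S})$, and reuse of the sharpness construction). No gaps.
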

\begin{proof}
Follows directly from Theorem~\ref{thrm:shift} for $W=Y$.
\end{proof}

\subsection{Proof of Corollary~\ref{cor::bounds_mediators}}

\begin{proof}
We derive Algorithm~\ref{alg:bounds} for $Q^+(\mathbf{x}, \Bar{\mathbf{a}}_{\ell + 1}, \mathcal{S})$. The case for $Q^-(\mathbf{x}, \Bar{\mathbf{a}}_{\ell + 1}, \mathcal{S})$ follows analogously.

Recall that we want to maximize the causal effect
\begin{equation}\label{eq:proof_query}
  Q(\mathbf{x}, \Bar{\mathbf{a}}, \mathcal{M}) = \sum_{\mathbf{m}} \mathcal{D}\left(\mathbb{P}^Y(\cdot \mid \mathbf{x}, \mathbf{m}, do(\mathbf{A} = \mathbf{a}_{\ell+1})\right) \prod_{i = 1}^\ell \mathbb{P}(m_i \mid \mathbf{x}, \Bar{\mathbf{m}}_{i-1} , do(\mathbf{A} = \mathbf{a}_i)),
\end{equation}
over all possible SCMs $\mathcal{M} \in \mathcal{C}(\mathcal{S})$ that are compatible with the GMSM $\mathcal{S}$. By using the assumption (no unobserved confounding between mediators and outcome), we can write $Q(\mathbf{x}, \Bar{\mathbf{a}}, \mathcal{M})$ in terms of functional assignments $f^W_{\mathbf{x}, \mathbf{m}_W, \mathbf{a}}$ defined via
$W = f^W(\mathbf{X}, \mathbf{M}_W, \mathbf{A}, \mathbf{U}_W)$ and induced (interventional) distributions $\mathbb{P}^{\mathbf{U}_{W}  \mid \mathbf{x} }$ in the following way:
\begin{equation}\label{eq:proof_alg}
  Q(\mathbf{x}, \Bar{\mathbf{a}}_{\ell + 1}, \mathcal{M}) = \sum_{\mathbf{m}} \mathcal{D}\left( {f^Y_{\mathbf{x}, \Bar{\mathbf{m}}_\ell, \mathbf{a}_{\ell + 1}}}_\# \mathbb{P}^{\mathbf{U}_Y  \mid \mathbf{x} }(\cdot)\right) \prod_{i = 1}^\ell {f^{M_i}_{\mathbf{x}, \Bar{\mathbf{m}}_{i-1}, \mathbf{a}_{i}}}_\# \mathbb{P}^{\mathbf{U}_{M_i}  \mid \mathbf{x} }(m_i).
\end{equation}
Hence, the optimization problem reduces to maximizing Eq.~\eqref{eq:proof_alg} over all functional assignments $f^W_{\mathbf{x}, \mathbf{m}_W, \mathbf{a}}$ and distributions $\mathbb{P}^{\mathbf{U}_{W}  \mid \mathbf{x} }$ that are compatible with $\mathcal{S}$. Note that the terms in the product do not depend on each other or the term in the sum. Thus, by rearranging the suprema and products, we can equivalently perform the following iterative procedure: First, we initialize
\begin{equation}\label{eq:proof_alg_init}
    Q_{\ell+1}^+(\mathbf{x}, \Bar{\mathbf{m}}_\ell, \Bar{\mathbf{a}}_{\ell + 1}, \mathcal{S}) = \sup_{\mathcal{M} \in \mathcal{C}(\mathcal{S})} \mathcal{D}\left( {f^Y_{\mathbf{x}, \Bar{\mathbf{m}}_\ell, \mathbf{a}_{\ell + 1}}}_\# \mathbb{P}^{\mathbf{U}_Y  \mid \mathbf{x} }(\cdot)\right)
\end{equation}
and then define
\begin{equation}\label{eq:proof_alg_step}
    Q_{i}^+(\mathbf{x}, \Bar{\mathbf{m}}_{i-1}, \Bar{\mathbf{a}}_{\ell + 1}, \mathcal{S}) = \sup_{\mathcal{M} \in \mathcal{C}(\mathcal{S})} \sum_{m_i} Q_{i+1}^+(\mathbf{x}, \Bar{\mathbf{m}}_{i-1}, m_i, \Bar{\mathbf{a}}_{\ell + 1})) \, \left({f^{M_i}_{\mathbf{x},\Bar{\mathbf{m}}_{i-1}, \mathbf{a}_{i}}}_\# \mathbb{P}^{\mathbf{U}_Y  \mid \mathbf{x} }(m_i) \right)
\end{equation}
for all $i \in \{\ell, \dots, 1\}$, which results in the sharp upper bound
\begin{equation}\label{eq:proof_alg_result}
    Q^+(\mathbf{x}, \Bar{\mathbf{a}}_{\ell + 1}, \mathcal{S}) = Q_1^+(\mathbf{x}, \Bar{\mathbf{a}}_{\ell + 1}, \mathcal{S}).
\end{equation}

For Eq.~\eqref{eq:proof_alg_init} and monotone $\mathcal{D}$, we can directly apply Theorem~\ref{thrm:shift} and obtain
\begin{equation}
     Q_{\ell+1}^+(\mathbf{x}, \Bar{\mathbf{m}}_\ell, \Bar{\mathbf{a}}_{\ell + 1}, \mathcal{S}) = \mathcal{D}\left(\mathbb{P}_+^Y(\cdot \mid \mathbf{x}, \Bar{\mathbf{m}}_\ell, \mathbf{a}_{\ell+1})\right).
\end{equation}

For Eq.~\eqref{eq:proof_alg_step}, we need to find an induced distribution ${f^{M_i}_{\mathbf{x},\Bar{\mathbf{m}}_{i-1}, \mathbf{a}_{i}}}_\# \mathbb{P}^{\mathbf{U}_Y  \mid \mathbf{x}}$ on $M_i$ that is compatible with $\mathcal{S}$ and puts most probability mass on $m_i$ where $ Q_{i+1}^+(\mathbf{x}, \Bar{\mathbf{m}}_{i-1}, m_i, \Bar{\mathbf{a}}_{\ell + 1}))$ is large. Hence, we can apply the discrete version of Theorem~\ref{thrm:shift} with $W=\pi(M_i)$, where $\pi \colon supp(M_i) \to supp(M_i)$ is a permutation map so that $\left(Q_{i+1}^+(\mathbf{x}, \Bar{\mathbf{m}}_{i-1}, \pi(m_i)), \Bar{\mathbf{a}}_{\ell + 1}\right)_{m_i \in supp(M_i)}$ is ordered in ascending order. The corresponding update step is shown in Algorithm~\ref{alg:bounds}.
    
\end{proof}
\clearpage

\section{Special cases of the GMSM}\label{app:msm_gmsm}

In this section, we prove Lemma~\ref{lem:msm}, i.e., we show that all sensitivity models introduced in Sec.~\ref{sec:gmsm} of the main paper are special cases of our (weighted) GMSM. Recall that we  consider settings without mediators (i.e., $\ell = 0$), and write $\Gamma = \Gamma_Y$ for the sensitivity parameter, $q(\mathbf{a}, \mathbf{x}) = q_Y(\mathbf{a}, \mathbf{x})$ for the weight function, and $\mathbf{U} = \mathbf{U}_Y$ for the unobserved confounders. In this case, the weighted GMSM is defined via the confounding restriction
\begin{equation}
\frac{1}{(1 - \Gamma) \, q(\mathbf{a}, \mathbf{x}) + \Gamma} \leq \frac{\mathbb{P}(\mathbf{U} = \mathbf{u} \mid \mathbf{x}, \mathbf{a})}{\mathbb{P}(\mathbf{U} = \mathbf{u} \mid \mathbf{x}, do(\mathbf{A} = \mathbf{a}))} \leq \frac{1}{(1 - \Gamma^{-1}) \, q(\mathbf{a}, \mathbf{x}) + \Gamma^{-1}}.
\end{equation}

\subsection{Marginal sensitivity model (MSM):}
The MSM \cite{Tan.2006} for binary treatment $\mathbf{A} = A \in \{0,1\}$ is defined via
\begin{equation}
\frac{1}{\Gamma} \leq  \frac{\pi(\mathbf{x})}{1 - \pi(\mathbf{x})} \frac{1 - \pi(\mathbf{x}, \mathbf{u})}{\pi(\mathbf{x}, \mathbf{u})} \leq \Gamma,
\end{equation}
where $\pi(\mathbf{x}) = \mathbb{P}(A = 1 \mid \mathbf{x})$ denotes the observed propensity score and $\pi(\mathbf{x}, \mathbf{u}) = \mathbb{P}(A = 1 \mid \mathbf{x}, \mathbf{u})$ denotes the full propensity score. By rearranging the terms, we obtain
\begin{equation}\label{eq:ratio_bound_msm}
\frac{1}{(1 - \Gamma) \mathbb{P}(a \mid \mathbf{x}) + \Gamma} \leq \frac{\mathbb{P}(a \mid \mathbf{x}, \mathbf{u})}{\mathbb{P}(a \mid \mathbf{x})} \leq \frac{1}{(1 - \Gamma^{-1}) \mathbb{P}(a \mid \mathbf{x}) + \Gamma^{-1}}
\end{equation}
for $a \in \{0,1\}$. Furthermore, by Bayes' theorem, it follows that
\begin{equation}\label{eq:proof_msm_bayes}
\frac{\mathbb{P}(a \mid \mathbf{x}, \mathbf{u})}{\mathbb{P}(a \mid \mathbf{x})}  = \frac{\mathbb{P}(\mathbf{u} \mid \mathbf{x}, a) \mathbb{P}(a \mid \mathbf{x}) }{\mathbb{P}(\mathbf{u} \mid \mathbf{x}) \mathbb{P}(a \mid \mathbf{x})} = \frac{\mathbb{P}(\mathbf{u} \mid \mathbf{x}, a)}{\mathbb{P}(\mathbf{u} \mid \mathbf{x})}  = \frac{\mathbb{P}(\mathbf{u} \mid \mathbf{x}, a)}{\mathbb{P}(\mathbf{u} \mid \mathbf{x}, do(A=a))},
\end{equation}
which implies that the MSM is a weighted GMSM with weight function $q(\mathbf{a}, \mathbf{x}) = \mathbb{P}(\mathbf{a} \mid \mathbf{x})$.

\textbf{Comparison to the MSM definition using potential outcomes:} Some papers define the MSM in terms of potential outcomes ($Y_1$, $Y_0$) instead of unobserved confounders $U$ \cite{Jesson.2021, Tan.2006}. Here, $Y_a$ denotes the potential outcome under the treatment intervention $do(A = a)$. In the following, we show that this is equivalent to the definition we use in our paper.

\begin{lemma}
The following two statements are equivalent: 
\begin{enumerate}
    \item There exists an unobserved confounder $U$ that satisfies $Y_1, Y_0 \perp  A \mid \mathbf{X}, \mathbf{U}$ so that it holds that $s^- \leq \frac{\mathbb{P}(\mathbf{u} \mid \mathbf{x}, a)}{\mathbb{P}(\mathbf{u} \mid \mathbf{x})} \leq s^+$.
    \item It holds that $s^- \leq \frac{\mathbb{P}(Y_1, Y_0 \mid \mathbf{x}, a)}{\mathbb{P}(Y_1, Y_0 \mid \mathbf{x})} \leq s^+$.
\end{enumerate}
\end{lemma}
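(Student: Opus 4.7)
The plan is to prove the two implications separately, with the reverse direction relying on the classical trick of taking the potential outcomes themselves as the confounder.

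\textbf{Direction (1) $\Rightarrow$ (2):} I would marginalize over $\mathbf{U}$. Using the conditional independence $Y_1, Y_0 \perp A \mid \mathbf{X}, \mathbf{U}$, one can write
\begin{equation}
\mathbb{P}(y_1, y_0 \mid \mathbf{x}, a) = \int \mathbb{P}(y_1, y_0 \mid \mathbf{x}, \mathbf{u}) \, \mathbb{P}(\mathbf{u} \mid \mathbf{x}, a) \diff \mathbf{u},
\end{equation}
and similarly (now without conditioning on $a$)
\begin{equation}
\mathbb{P}(y_1, y_0 \mid \mathbf{x}) = \int \mathbb{P}(y_1, y_0 \mid \mathbf{x}, \mathbf{u}) \, \mathbb{P}(\mathbf{u} \mid \mathbf{x}) \diff \mathbf{u}.
\end{equation}
The ratio in (2) is thus a weighted average, with weights $\mathbb{P}(y_1, y_0 \mid \mathbf{x}, \mathbf{u}) \mathbb{P}(\mathbf{u} \mid \mathbf{x}) / \mathbb{P}(y_1, y_0 \mid \mathbf{x})$ that integrate to one, of the pointwise ratios $\mathbb{P}(\mathbf{u} \mid \mathbf{x}, a) / \mathbb{P}(\mathbf{u} \mid \mathbf{x})$. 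Since each pointwise ratio lies in $[s^-, s^+]$ by assumption, the weighted average lies in the same interval, establishing (2).

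\textbf{Direction (2) $\Rightarrow$ (1):} I would simply take the latent confounder to be $\mathbf{U} = (Y_1, Y_0)$. Then $Y_1, Y_0 \perp A \mid \mathbf{X}, \mathbf{U}$ holds trivially because $(Y_1, Y_0)$ is a deterministic function of $\mathbf{U}$. Moreover, with this choice the ratio $\mathbb{P}(\mathbf{u} \mid \mathbf{x}, a) / \mathbb{P}(\mathbf{u} \mid \mathbf{x})$ coincides identically with $\mathbb{P}(y_1, y_0 \mid \mathbf{x}, a) / \mathbb{P}(y_1, y_0 \mid \mathbf{x})$, which by assumption lies in $[s^-, s^+]$. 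This yields the required confounder.

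\textbf{Main obstacle:} The forward direction is the more subtle one, and the key step is recognizing that the marginalization gives a convex combination of pointwise ratios against a valid probability measure on $\mathbf{u}$, so the interval bound is preserved. One has to be careful to identify the correct weight, namely $\mathbb{P}(y_1, y_0 \mid \mathbf{x}, \mathbf{u}) \mathbb{P}(\mathbf{u} \mid \mathbf{x}) / \mathbb{P}(y_1, y_0 \mid \mathbf{x})$, which is precisely $\mathbb{P}(\mathbf{u} \mid \mathbf{x}, y_1, y_0)$ by Bayes' rule and therefore integrates to one. The reverse direction is essentially a tautology once the right choice of $\mathbf{U}$ is made, but it is conceptually important because it shows that the potential-outcome formulation of the MSM is never strictly stronger than the confounder-based formulation used in our paper.
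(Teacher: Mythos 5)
Your proposal is correct and takes essentially the same route as the paper: the direction $(2)\Rightarrow(1)$ is identical (take $\mathbf{U}=(Y_1,Y_0)$ so that the independence is trivial and the two ratios coincide), and your forward direction rests on the same marginalization $\mathbb{P}(y_1,y_0\mid\mathbf{x},a)=\int\mathbb{P}(y_1,y_0\mid\mathbf{x},\mathbf{u})\,\mathbb{P}(\mathbf{u}\mid\mathbf{x},a)\diff\mathbf{u}$ obtained from the conditional independence. The only (cosmetic) difference is that you state the convex-combination bound directly, identifying the weight $\mathbb{P}(\mathbf{u}\mid\mathbf{x},y_1,y_0)$, whereas the paper argues contrapositively by contradiction, observing that if the averaged ratio exceeded $s^+$ then some pointwise ratio $\mathbb{P}(\mathbf{u}\mid\mathbf{x},a)/\mathbb{P}(\mathbf{u}\mid\mathbf{x})$ would have to exceed $s^+$ as well.
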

\begin{proof}
The second statement follows directly from the first one by defining $\mathbf{U} = (Y_1, Y_0)$. For the other direction, we proceed via proof by contradiction. Assume there exists a pair $(Y_1, Y_0)$ that violates the second statement, say w.l.o.g. $\frac{\mathbb{P}(Y_1, Y_0 \mid \mathbf{x}, a)}{\mathbb{P}(Y_1, Y_0 \mid \mathbf{x})} > s^+$. We can use the independence condition from the first statement to write 
\begin{equation}
\mathbb{P}(Y_1, Y_0 \mid \mathbf{x}, a) = \int \mathbb{P}(Y_1, Y_0 \mid \mathbf{x}, \mathbf{u}, a) \mathbb{P}(\mathbf{u} \mid \mathbf{x}, a) \diff \mathbf{u} = \int \mathbb{P}(Y_1, Y_0 \mid \mathbf{x}, \mathbf{u}) \mathbb{P}(\mathbf{u} \mid \mathbf{x}, a) \diff \mathbf{u}    
\end{equation}
Furthermore, we have that
\begin{equation}
\mathbb{P}(Y_1, Y_0 \mid \mathbf{x}) = \int \mathbb{P}(y_1, y_0 \mid \mathbf{x}, \mathbf{u}) \mathbb{P}(\mathbf{u} \mid \mathbf{x}) \diff \mathbf{u}.
\end{equation}
It follows that
\begin{equation}
    s^+ < \frac{\int \mathbb{P}(Y_1, Y_0 \mid \mathbf{x}, \mathbf{u}) \mathbb{P}(\mathbf{u} \mid \mathbf{x}, a) \diff \mathbf{u}}{\int \mathbb{P}(Y_1, Y_0 \mid \mathbf{x}, \mathbf{u}) \mathbb{P}(\mathbf{u} \mid \mathbf{x}) \diff \mathbf{u}}.
\end{equation}
Hence, there exists a $\mathbf{u}$ such that $s^+ < \frac{\mathbb{P}(Y_1, Y_0 \mid \mathbf{x}, \mathbf{u}) \mathbb{P}(\mathbf{u} \mid \mathbf{x}, a)}{\mathbb{P}(Y_1, Y_0 \mid \mathbf{x}, \mathbf{u}) \mathbb{P}(\mathbf{u} \mid \mathbf{x})} = \frac{\mathbb{P}(\mathbf{u} \mid \mathbf{x}, a)}{\mathbb{P}(\mathbf{u} \mid \mathbf{x})}$, which is a contradiction to the sensitivity constraint of the first statement.    
\end{proof}

\subsection{Continuous marginal sensitivity model (CMSM):}
For continuous treatments $\mathbf{A} \in \R^d$, the continuous marginal sensitivity model (CMSM) \cite{Jesson.2022} is defined via 
\begin{equation}\label{eq:ratio_bound_cmsm}
\frac{1}{\Gamma} \leq \frac{\mathbb{P}(\mathbf{a} \mid \mathbf{x}, \mathbf{u})}{\mathbb{P}(\mathbf{a} \mid \mathbf{x})}  \leq \Gamma.
\end{equation}
With the same arguments as in Eq.~\eqref{eq:proof_msm_bayes}, it follows that the CMSM is a weighted GMSM with weight function $q(\mathbf{a}, \mathbf{x}) = 0$.

\subsection{Longitudinal marginal sensitivity model (LMSM):}
For longitudinal settings with time-varying observed confounders $\mathbf{X} = \Bar{\mathbf{X}}_T = (\mathbf{X}_1, \dots, \mathbf{X}_T)$, unobserved confounders $\mathbf{U} = \Bar{\mathbf{U}}_T = (\mathbf{U}_1, \dots, \mathbf{U}_T)$, treatments $\mathbf{A} = \Bar{\mathbf{A}}_T = (\mathbf{A}_1, \dots, \mathbf{A}_T)$, we define the longitudinal marginal sensitivity model (LMSM) via 
\begin{equation}
 \frac{1}{\Gamma} \leq \prod_{t=1}^T \frac{\mathbb{P}(\mathbf{a}_t \mid \Bar{\mathbf{x}}_T, \Bar{\mathbf{u}}_t, \Bar{\mathbf{a}}_{t-1})}{\mathbb{P}(\mathbf{a}_t \mid \Bar{\mathbf{x}}_T, \Bar{\mathbf{a}}_{t-1})} \leq \Gamma.   
\end{equation}
It holds that
\begin{align}
\mathbb{P}(\Bar{\mathbf{u}}_T \mid \Bar{\mathbf{x}}_T, \Bar{\mathbf{a}}_T)& = \frac{\prod_{t=1}^T \mathbb{P}(\Bar{\mathbf{u}}_t \mid \Bar{\mathbf{x}}_T, \Bar{\mathbf{a}}_t)}{\prod_{t=1}^{T-1} \mathbb{P}(\Bar{\mathbf{u}}_t \mid \Bar{\mathbf{x}}_T, \Bar{\mathbf{a}}_t)} = \prod_{t=1}^{T} \frac{\mathbb{P}(\Bar{\mathbf{u}}_t \mid \Bar{\mathbf{x}}_T, \Bar{\mathbf{a}}_t)}{\mathbb{P}(\Bar{\mathbf{u}}_{t-1} \mid \Bar{\mathbf{x}}_T, \Bar{\mathbf{a}}_{t-1})} \\
&\overset{(\ast)}{=} \left(\prod_{t=1}^T \frac{\mathbb{P}(\mathbf{a}_t \mid \Bar{\mathbf{x}}_T, \Bar{\mathbf{u}}_t, \Bar{\mathbf{a}}_{t-1})}{\mathbb{P}(\mathbf{a}_t \mid \Bar{\mathbf{x}}_T, \Bar{\mathbf{a}}_{t-1})}\right) \left(\prod_{t=1}^{T} \frac{ \mathbb{P}(\Bar{\mathbf{u}}_t \mid \Bar{\mathbf{x}}_T, \Bar{\mathbf{a}}_{t-1})}{\mathbb{P}(\Bar{\mathbf{u}}_{t-1} \mid \Bar{\mathbf{x}}_T, \Bar{\mathbf{a}}_{t-1})}\right)\\
&=\left(\prod_{t=1}^T \frac{\mathbb{P}(\mathbf{a}_t \mid \Bar{\mathbf{x}}_T, \Bar{\mathbf{u}}_t, \Bar{\mathbf{a}}_{t-1})}{\mathbb{P}(\mathbf{a}_t \mid \Bar{\mathbf{x}}_T, \Bar{\mathbf{a}}_{t-1})}\right) \left(\prod_{t=1}^{T} \mathbb{P}(\mathbf{u}_t \mid \Bar{\mathbf{x}}_T, \Bar{\mathbf{a}}_{t-1}, \Bar{\mathbf{u}}_{t-1})\right)\\
&=\left(\prod_{t=1}^T \frac{\mathbb{P}(\mathbf{a}_t \mid \Bar{\mathbf{x}}_T, \Bar{\mathbf{u}}_t, \Bar{\mathbf{a}}_{t-1})}{\mathbb{P}(\mathbf{a}_t \mid \Bar{\mathbf{x}}_T, \Bar{\mathbf{a}}_{t-1})}\right) \mathbb{P}(\Bar{\mathbf{u}}_T \mid \Bar{\mathbf{x}}_T, do(\Bar{\mathbf{\mathbf{A}}}_T = \Bar{\mathbf{\mathbf{a}}}_T)),
\end{align}
where $(\ast)$ follows by applying Bayes' theorem on $\mathbb{P}(\Bar{\mathbf{u}}_t \mid \Bar{\mathbf{x}}_T, \Bar{\mathbf{a}}_t)$. Hence, the LMSM is a weighted GMSM with weight function $q(\Bar{\mathbf{a}}_T, \Bar{\mathbf{x}}_T) = 0$. Note that we can also define LMSMs with different weight functions via
\begin{equation}
     \frac{1}{(1 - \Gamma) q(\Bar{\mathbf{a}}_T, \Bar{\mathbf{x}}_T) + \Gamma} \leq \prod_{t=1}^T \frac{\mathbb{P}(\mathbf{a}_t \mid \Bar{\mathbf{x}}_T, \Bar{\mathbf{u}}_t, \Bar{\mathbf{a}}_{t-1})}{\mathbb{P}(\mathbf{a}_t \mid \Bar{\mathbf{x}}_T, \Bar{\mathbf{a}}_{t-1})} \leq \frac{1}{(1 - \Gamma^{-1}) q(\Bar{\mathbf{a}}_T, \Bar{\mathbf{x}}_T) + \Gamma^{-1}}.
\end{equation}

\clearpage

\section{Bounds for average causal effects and differences}\label{app:average}
Here, we show that we can use our sharp bounds to obtain sharp bounds for causal effect averages and differences. We state the results for the upper bound
\begin{equation}
Q^+(\mathbf{x}, \Bar{\mathbf{a}}_{\ell + 1}, \mathcal{S}) = \sup_{\mathcal{M} \in \mathcal{C}(\mathcal{S})} Q(\mathbf{x}, \Bar{\mathbf{a}}_{\ell + 1}, \mathcal{M}).
\end{equation}
All definitions and bounds for the lower bound $Q^-(\mathbf{x}, \Bar{\mathbf{a}}_{\ell + 1}, \mathcal{S})$ can be obtained by swapping the signs.

We are interested in the sharp upper bound for the \emph{average causal effect}
\begin{equation}\label{eq:bounds_avg}
Q^+_\mathrm{avg}(\Bar{\mathbf{a}}_{\ell + 1}, \mathcal{S}) = \sup_{\mathcal{M} \in \mathcal{C}(\mathcal{S})} \int_\mathcal{X} Q(\mathbf{x}, \Bar{\mathbf{a}}_{\ell + 1}, \mathcal{M}) \diff \mathbf{x}
\end{equation}
and the sharp upper bound for the \emph{causal effect difference}
\begin{equation}\label{eq:bounds_diff}
Q^+_\mathrm{diff}\left(\mathbf{x}, \Bar{\mathbf{a}}_{\ell + 1}^{(1)}, \Bar{\mathbf{a}}_{\ell + 1}^{(2)}, \mathcal{S}\right) = \sup_{\mathcal{M} \in \mathcal{C}(\mathcal{S})} \left(  Q\left(\mathbf{x}, \Bar{\mathbf{a}}_{\ell + 1}^{(1)}, \mathcal{M}\right) - Q\left(\mathbf{x}, \Bar{\mathbf{a}}_{\ell + 1}^{(2)}, \mathcal{M}\right) \right).
\end{equation}

\begin{lemma}
We can compute $Q^+_\mathrm{avg}(\Bar{\mathbf{a}}_{\ell + 1}, \mathcal{S})$ and $Q^+_\mathrm{diff}\left(\mathbf{x}, \Bar{\mathbf{a}}_{\ell + 1}^{(1)}, \Bar{\mathbf{a}}_{\ell + 1}^{(2)}, \mathcal{S}\right)$ from our sharp bounds $Q^+(\mathbf{x}, \Bar{\mathbf{a}}_{\ell + 1}, \mathcal{S})$ and $Q^-(\mathbf{x}, \Bar{\mathbf{a}}_{\ell + 1}, \mathcal{S})$ via
\begin{equation}
    Q^+_\mathrm{avg}(\Bar{\mathbf{a}}_{\ell + 1}, \mathcal{S}) = \int_\mathcal{X} Q^+(\mathbf{x}, \Bar{\mathbf{a}}_{\ell + 1}, \mathcal{S}) \diff \mathbf{x} 
\end{equation}
and
\begin{equation}
   Q^+_\mathrm{diff}\left(\mathbf{x}, \Bar{\mathbf{a}}_{\ell + 1}^{(1)}, \Bar{\mathbf{a}}_{\ell + 1}^{(2)}, \mathcal{S}\right) =  Q^+(\mathbf{x}, \Bar{\mathbf{a}}_{\ell + 1}^{(1)}, \mathcal{S}) - Q^-(\mathbf{x}, \Bar{\mathbf{a}}_{\ell + 1}^{(2)}, \mathcal{S}).
\end{equation}
\end{lemma}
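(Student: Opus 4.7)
The plan is to handle each equality by a two-sided argument: an easy $\leq$ direction from pointwise dominance, and a $\geq$ direction that leverages the explicit SCM constructed in the sharpness part of Theorem~\ref{thrm:shift}.

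For the average equality, the $\leq$ direction is immediate: for every $\mathcal{M}\in\mathcal{C}(\mathcal{S})$ and every $\mathbf{x}$, the definition of $Q^+$ gives $Q(\mathbf{x},\Bar{\mathbf{a}}_{\ell+1},\mathcal{M})\leq Q^+(\mathbf{x},\Bar{\mathbf{a}}_{\ell+1},\mathcal{S})$, so integrating against $\mathbb{P}(\mathbf{x})\diff\mathbf{x}$ and then taking a sup over $\mathcal{M}$ yields $Q^+_{\mathrm{avg}} \leq \int Q^+\,\diff\mathbf{x}$. For the $\geq$ direction I would exhibit a single compatible SCM $\mathcal{M}^\ast$ that attains the pointwise upper bound at every $\mathbf{x}$ simultaneously. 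The key observation is that the construction in the sharpness part of Theorem~\ref{thrm:shift} produces $\mathcal{M}^\ast$ by: (i)~fixing the observational conditionals $\mathbb{P}(w\mid\mathbf{x},\mathbf{m}_W,\mathbf{a})$ (which are determined by the data and do not depend on the choice of SCM); (ii)~defining the structural assignment $f^W$ via the inverse CDF; and (iii)~prescribing the confounder distribution in a form $\mathbb{P}(\mathbf{u}_W\mid\mathbf{x})$ that can be specified pointwise in $\mathbf{x}$. Because step (iii) decouples across $\mathbf{x}$, the same construction applied uniformly in $\mathbf{x}$ produces one SCM achieving $Q^+(\mathbf{x},\Bar{\mathbf{a}}_{\ell+1},\mathcal{S})$ for every $\mathbf{x}$; integrating yields the reverse inequality.

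For the difference equality, the $\leq$ direction is $Q(\mathbf{x},\Bar{\mathbf{a}}^{(1)}_{\ell+1},\mathcal{M})-Q(\mathbf{x},\Bar{\mathbf{a}}^{(2)}_{\ell+1},\mathcal{M}) \leq Q^+(\mathbf{x},\Bar{\mathbf{a}}^{(1)}_{\ell+1},\mathcal{S}) - Q^-(\mathbf{x},\Bar{\mathbf{a}}^{(2)}_{\ell+1},\mathcal{S})$ for every $\mathcal{M}$, and taking the sup gives the bound. The harder $\geq$ direction requires a single SCM achieving the upper bound at $\Bar{\mathbf{a}}^{(1)}_{\ell+1}$ and the lower bound at $\Bar{\mathbf{a}}^{(2)}_{\ell+1}$ simultaneously. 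The plan here is to adapt the construction of Theorem~\ref{thrm:shift} by prescribing the conditional laws $\mathbb{P}(\mathbf{u}_W\mid\mathbf{x},\mathbf{a})$ separately on the slice $\mathbf{a}=\mathbf{a}_i^{(1)}$ (using the ``$+$'' step density with thresholds $c^+_W$) and on the slice $\mathbf{a}=\mathbf{a}_i^{(2)}$ (using the ``$-$'' step density with thresholds $c^-_W$), while keeping the marginal $\mathbb{P}(\mathbf{u}_W\mid\mathbf{x})$ fixed to the interventional form used in Theorem~\ref{thrm:shift}. Since the GMSM sensitivity constraint (Eq.~\eqref{eq:gmsm_def}) is an inequality imposed separately at each $(\mathbf{a},\mathbf{x})$, these two slice-wise choices can coexist within one compatible SCM, and the same functional assignment via inverse CDF pushes them forward to the sharp ``$+$'' and ``$-$'' interventional laws at the two treatment values, respectively.

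I expect the main obstacle to be the second bullet: verifying that the slice-wise choices of $\mathbb{P}(\mathbf{u}_W\mid\mathbf{x},\mathbf{a})$ at $\mathbf{a}^{(1)}$ and $\mathbf{a}^{(2)}$ together with the fixed marginal $\mathbb{P}(\mathbf{u}_W\mid\mathbf{x})$ are mutually consistent (i.e., the marginal equals the $\mathbb{P}(\mathbf{a}\mid\mathbf{x})$-mixture of the conditionals across all $\mathbf{a}$) while preserving the observational law and satisfying the sensitivity constraint at every other $\mathbf{a}$. I would address this as in the sharpness proof of Theorem~\ref{thrm:shift}: absorb the remaining probability mass (i.e., the mass on $\mathbf{a}\notin\{\mathbf{a}^{(1)},\mathbf{a}^{(2)}\}$) into a free conditional $\mathbb{P}(\mathbf{u}_W\mid\mathbf{x},\mathbf{A}\notin\{\mathbf{a}^{(1)},\mathbf{a}^{(2)}\})$, which is well-defined and nonnegative exactly under the same sharpness conditions ($1/s_W^+\geq\mathbb{P}(\mathbf{a}\mid\mathbf{x})$ in the discrete case, or in the limit for continuous $\mathbf{A}$). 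Finally, the lower bound $Q^-_{\mathrm{avg}}$ and $Q^-_{\mathrm{diff}}$ follow by symmetric arguments (swap signs in $c_W$ and $s_W$), and the mediator case $\ell\geq 1$ is handled analogously by combining the above with Algorithm~\ref{alg:bounds} and Corollary~\ref{cor::bounds_mediators}.
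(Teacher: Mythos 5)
Your overall strategy matches the paper's: both directions of each equality are handled by an easy pointwise-dominance inequality plus a simultaneous-attainment argument that exploits the fact that the GMSM constraints decouple across $\mathbf{x}$ and across treatment slices. Your treatment of $Q^+_{\mathrm{avg}}$ is a correct (and more explicit) justification of what the paper dismisses as ``interchanging the supremum and integral.'' For $Q^+_{\mathrm{diff}}$, however, the paper takes a cleaner route that you should be aware of: it does \emph{not} re-run the explicit extremal construction from Theorem~\ref{thrm:shift}. Instead, it takes \emph{arbitrary} $\mathcal{M}_1,\mathcal{M}_2\in\mathcal{C}(\mathcal{S})$, assumes w.l.o.g.\ that they share the same confounder distributions $\mathbb{P}^{\mathbf{U}_W\mid\mathbf{x}}$, and glues their functional assignments along treatment slices, $f^W_{\mathcal{M}}(\cdot,\cdot,\mathbf{a}^{(1)},\cdot)=f^W_{\mathcal{M}_1}(\cdot,\cdot,\mathbf{a}^{(1)},\cdot)$ and $f^W_{\mathcal{M}}(\cdot,\cdot,\mathbf{a}^{(2)},\cdot)=f^W_{\mathcal{M}_2}(\cdot,\cdot,\mathbf{a}^{(2)},\cdot)$. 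This reproduces both query values simultaneously and inherits compatibility for free, with no consistency bookkeeping on the confounder laws.

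The concrete problem with your slice-wise construction as written is that it inverts the roles of the marginal and the conditional relative to Theorem~\ref{thrm:shift}. In that proof the \emph{interventional/marginal} law $\mathbb{P}(\mathbf{u}_W\mid\mathbf{x})$ is the two-level step density with values $1/s_W^{+}$ and $1/s_W^{-}$, while the \emph{observational conditional} $\mathbb{P}(\mathbf{u}_W\mid\mathbf{x},\mathbf{a})$ is uniform. You propose to keep the marginal ``fixed to the interventional form used in Theorem~\ref{thrm:shift}'' (which one --- the $+$ or the $-$ step? they differ) and to put the $+$ and $-$ step densities into the conditionals at $\mathbf{a}^{(1)}$ and $\mathbf{a}^{(2)}$. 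If you instead fix the marginal to be uniform and set the conditionals to $1/s_W^{\pm}$-valued steps, the GMSM ratio in Eq.~\eqref{eq:gmsm_def} takes the values $1/s_W^{+}$ and $1/s_W^{-}$, and $1/s_W^{+}\geq s_W^{-}$ need not hold (take $s_W^{-}=0.5$, $s_W^{+}=4$), so the sensitivity constraint can be violated; to make a dual construction work the conditionals must take the values $s_W^{\pm}$ themselves relative to a uniform marginal, and the functional assignment must be the inverse CDF of the \emph{shifted} law rather than the observational one. All of this is repairable, but it is exactly the bookkeeping the paper's gluing argument avoids, and you correctly anticipated it as the ``main obstacle.'' I would recommend replacing your explicit construction with the gluing of two near-optimal SCMs along the $\mathbf{a}^{(1)}$ and $\mathbf{a}^{(2)}$ slices.
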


\begin{proof}
The result for $Q^+_\mathrm{avg}(\Bar{\mathbf{a}}_{\ell + 1}, \mathcal{S})$ follows directly from interchanging the supremum and integral. For $Q^+_\mathrm{diff}\left(\mathbf{x}, \Bar{\mathbf{a}}_{\ell + 1}^{(1)}, \Bar{\mathbf{a}}_{\ell + 1}^{(2)}, \mathcal{S}\right)$, we note that
\begin{align}\label{eq:diff_bound}
Q^+_\mathrm{diff}\left(\mathbf{x}, \Bar{\mathbf{a}}_{\ell + 1}^{(1)}, \Bar{\mathbf{a}}_{\ell + 1}^{(2)}, \mathcal{S}\right) & \leq \sup_{\mathcal{M}_1 \in \mathcal{C}(\mathcal{S})}  Q(\mathbf{x}, \Bar{\mathbf{a}}_{\ell + 1}^{(1)}, \mathcal{M}_1) -  \inf_{\mathcal{M}_2 \in \mathcal{C}(\mathcal{S})} Q(\mathbf{x}, \Bar{\mathbf{a}}_{\ell + 1}^{(2)}, \mathcal{M}_2) \\
&=  Q^+(\mathbf{x}, \Bar{\mathbf{a}}_{\ell + 1}^{(1)}, \mathcal{S}) - Q^-(\mathbf{x}, \Bar{\mathbf{a}}_{\ell + 1}^{(2)}, \mathcal{S}).
\end{align}
To show the equality in Eq.~\eqref{eq:diff_bound}, we show that, for each pair of SCMs $\mathcal{M}_1, \mathcal{M}_2 \in \mathcal{C}(\mathcal{S})$, we can find an SCM $\mathcal{M} \in \mathcal{C}(\mathcal{S})$ such that
\begin{equation}\label{eq:diff_aim}
Q(\mathbf{x}, \Bar{\mathbf{a}}_{\ell + 1}^{(1)}, \mathcal{M}_1) -  Q(\mathbf{x}, \Bar{\mathbf{a}}_{\ell + 1}^{(2)}, \mathcal{M}_2) 
=  Q(\mathbf{x}, \Bar{\mathbf{a}}_{\ell + 1}^{(1)}, \mathcal{M}) - Q(\mathbf{x}, \Bar{\mathbf{a}}_{\ell + 1}^{(2)}, \mathcal{M}).
\end{equation}
We can assume w.l.o.g. that all $\mathcal{M} \in \mathcal{C}(\mathcal{S})$ induce the same distributions $\mathbb{P}^{\mathbf{U}_{W}  \mid \mathbf{x} }$ on the confounding space. We denote the functional assignments of $\mathcal{M}_1$ and $\mathcal{M}_2$ as $f_{\mathcal{M}_1}^W(\mathbf{x}, \mathbf{m}_W, \mathbf{a}, \mathbf{u}_W)$ and $f_{\mathcal{M}_2}^W(\mathbf{x}, \mathbf{m}_W, \mathbf{a}, \mathbf{u}_W)$. We can now define a functional assignment $f_{\mathcal{M}}^W(\mathbf{x}, \mathbf{m}_W, \mathbf{a}, \mathbf{u}_W)$ for $\mathcal{M}$ so that
\begin{equation}
    f_{\mathcal{M}}^W(\cdot, \cdot, \mathbf{a}^{(1)}, \cdot) =  f_{\mathcal{M}_1}^W(\cdot, \cdot, \mathbf{a}^{(1)}, \cdot) \quad \text{and} \quad f_{\mathcal{M}}^W(\cdot, \cdot, \mathbf{a}^{(2)}, \cdot) =  f_{\mathcal{M}_2}^W(\cdot, \cdot, \mathbf{a}^{(2)}, \cdot),
\end{equation}
which implies Eq.~\eqref{eq:diff_aim}.

\end{proof}

\clearpage

\section{Compatibility of SCMs with sensitivity models}\label{app:compatibility}
Here, we provide details regarding our class $\mathcal{C}(\mathcal{S})$ of SCMs that are compatible with a sensitivity model $\mathcal{S}$ from Def.~\ref{def:sensitivity}.

\begin{definition}[Compatibility]\label{def:compatibility}
Let $\mathbb{P}^\mathbf{V}$ denote the distribution of the observed variables $\mathbf{V} = (\mathbf{X}, \mathbf{A}, \Bar{\mathbf{M}}_\ell, Y)$ and let $\mathcal{S}$ be a sensitivity model with unobserved confounders $\mathbf{U}$ and a family of joint distributions $\mathcal{P}$. Let $\mathcal{M}$ be an SCM with observed variables $\mathbf{V}_\mathcal{M} = \mathbf{V}$ and unobserved variables $\mathbf{U}_\mathcal{M} \supseteq \mathbf{U}$ that contain the unobserved confounders $\mathbf{U}$ from the sensitivity model but also potential exogenous noise. Then, $\mathcal{M}$ is compatible with $\mathcal{S}$, if the following three conditions are satisfied:
\begin{enumerate}
\item Graph compatibility: the induced causal graph $\mathcal{G}_\mathcal{M}$ by $\mathcal{M}$ on $\mathbf{V} \cup \mathbf{U}$ must coincide with our assumed causal structure (see Sec.~\ref{sec:scm}).
\item Latent unconfoundedness: $\mathbb{P}(w \mid \mathbf{x}, \mathbf{m}_W, do(\mathbf{A} = \mathbf{a})) = \int \mathbb{P}(w \mid \mathbf{x}, \mathbf{u}_W, \mathbf{m}_W, \mathbf{a}) \mathbb{P}(\mathbf{u}_W \mid \mathbf{x}, \mathbf{m}_W) \diff \mathbf{u}_W$ for all $\mathbf{x} \in \mathcal{X}$, $\mathbf{m}_W \in supp(\mathbf{M}_W)$, $\mathbf{a} \in \mathcal{A}$, and $W \in \{M_1, \dots, M_\ell, Y\}$.
\item Sensitivity constraints: $\mathbb{P}^{\mathbf{V} \cup \mathbf{U}}_\mathcal{M} \in \mathcal{P}$, where $\mathbb{P}^{\mathbf{V} \cup \mathbf{U}}_\mathcal{M}$ denotes the distribution on $\mathbf{V} \cup \mathbf{U}$ induced by $\mathcal{M}$.
\end{enumerate}

In Definition~\ref{def:compatibility}, the latent unconfoundedness condition ensures that all interventional densities $\mathbb{P}(w \mid \mathbf{x}, \mathbf{m}_W, do(\mathbf{A}= \mathbf{a}))$ are point-identified under access to the unobserved confounders. In particular, this ensures that the SCM $\mathcal{M}$ can not contain additional unobserved confounders (e.g., between mediators and outcome). Similar assumptions are standard in the causal sensitivity literature \cite{Dorn.2022}.
\end{definition}

\clearpage

\section{Importance sampling estimators for finite sample bounds}\label{app:est}

In this section, we derive estimators for the outcome bound $\mathcal{D}\left(\mathbb{P}_+^Y(\cdot \mid \mathbf{x}, \Bar{\mathbf{m}}_\ell, \mathbf{a})\right)$ for continuous $Y \in \R$. We assume that we have already obtained an estimator $\hat{\mathbb{P}}^Y(\cdot \mid \mathbf{x}, \Bar{\mathbf{m}}_\ell, \mathbf{a})$ of the observational distribution $\mathbb{P}^Y(\cdot \mid \mathbf{x}, \Bar{\mathbf{m}}_\ell, \mathbf{a})$, and that we are able to sample $(y_i)_{i=1}^k \sim \hat{\mathbb{P}}^Y(\cdot \mid \mathbf{x}, \Bar{\mathbf{m}}_\ell, \mathbf{a})$ (see Appendix~\ref{app:hyper} for implementation details). Note that the outcome bound $\mathcal{D}\left(\mathbb{P}_+^Y(\cdot \mid \mathbf{x}, \Bar{\mathbf{m}}_\ell, \mathbf{a})\right)$ depends on the shifted distribution $\mathbb{P}_+^Y(\cdot \mid \mathbf{x}, \Bar{\mathbf{m}}_\ell, \mathbf{a})$ and not on the observational distribution $\mathbb{P}^Y(\cdot \mid \mathbf{x}, \Bar{\mathbf{m}}_\ell, \mathbf{a})$. Hence, we use an importance sampling approach to derive our estimators, which we outline in the following for the expectation functional and distributional effects. We denote the CDFs corresponding to $\mathbb{P}_+^Y(\cdot \mid \mathbf{x}, \Bar{\mathbf{m}}_\ell, \mathbf{a})$ and $\mathbb{P}^Y(\cdot \mid \mathbf{x}, \Bar{\mathbf{m}}_\ell, \mathbf{a})$ by $F_{\mathbb{P}_+^{Y \mid \mathbf{x}, \Bar{\mathbf{m}}_\ell, \mathbf{a}}}$ and $F_{\mathbb{P}^{Y \mid \mathbf{x}, \Bar{\mathbf{m}}_\ell, \mathbf{a}}}$, respectively.

\textbf{Expectation functional:}
For the expectation functional, we can rewrite the outcome bound as
\begin{align}
    \mathcal{D}\left(\mathbb{P}_+^Y(\cdot \mid \mathbf{x}, \Bar{\mathbf{m}}_\ell, \mathbf{a})\right) 
 &= \E_{Y \sim \mathbb{P}_+^{Y \mid \mathbf{x}, \Bar{\mathbf{m}}_\ell, \mathbf{a}}}[Y]\\ &= \E_{Y \sim \mathbb{P}^{Y \mid \mathbf{x}, \Bar{\mathbf{m}}_\ell, \mathbf{a}}}\left[Y \frac{\mathbb{P}_+^Y(Y \mid \mathbf{x}, \Bar{\mathbf{m}}_\ell, \mathbf{a})}{\mathbb{P}^Y(Y \mid \mathbf{x}, \Bar{\mathbf{m}}_\ell, \mathbf{a})}\right]\\
    &= \E_{Y \sim \mathbb{P}^{Y \mid \mathbf{x}, \Bar{\mathbf{m}}_\ell, \mathbf{a}}}\left[\frac{Y}{s_Y^+}\mathbbm{1}\left(Y \leq F_{\mathbb{P}^{Y \mid \mathbf{x}, \Bar{\mathbf{m}}_\ell, \mathbf{a}}}^{-1}(c_Y^+)\right) + \frac{Y}{s_Y^-}\mathbbm{1}\left(Y > F_{\mathbb{P}^{Y \mid \mathbf{x}, \Bar{\mathbf{m}}_\ell, \mathbf{a}}}^{-1}(c_Y^+)\right)\right]
\end{align}

to obtain the consistent estimator
\begin{equation}\label{eq:estimation_expectation}
    \widehat{\mathcal{D}\left(\mathbb{P}_+^Y(\cdot \mid \mathbf{x}, \Bar{\mathbf{m}}_\ell, \mathbf{a})\right)} =  \frac{1}{k}\sum_{i=1}^{\lfloor k c^+_Y \rfloor} \frac{y_i}{\hat{s}_Y^+} + \frac{1}{k}\sum_{i=\lfloor k c^+_Y \rfloor + 1}^{k} \frac{y_i}{\hat{s}_Y^-},
\end{equation}
where $(y_i)_{i=1}^k \sim \hat{\mathbb{P}}^Y(\cdot \mid \mathbf{x}, \Bar{\mathbf{m}}_\ell, \mathbf{a})$ are sampled from the estimated observational distribution. This corresponds to Eq.~\eqref{eq:estimation} in the main paper.

\textbf{Computational complexity:} Given trained models and a sample $(y_i)_{i=1}^k \sim \hat{\mathbb{P}}^Y(\cdot \mid \mathbf{x}, \Bar{\mathbf{m}}_\ell, \mathbf{a})$, our estimator from Eq.~\eqref{eq:estimation_expectation} has a complexity of $\mathcal{O}(k)$ as it only involves summing and quantile computation. To set this in relation to existing work, Algorithm~1 of \citeauthor{Jesson.2022}~\cite{Jesson.2022} has a complexity of  $\mathcal{O}(k n)$, where $n$ is a number of grid search points. This is demonstrated in Table~\ref{tab:weighted}, where we choose $k = n = 5000$. 

\textbf{Distributional effects:}
We now derive estimators for distributional effects, i.e., for quantile functionals $\mathcal{D}$ of the form
\begin{equation}
    \mathcal{D}\left(\mathbb{P}_+^Y(\cdot \mid \mathbf{x}, \Bar{\mathbf{m}}_\ell, \mathbf{a})\right) =
F_{\mathbb{P}_+^{Y \mid \mathbf{x}, \Bar{\mathbf{m}}_\ell, \mathbf{a}}}^{-1}(\alpha) 
\end{equation}
with $\alpha \in (0,1)$. We again use an importance sampling approach and rewrite

\begin{align}
F_{\mathbb{P}_+^{Y \mid \mathbf{x}, \Bar{\mathbf{m}}_\ell, \mathbf{a}}}(y) &= \E_{Y \sim \mathbb{P}_+^{Y \mid \mathbf{x}, \Bar{\mathbf{m}}_\ell, \mathbf{a}}}[\mathbbm{1}(Y \leq y)] \\
  &= \E_{Y \sim \mathbb{P}^{Y \mid \mathbf{x}, \Bar{\mathbf{m}}_\ell, \mathbf{a}}}\left[\mathbbm{1}(Y \leq y) \frac{\mathbb{P}_+^Y(Y \mid \mathbf{x}, \Bar{\mathbf{m}}_\ell, \mathbf{a})}{\mathbb{P}^Y(Y \mid \mathbf{x}, \Bar{\mathbf{m}}_\ell, \mathbf{a})}\right]\\
  &= \E_{Y \sim \mathbb{P}^{Y \mid \mathbf{x}, \Bar{\mathbf{m}}_\ell, \mathbf{a}}}\left[\frac{\mathbbm{1}\left(Y \leq \min\{y, F_{\mathbb{P}^{Y \mid \mathbf{x}, \Bar{\mathbf{m}}_\ell, \mathbf{a}}}^{-1}(c_Y^+)\}\right)}{s_Y^+} + \frac{\mathbbm{1}\left(F_{\mathbb{P}^{Y \mid \mathbf{x}, \Bar{\mathbf{m}}_\ell, \mathbf{a}}}^{-1}(c_Y^+) < Y \leq y\right)}{s_Y^-}\right].
\end{align}
Hence, we can sample $(y_i)_{i=1}^k \sim \hat{\mathbb{P}}^Y(\cdot \mid \mathbf{x}, \Bar{\mathbf{m}}_\ell, \mathbf{a})$ and obtain the consistent estimator
\begin{equation}\label{eq:estimation_quantile}
    \widehat{\mathcal{D}\left(\mathbb{P}_+^Y(\cdot \mid \mathbf{x}, \Bar{\mathbf{m}}_\ell, \mathbf{a})\right)} = \min_{\hat{F}_{\mathbb{P}_+^{Y \mid \mathbf{x}, \Bar{\mathbf{m}}_\ell, \mathbf{a}}}(y_i) \geq \alpha} y_i,
\end{equation}
where
\begin{equation}
    \hat{F}_{\mathbb{P}_+^{Y \mid \mathbf{x}, \Bar{\mathbf{m}}_\ell, \mathbf{a}}}(y) = \frac{1}{k}\sum_{i=1}^{\lfloor k c^+_Y \rfloor} \frac{\mathbbm{1}(y_i \leq y)}{\hat{s}_Y^+} + \frac{1}{k}\sum_{i=\lfloor k c^+_Y \rfloor + 1}^{k} \frac{\mathbbm{1}(y_i \leq y)}{\hat{s}_Y^-}.
\end{equation}

\clearpage

\section{Implementation and hyperparameter tuning details}\label{app:hyper}

All our experimental settings feature a continuous outcome $Y \in \R$ and (optionally) discrete mediators $M_i \in \N$. Hence, we need to estimate the conditional outcome density $\mathbb{P}^Y(\cdot \mid \mathbf{x}, \mathbf{m}, \mathbf{a})$ and conditional probability mass functions $\mathbb{P}^{M_i}(\cdot \mid \mathbf{x}, \Bar{\mathbf{m}}_{i-1}, \mathbf{a})$ in order to estimate our bounds with Eq.~\eqref{eq:estimation_expectation}, Eq.~\eqref{eq:estimation_quantile}, and Algorithm~\ref{alg:bounds} from the main paper.

\textbf{Conditional outcome density:} We use conditional normalizing flows (CNFs) \cite{Winkler.2019} for estimating the conditional density $\mathbb{P}^Y(\cdot \mid \mathbf{x}, \Bar{\mathbf{m}}_\ell, \mathbf{a})$. Normalizing flows (NFs) model a distribution $\mathbb{P}^Y$ of a target variable $Y$ by transforming a simple base distribution $\mathbb{P}^U$ (e.g., standard normal) of a latent variable $U$ through an invertible transformation $Y = f_\theta(U)$, where $\theta$ denotes learnable parameters \cite{Rezende.2015}. In order to estimate the  \emph{conditional} density $\mathbb{P}^Y(\cdot \mid \mathbf{x}, \Bar{\mathbf{m}}_\ell, \mathbf{a})$, we leverage CNFs, that is, we define the parameters $\theta$ as an output of a \emph{hyper network} $\theta = g_\eta(\mathbf{x}, \Bar{\mathbf{m}}_\ell, \mathbf{a})$ with learnable parameters $\eta$. Given a sample $\{\mathbf{x}_i, \Bar{\mathbf{m}}_{\ell, i}, \mathbf{a}_i, y_i\}_{i=1}^n$, we learn $\eta$ by maximizing the log-likelihood
\begin{align}
 \ell(\eta) &= \sum_{i=1}^n \log \left( {f_{g_\eta(\mathbf{x}_i, \Bar{\mathbf{m}}_{\ell, i}, \mathbf{a}_i)}}_\# \mathbb{P}^U (y_i) \right) \\
 &\overset{(\ast)}{=} \sum_{i=1}^n \log \left(\mathbb{P}^U \left(f_{g_\eta(\mathbf{x}_i, \Bar{\mathbf{m}}_{\ell, i}, \mathbf{a}_i)}^{-1}(y_i)\right) \right) + \log\left(\left| \frac{\mathrm{d}}{\mathrm{d} y} f_{g_\eta(\mathbf{x}_i, \Bar{\mathbf{m}}_{\ell, i}, \mathbf{a}_i)}^{-1}(y_i)\right|\right),
\end{align}
where ${f_{g_\eta(\mathbf{x}_i, \Bar{\mathbf{m}}_{\ell, i}, \mathbf{a}_i)}}_\# \mathbb{P}^U (y_i)$ denotes the (push-forward) density induced by $f_{g_\eta(\mathbf{x}_i, \Bar{\mathbf{m}}_{\ell, i}, \mathbf{a}_i)}$ on $\R$ and $(\ast)$ follows from the change-of-variables theorem for invertible transformations.

In our implementation, we use neural spline flows. That is, we model the invertible transformation $f_{\theta}$ via a spline flow as described in \cite{Durkan.2019}. We use a feed-forward neural network for the hypernetwork $g_\eta(\mathbf{x}, \Bar{\mathbf{m}}_\ell, \mathbf{a})$ with 2 hidden layers, ReLU activation functions, and linear output. We set the latent distribution $\mathbb{P}^U$ to a standard normal distribution $\mathcal{N}(0, 1)$. For training, we use the Adam optimizer \cite{Kingma.2015}.

\textbf{Conditional probability mass functions:} The estimation of the conditional probability mass function $\mathbb{P}^{M_i}(\cdot \mid \mathbf{x}, \Bar{\mathbf{m}}_{i-1}, \mathbf{a})$ is a standard (multi-class) classification problem. We use feed-forward neural networks with 3 hidden layers, ReLU activation functions, and softmax output. For training, we minimize the standard cross-entropy loss by using the Adam optimizer \cite{Kingma.2015}. We use the same approach to estimate the propensity scores $\mathbb{P}^{\mathbf{A}}(\cdot \mid \mathbf{x})$ for discrete treatments $\mathbf{A}$.

\textbf{Hyperparameter tuning:}

We perform hyperparameter tuning for our experiments on synthetic data using grid search on a validation set. The tunable parameters and search ranges are shown in Table~\ref{tab:hyper}. For reproducibility purposes, we report the selected hyperparameters as \emph{.yaml} files.\footnote{Code is available in the supplementary materials and at \href{https://github.com/DennisFrauen/SharpCausalSensitivity}{https://github.com/DennisFrauen/SharpCausalSensitivity}.}

\begin{table}[ht]
\caption{Hyperparameter tuning details.}
\centering
\label{tab:hyper}
\footnotesize
\begin{tabular}{lll}
\toprule
\textsc{Model} & \textsc{Tunable parameters} & \textsc{Search range} \\
\midrule
CNFs &Epochs & $50$ \\
&Batch size &$32$, $64$, $128$  \\
 & Learning rate & $0.0005$, $0.001$, $0.005$\\
 & Hidden layer size (hyper network) & $5$, $10$, $20$, $30$\\
  & Number of spline bins & $2$, $4$, $8$ \\
 
\midrule
Feed forward neural networks & Epochs & $30$ \\
&Batch size &$32$, $64$, $128$  \\
& Learning rate & $0.0005$, $0.001$, $0.005$ \\
 & Hidden layer size  & $5$, $10$, $20$, $30$ \\
  & Dropout probability & $0$, $0.1$\\
\bottomrule
\end{tabular}
\end{table}
\clearpage

\section{Experiments using synthetic data}\label{app:sim}

Here we provide details regarding our experiments using synthetic data.  This includes data generation, obtaining oracle sensitivity parameters, and details regarding experimental evaluation.

\textbf{Overall data-generating process:}
We first describe the overall data-generating process which we use as a basis to generate data for all settings (i)-(iii) and binary/continuous treatments. We construct an SCM following the causal graph in Fig.~\ref{fig:causal_graphs} (right) from the main paper. We have an observed confounder $X \in \R$, a (binary or continuous) treatment $A$, two binary mediators $M_1$ and $M_2$, and a continuous outcome $Y \in \R$. Furthermore, we consider three unobserved confounders: (i) $U_{M_1}$ confounding the $A$-$M_1$ relationship, (ii)~$U_{M_2}$ confounding the $A$--$M_2$ relationship, and (iii)~$U_{Y}$ confounding the $A$--$Y$ relationship. Our data-generating process is inspired by synthetic experiments from previous works on causal sensitivity analysis \cite{Jesson.2021, Kallus.2019}. We start the data-generating process by sampling
\begin{equation}
    X \sim \mathrm{Uniform}[-1, 1], \quad \text{and} \quad U_{M_1}, U_{M_2}, U_{Y} \overset{(i.i.d)}{\sim} \textrm{Bernoulli}(p=0.5)
\end{equation}
Depending on the setting, we either generate binary treatments $A \in \{0,1\}$ via
\begin{equation}
    A \sim \mathrm{Bernoulli}(\mathrm{sigmoid(3x + \gamma_{M_1} u_{M_1} + \gamma_{M_2} u_{M_2} + \gamma_{Y} u_{Y})})
\end{equation}
or continuous treatments $A \in (0,1)$ via
\begin{equation}\label{eq:treat_assign_continuous}
    A \sim \mathrm{Beta}(\alpha, \beta) \text{ with } \alpha = \beta = 2 + x + \gamma_{M_1} (u_{M_1} - 0.5) + \gamma_{M_2} (u_{M_2} - 0.5) + \gamma_{Y} (u_{Y} - 0.5),
\end{equation}
where $\gamma_{M_1}$, $\gamma_{M_2}$, and $\gamma_{Y}$ are parameters controlling the strength of unobserved confounding. We then generate the mediators and outcome via functional assignments
\begin{equation}
    M_1 = f_{M_1}(X, A, U_{M_1}, \epsilon_{M_1}), \quad M_2 = f_{M_2}(X, A, M_1, U_{M_2}, \epsilon_{M_2})
\end{equation}
and
\begin{equation}
    Y = f_{Y}(X, A, M_1, M_2 U_{Y}, \epsilon_{Y}),
\end{equation}
where $\epsilon_{M_1}, \epsilon_{M_2}, \epsilon_{Y} \sim \mathcal{N}(0,1)$ are standard normal distributed noise variables. The functional assignments are defined as
\begin{equation}
    f_{M_1}(x, a, u_{M_1}, \epsilon_{M_1}) = \mathbbm{1}\left\{a \sin(x) + (1-a) \sin(4x) + \rho_{M_1} \left((u_{M_1} - 0.5) + \epsilon_{M_1} \right) > 0 \right\}
\end{equation}
for $M_1$,

\begin{align}
    f_{M_2}(x, a, m_1 u_{M_2}, \epsilon_{M_2}) = &\mathbbm{1}\{a \, m_1 \sin(x) + (1-a) \, m_1 \sin(4x) \\
    &- a \, (1-m_1)  \sin(x) - (1-a) \, (1-m_1) \sin(4x)\\    
    &+ \rho_{M_2} \left((u_{M_2} - 0.5) + \epsilon_{M_2} \right) > 0 \}
\end{align}
for $M_2$, and

\begin{align}
    f_{Y}(x, a, m_1, m_2, u_{Y}, \epsilon_{Y}) = &a \, m_1 \, m_2 \sin(x) + (1-a) \, m_1 \, m_2 \sin(4x) \\
    & + a \, m_1 \, (1-m_2) \sin(8x) + (1-a) \, m_1 \, (1 - m_2) \sin(x) \\
    & - a \, (1-m_1) \, m_2 \sin(x) - (1-a) \, (1-m_1) \, m_2 \sin(4x) \\
    & - a \, (1-m_1) \, (1-m_2) \sin(8x) \\
    & - (1-a) \, (1-m_1) \, (1 - m_2) \sin(x) \\ 
    &+ \rho_{Y} \left((u_{Y} - 0.5) + \epsilon_{Y} \right)
\end{align}
for $Y$, where $\rho_{M_1}$, $\rho_{M_2}$, and $\rho_{Y}$ are parameters that control the noise level.

\textbf{Settings (i)-(iii):} We define the settings (i)-(iii) in Sec.~\ref{sec:exp} via specific values of the confounding parameters $\gamma_{M_1}$, $\gamma_{M_2}$, and $\gamma_{Y}$, and the noise parameters $\rho_{M_1}$, $\rho_{M_2}$, and $\rho_{Y}$ (see Table~\ref{tab:settings}). Note that the settings are defined to mimic the causal graphs in Fig.~\ref{fig:causal_graphs} from the main paper. For example, the only unobserved confounder in setting~(i) is $U_Y$, which means that we can ignore the mediators and use our data to evaluate our bounds for settings without mediators.
\begin{table}[ht]
\caption{Definition of settings (i)-(iii).}
\centering
\label{tab:settings}
\footnotesize
\begin{tabular}{lllllll}
\toprule
 & $\gamma_{M_1}$ & $\gamma_{M_2}$ & $\gamma_{Y}$ & $\rho_{M_1}$ & $\rho_{M_2}$ & $\rho_{Y}$ \\
\midrule
Setting (i), binary $A$ & $0$ & $0$ & $1.5$ & $0.2$ & $0.2$ & $2$\\
Setting (i), continuous $A$ & $0$ & $0$ & $1.5$ & $0.2$ & $0.2$ & $1$ \\
Setting (ii) & $1.5$ & $0$ & $1.5$ & $1$ & $0.2$ & $1$ \\
Setting (iii)& $1.5$ & $1.5$ & $1.5$ & $0.2$ & $0.2$ & $1$  \\
\bottomrule
\end{tabular}
\end{table}

\textbf{Obtaining $\Gamma^\ast_W$:} We provide details regarding our approach to obtain oracle sensitivity parameters $\Gamma^\ast_W$ for all $W \in \{M_1, M_2, Y\}$. By sampling from our previously defined SCM, we can obtain Monte Carlo estimates of the  GMSM density ratio
\begin{equation}
   r(u_W, x, a) = \frac{\mathbb{P}(u_W \mid x, a)}{\mathbb{P}(u_W \mid x, a)} \overset{(\ast)}{=} \frac{\mathbb{P}(a \mid x, u_W)}{\mathbb{P}(a \mid x)}
\end{equation}
for all $u_W \in \{0,1\}$, $a$, and $x$, where $(\ast)$ follows from Bayes' theorem. We then define
\begin{equation}
    r_W^+(x, a) = \max_{u_W \in \{0,1\}} r(u_W, x, a) \quad \text{and} \quad r_W^-(x, a) = \min_{u_W \in \{0,1\}} r(u_W, x, a).
\end{equation}
For binary treatment settings, we define parameters $\Gamma_W^+ = \Gamma_W^+(x, a)$ and $\Gamma_W^- = \Gamma_W^-(x, a)$ that attain the density ratio bounds in the MSM from Eq.~\eqref{eq:ratio_bound_msm}, i.e.
\begin{equation}
   r_W^+(x, a) = \frac{1}{(1 - {\Gamma_W^+}^{-1}) \mathbb{P}(a \mid x) + {\Gamma_W^+}^{-1}} \quad \text{and} \quad  r_W^-(x, a) = \frac{1}{(1 - \Gamma_W^-) \mathbb{P}(a \mid x) + \Gamma_W^-}.
\end{equation}
For continuous treatment settings, we define $\Gamma_W^+$ and $\Gamma_W^-$ as the sensitivity parameters that attain the density ratio bounds in the CMSM from Eq.~\eqref{eq:ratio_bound_cmsm}, i.e.
\begin{equation}
   r_W^+(x, a) = \Gamma_W^+ \quad \text{and} r_W^-(x, a) = \frac{1}{\Gamma_W^-}.
\end{equation}
Finally, we define $\Gamma^\ast_W$ as the parameter corresponding to the maximum possible violation of unconfoundedness, i.e., 
\begin{equation}
    \Gamma^\ast_W = \max \{\Gamma_W^+, \Gamma_W^-\}
\end{equation}
By definition of $\Gamma^\ast_W$, our bounds should contain the oracle causal effect whenever we choose sensitivity parameters $\Gamma_W \geq \Gamma^\ast_W$ for all $W \in \{M_1, M_2, Y\}$.

\textbf{Weighted GMSM experiment (Table~\ref{tab:weighted}):} For our experiment in Table~\ref{tab:weighted}, we modify the treatment assignment from Eq.~\eqref{eq:treat_assign_continuous} in setting~(i) to
\begin{equation}
    A \sim \mathrm{Beta}(\alpha, \beta)
\end{equation}
with
\begin{equation}
\alpha = \beta = 2 + x + \mathbbm{1}(x<0) \left( \gamma_{M_1} (u_{M_1} - 0.5) + \gamma_{M_2} (u_{M_2} - 0.5) + \gamma_{Y} (u_{Y} - 0.5)\right).
\end{equation}
Hence, unobserved confounding only affects individuals with $x < 0$. We then compare our bounds under the CMSM with our bounds under a weighted CMSM (Def.~\ref{def:gmsm_weighted_def}) with weight function $q_Y(x) = \mathbbm{1}(x > 0)$. 

We also provide results for the bounds from \citeauthor{Jesson.2022}~\cite{Jesson.2022} under the CMSM. We implemented the grid search algorithm from \citeauthor{Jesson.2022}~\cite{Jesson.2022} and used $5,000$ samples for the search space. For a fair comparison, we also used $5,000$ samples for our importance sampling estimators. Note that the method  from \citeauthor{Jesson.2022}~\cite{Jesson.2022} requires estimation of both the conditional outcome density $\mathbb{P}^Y(\cdot \mid \mathbf{x}, \mathbf{a})$ and the conditional expectation $\E[Y \mid \mathbf{x}, \mathbf{a}]$. For $\mathbb{P}^Y(\cdot \mid \mathbf{x}, \mathbf{a})$, we use the same (normalizing flow-based) estimator as for our bounds. For $\E[Y \mid \mathbf{x}, \mathbf{a}]$, we train a separate feed-forward neural network with linear output activation for continuous outcomes. Implementation and hyperparameter tuning are done the same way as described in Appendix~\ref{app:hyper} for the feed-forward neural networks.

\clearpage

\section{Experiment using real-world data}\label{app:real}

\textbf{Data:} We consider a setting from the COVID-19 pandemic where mobility in Switzerland (captured through telephone movement) was monitored to obtain a leading predictor of case growth. In total, $\sim 1,5$~billion trips were monitored from 10 February through 26 April 2020. All data are recorded across 26 different states (cantons). For our analysis, we use an aggregated, de-identified, and pre-processed version of the data provided by \citeauthor{Persson.2021}~\cite{Persson.2021}. {The preprocessed data is publically available at \href{https://github.com/jopersson/covid19-mobility/blob/main/Data}{https://github.com/jopersson/covid19-mobility/blob/main/Data}. The code for our analysis is available at \href{https://github.com/DennisFrauen/SharpCausalSensitivity}{https://github.com/DennisFrauen/SharpCausalSensitivity}.} 

We consider a binary treatment $A$ in the form of a stay-at-home order, which bans gatherings with more than 5 people. We encode mobility as a single binary mediator $M$, which is 1 if the total number of trips on a specific day is larger than the median number of trips during the entire time horizon, and 0 otherwise. Our outcome is the 10-day-ahead case growth. We include the following observed variables as confounders $\mathbf{X}$: the canton code (swiss member state at a subnational level), the canton population, and whether the weekday is a Monday or not. After removing the first $10$ recorded days for each canton (due to spillover effects from other countries) and rows with missing values, we obtain a dataset with $n=3276$ observations.

\textbf{Analysis:}  We perform a causal sensitivity analysis for the natural directed effect (NDE) of the stay-at-home order $A$ on the case growth $Y$. That is, we are interested in the part of the causal effect of $A$ on $Y$ that is not explained by the path via $M$ (i.e., through the change in mobility). The NDE in an SCM $\mathcal{M}$ is defined as
\begin{equation}
    NDE(\mathcal{M}) = \int Q(\mathbf{x}, (a_1=0,a_2=1), \mathcal{M})-Q(\mathbf{x}, (a_1=0,a_2=0), \mathcal{M}) \diff \mathbf{x}.
\end{equation}

Fig.~\ref{fig:real} (main paper) shows causal sensitivity analysis for violations of the unconfoundedness between treatment $A$ and mediator $M$. Hence, we consider a GMSM for binary treatments with sensitivity parameters $\Gamma_M$ and $\Gamma_Y = 0$. For each $\Gamma_M$, we estimate our bounds for the expectation functional and the treatment combinations $\Bar{\mathbf{a}} =(0, 1)$ and $\Bar{\mathbf{a}} = (0, 0)$. We then obtain bounds for the NDE as described in Appendix~\ref{app:average}.

\clearpage

\section{Additional experimental results}\label{app:exp}

Here, we provide additional experimental results on synthetic data that extend the results from Sec.~\ref{sec:exp} in the main paper. We provide (i)~results for additional treatment combinations and (ii)~results for distributional effects. We follow the same experimental setup described in Sec.~\ref{sec:exp} (main paper) and Appendix.~\ref{app:sim}.

\subsection{Additional treatment combinations}
Results for additional treatment combinations are shown in Fig.~\ref{fig:sim_binary_additional} (binary treatment settings) and Fig.~\ref{fig:sim_continuous_additional} (continuous treatment settings). The results are similar to those in Sec.~\ref{sec:exp} in the main paper and empirically confirm the validity of our bounds. Hence, our results remain valid independently of the choice of treatment combination.

\begin{figure}[ht]
 \centering
\begin{subfigure}{0.33\textwidth}
  \centering
  \includegraphics[width=1\linewidth]{figures/plot_gamma_binary_m1.pdf}
\end{subfigure}%
\begin{subfigure}{0.33\textwidth}
  \centering
  \includegraphics[width=1\linewidth]{figures/plot_gamma_binary_m1.pdf}
\end{subfigure}
\begin{subfigure}{0.33\textwidth}
  \centering
  \includegraphics[width=1\linewidth]{figures/plot_gamma_binary_m2.pdf}
\end{subfigure}
\begin{subfigure}{0.33\textwidth}
  \centering
  \includegraphics[width=1\linewidth]{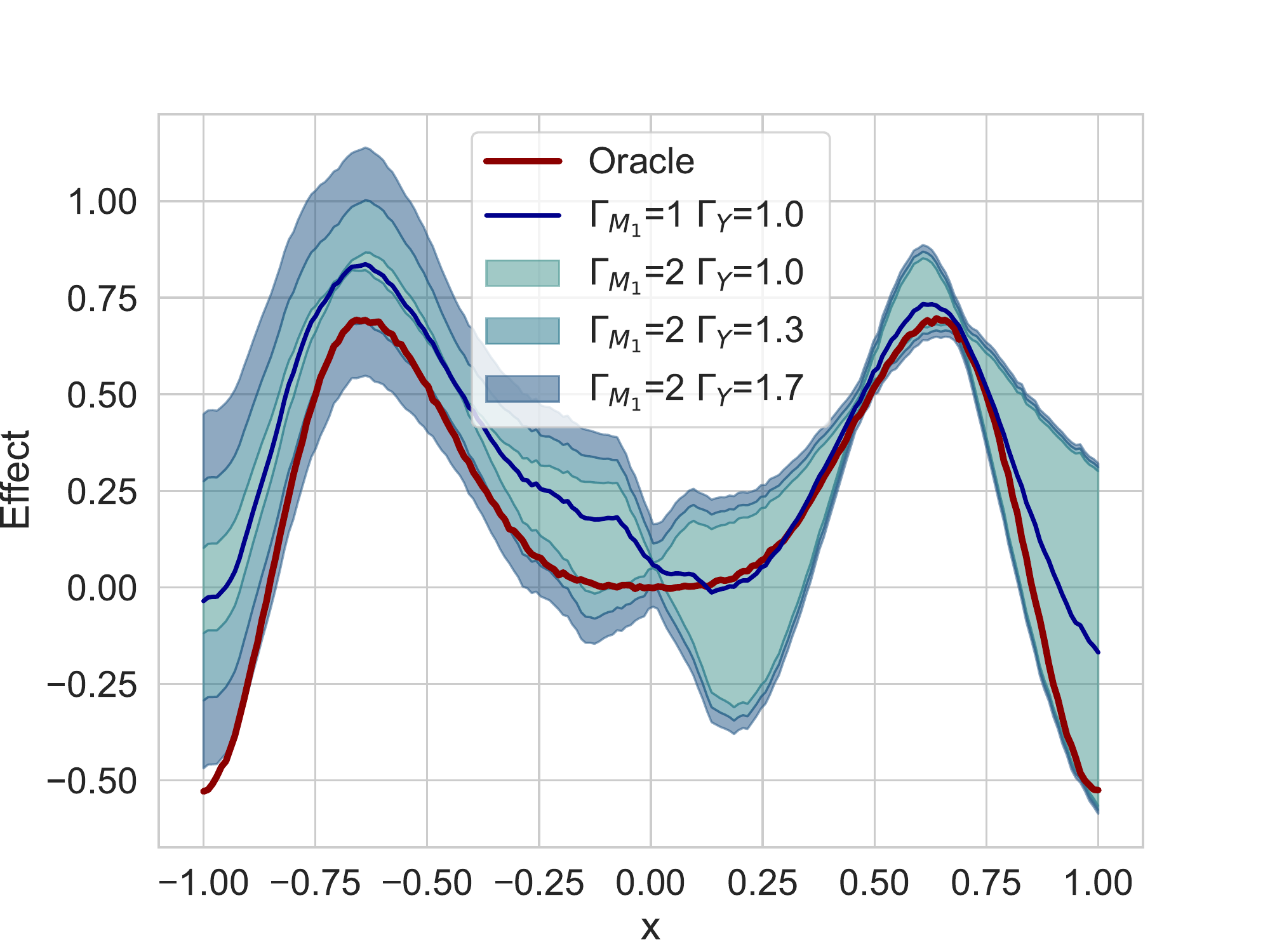}
\end{subfigure}%
\begin{subfigure}{0.33\textwidth}
  \centering
  \includegraphics[width=1\linewidth]{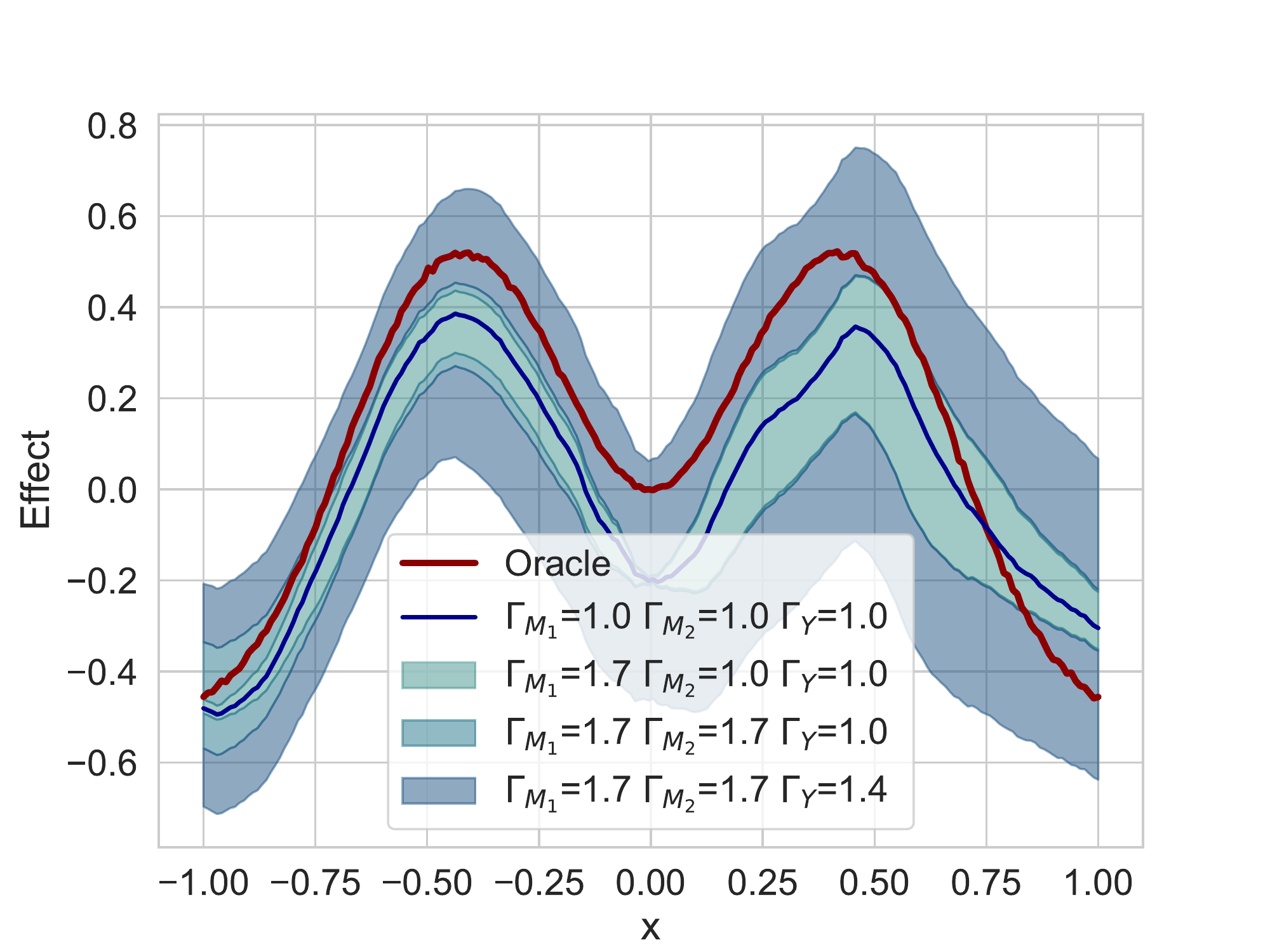}
\end{subfigure}
\begin{subfigure}{0.33\textwidth}
  \centering
  \includegraphics[width=1\linewidth]{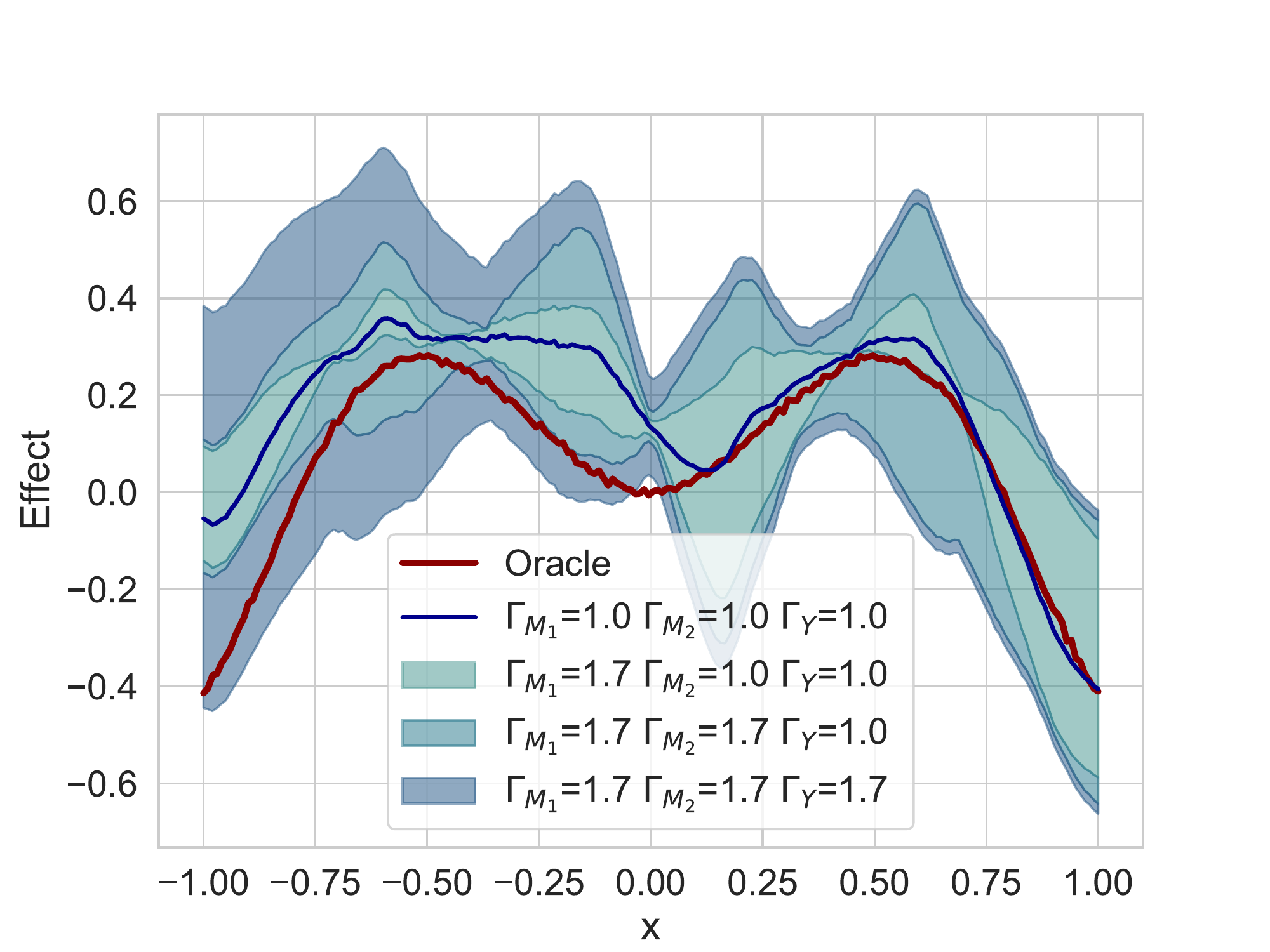}
\end{subfigure}
\caption{Results for additional treatments in the binary treatment setting. From left to right is shown: setting (ii) with $\Bar{\mathbf{a}} = (0, 1)$, setting (iii) with $\Bar{\mathbf{a}} = (0, 1, 0)$, and setting (iii) with $\Bar{\mathbf{a}} = (0, 0, 1)$. The top row shows the oracle sensitivity parameter $\Gamma^\ast_W$ (depending on $x$), and the bottom row shows the bounds.}
\label{fig:sim_binary_additional}
\end{figure}

\begin{figure}[ht]
 \centering
\begin{subfigure}{0.33\textwidth}
  \centering
  \includegraphics[width=1\linewidth]{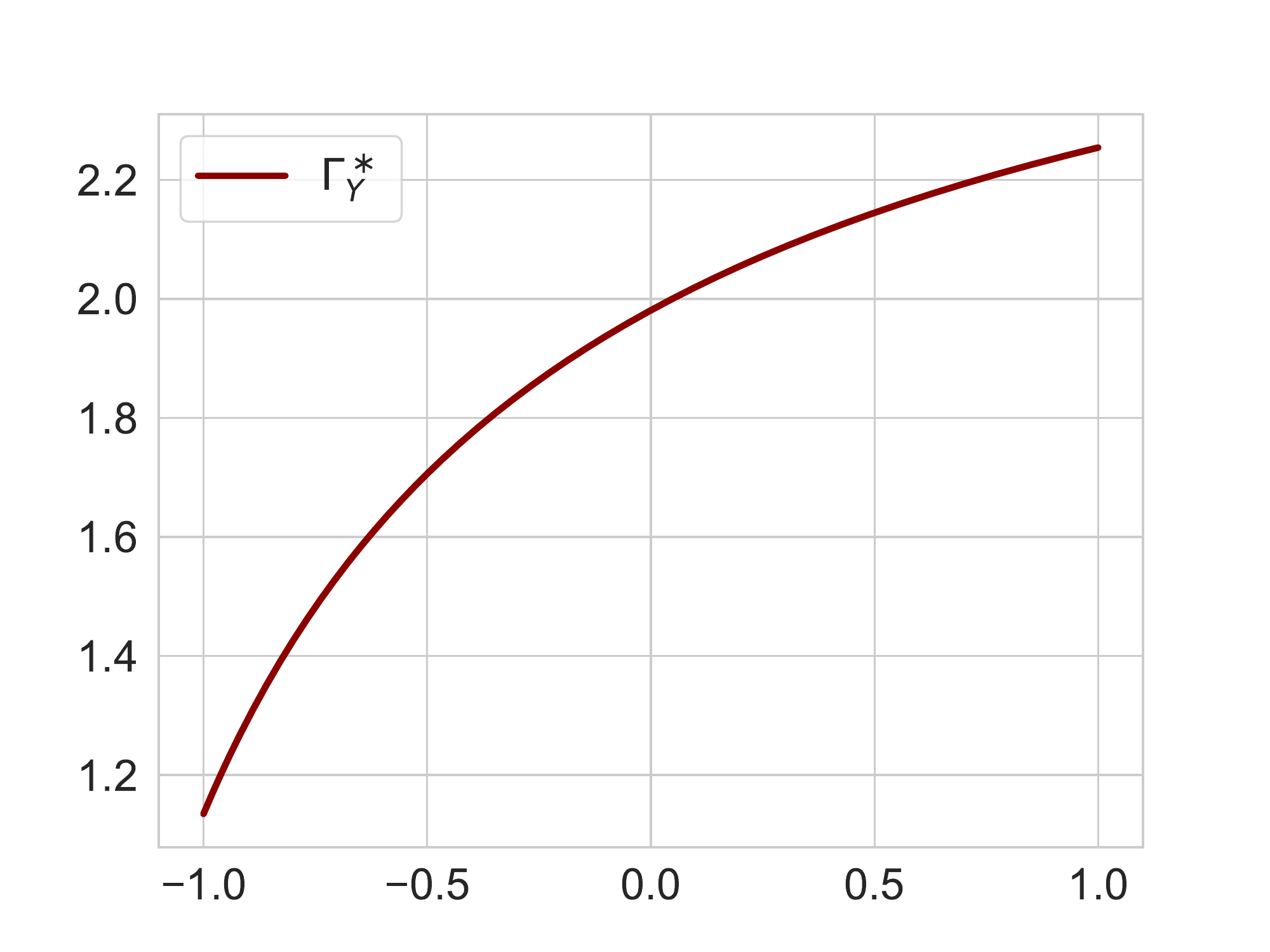}
\end{subfigure}%
\begin{subfigure}{0.33\textwidth}
  \centering
  \includegraphics[width=1\linewidth]{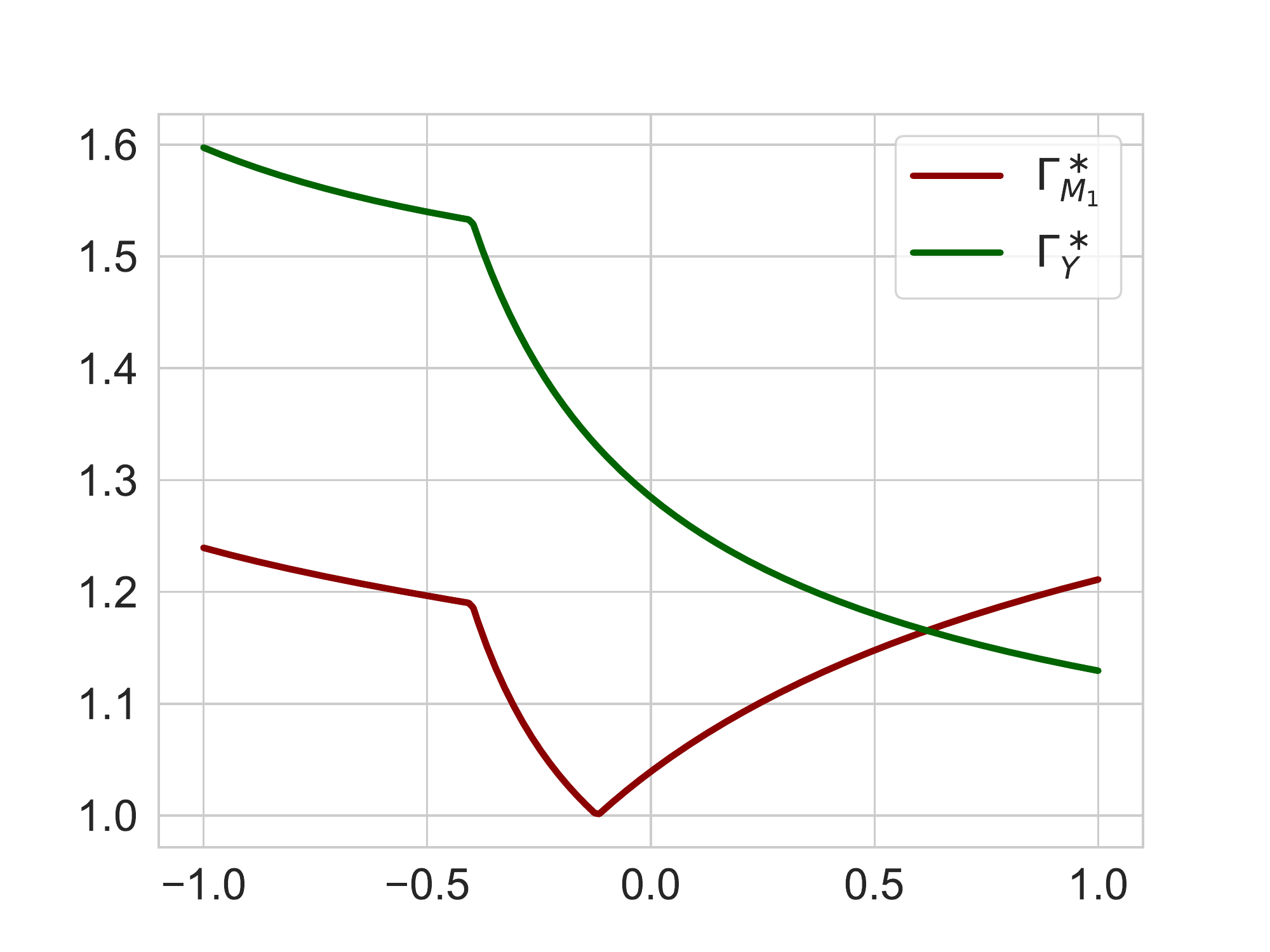}
\end{subfigure}
\begin{subfigure}{0.33\textwidth}
  \centering
  \includegraphics[width=1\linewidth]{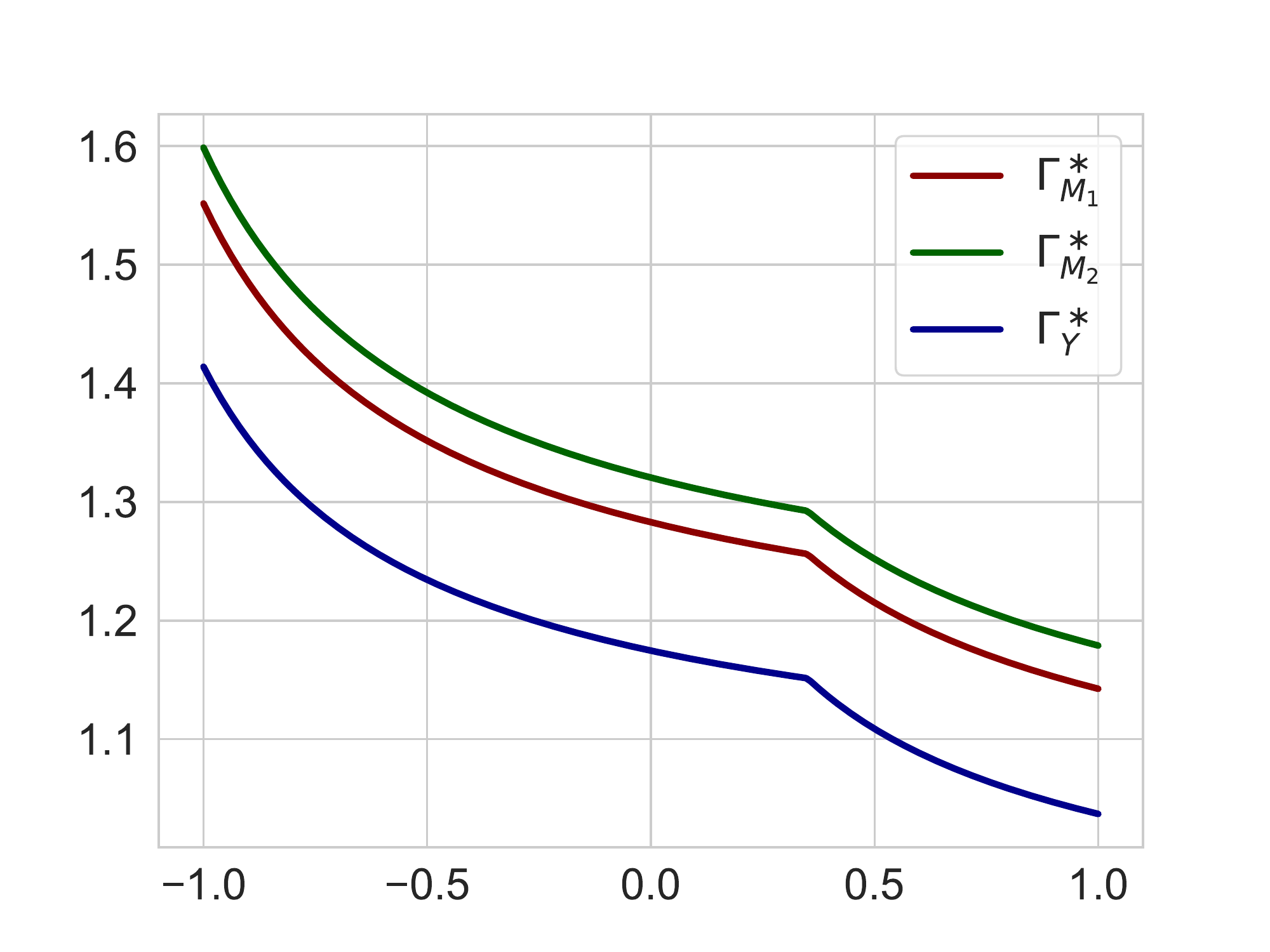}
\end{subfigure}
\begin{subfigure}{0.33\textwidth}
  \centering
  \includegraphics[width=1\linewidth]{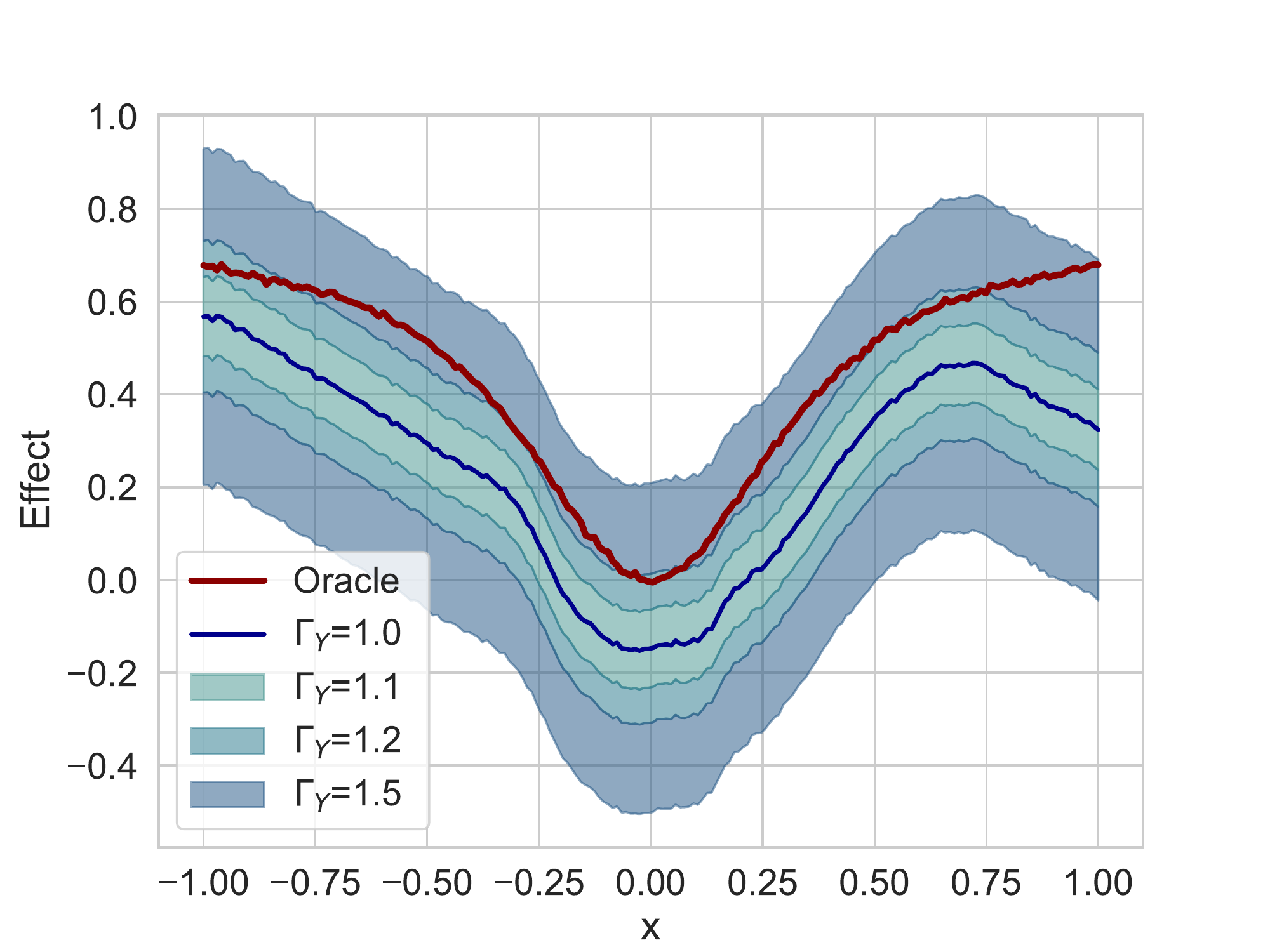}
\end{subfigure}%
\begin{subfigure}{0.33\textwidth}
  \centering
  \includegraphics[width=1\linewidth]{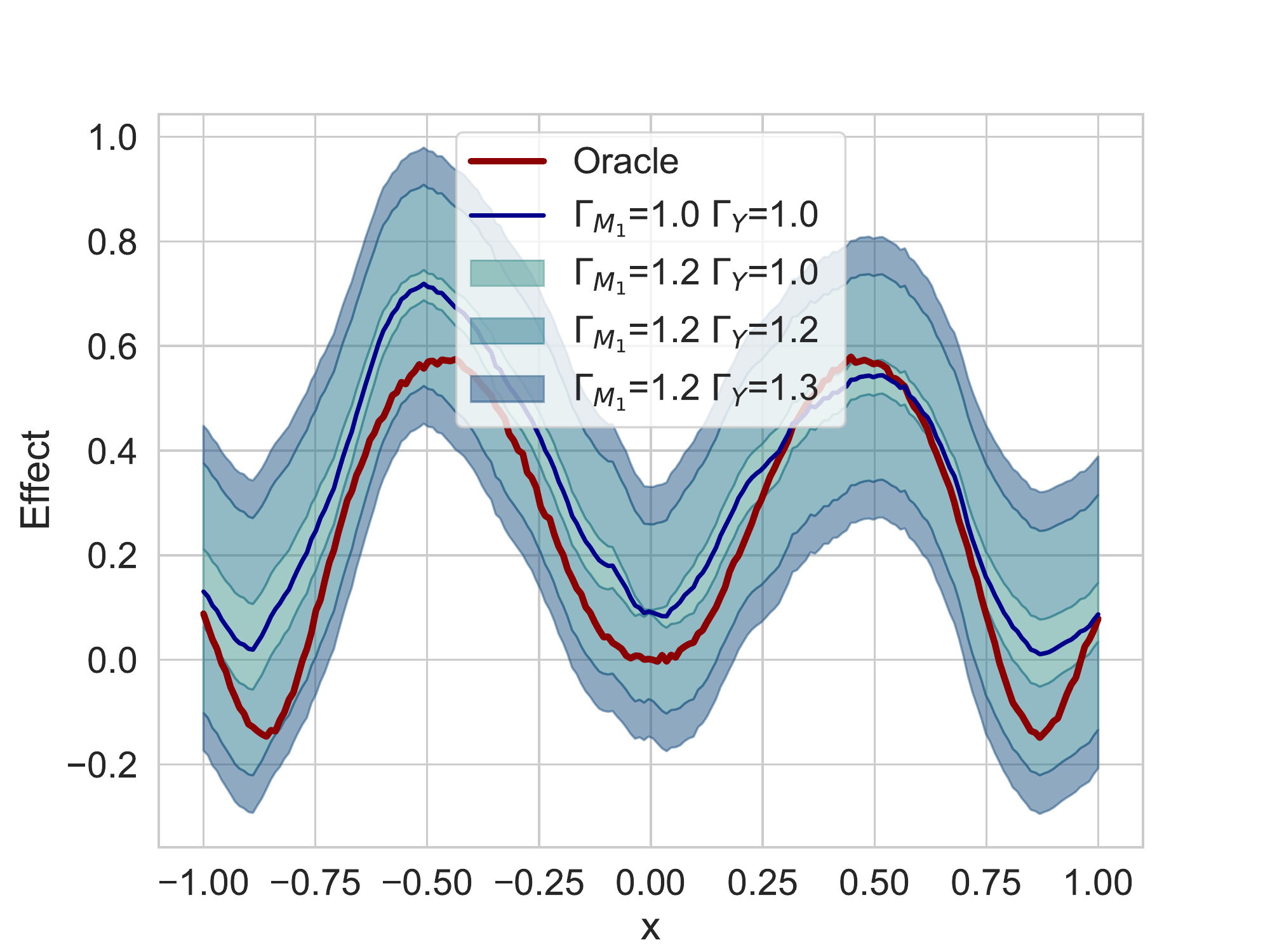}
\end{subfigure}
\begin{subfigure}{0.33\textwidth}
  \centering
  \includegraphics[width=1\linewidth]{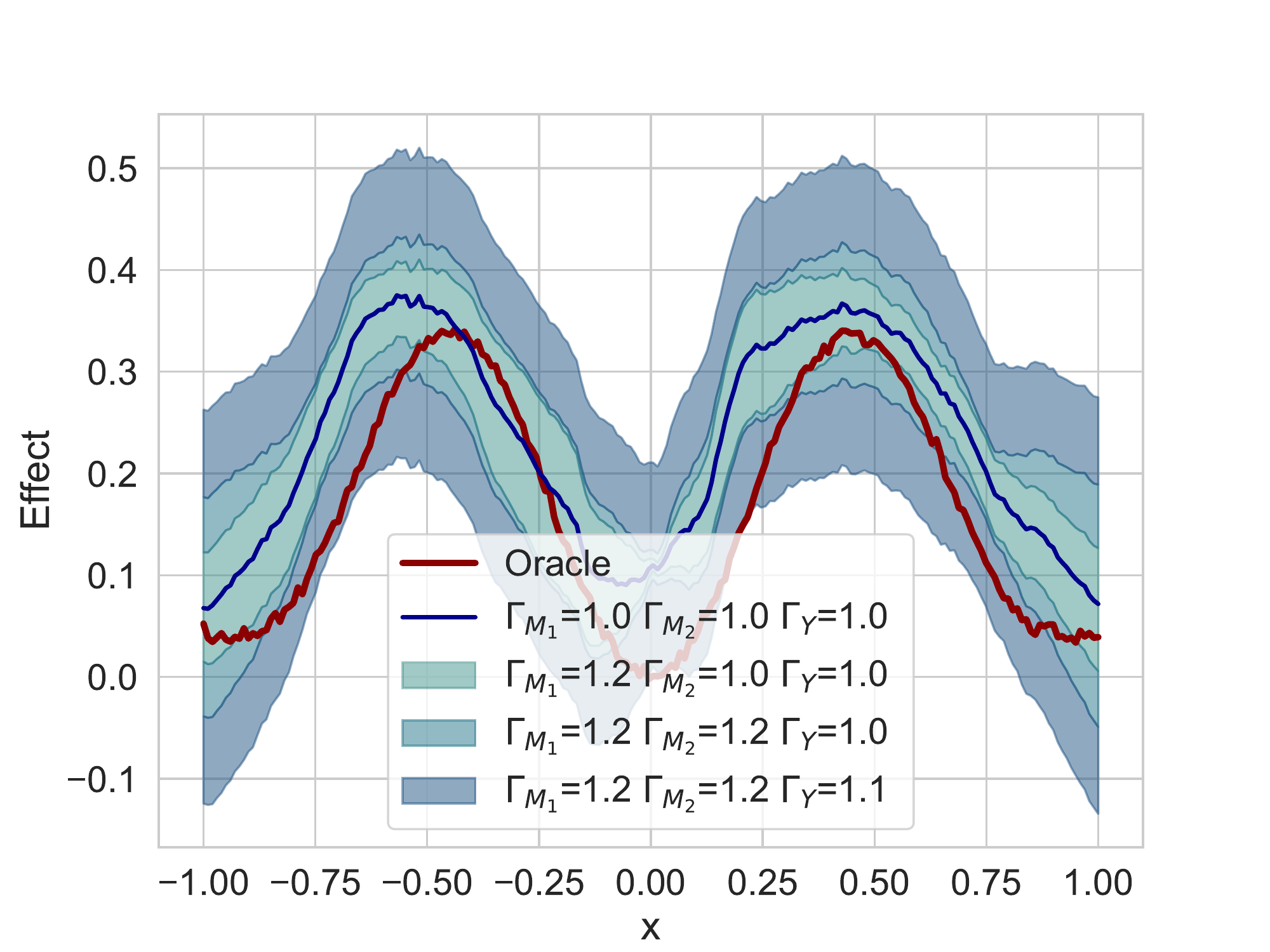}
\end{subfigure}
\caption{Results for additional treatments in the continuous treatment setting. From left to right is shown: setting (i) with $\Bar{\mathbf{a}} = 0.9$, setting (ii) with $\Bar{\mathbf{a}} = (0.2, 0.4)$, and setting (iii) with $\Bar{\mathbf{a}} = (0.4, 0.5, 0.3)$. The top row shows the oracle sensitivity parameter $\Gamma^\ast_W$ (depending on $x$), and the bottom row shows the bounds.}
\label{fig:sim_continuous_additional}
\end{figure}

\subsection{Distributional effects}

We also provide results for distributional effects, that is, we choose the $\alpha$-quantile functional $\mathcal{D}\left(\mathbb{P}_+^Y(\cdot \mid \mathbf{x}, \Bar{\mathbf{m}}_\ell, \mathbf{a})\right) = F_{\mathbb{P}_+^{Y \mid \mathbf{x}, \Bar{\mathbf{m}}_\ell, \mathbf{a}}}^{-1}(\alpha)$. Here, we consider three quantiles with $\alpha = 0.7$, $\alpha = 0.5$ (median), and $\alpha = 0.3$. We use our importance sampling estimator derived in Appendix.~\ref{app:est} (Eq.~\eqref{eq:estimation_quantile}) to estimate our bounds. The results are shown in Fig.~\ref{fig:sim_binary_quantile} (binary treatment) and Fig.~\ref{fig:sim_continuous_quantile} (continuous treatment) for settings (i)-(iii) from Fig.~\ref{fig:causal_graphs} in the main paper. Again, our bounds cover the underlying oracle effect in regions where the chosen sensitivity parameters $\Gamma_W$ are larger than the oracle sensitivity parameters $\Gamma^\ast_W$. This also confirms empirically the validity of our bounds for distributional effects.

\begin{figure}[ht]
 \centering
\begin{subfigure}{0.33\textwidth}
  \centering
  \includegraphics[width=1\linewidth]{figures/plot_gamma_binary.pdf}
\end{subfigure}%
\begin{subfigure}{0.33\textwidth}
  \centering
  \includegraphics[width=1\linewidth]{figures/plot_gamma_binary_m1.pdf}
\end{subfigure}
\begin{subfigure}{0.33\textwidth}
  \centering
  \includegraphics[width=1\linewidth]{figures/plot_gamma_binary_m2.pdf}
\end{subfigure}
\begin{subfigure}{0.33\textwidth}
  \centering
  \includegraphics[width=1\linewidth]{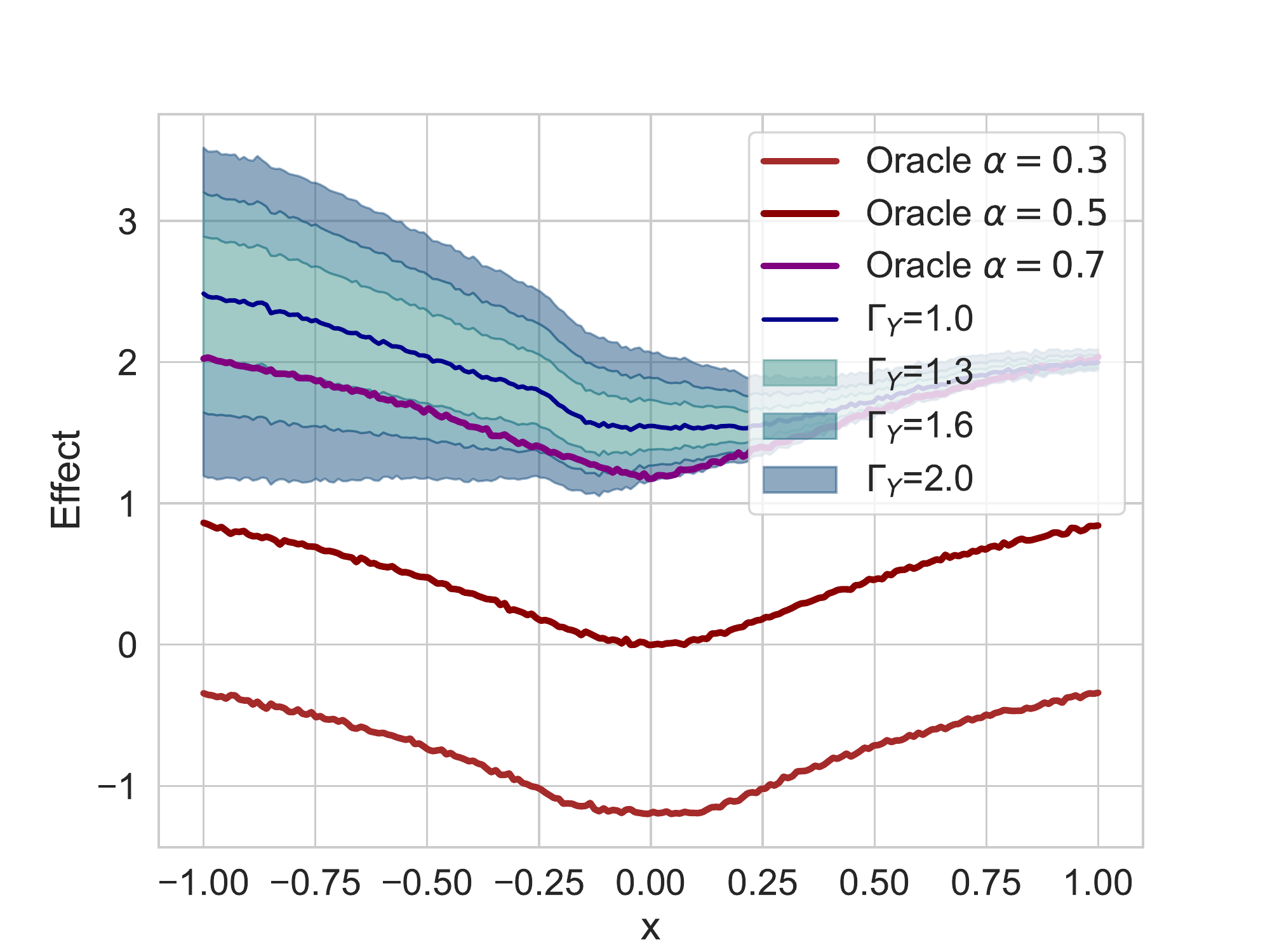}
\end{subfigure}%
\begin{subfigure}{0.33\textwidth}
  \centering
  \includegraphics[width=1\linewidth]{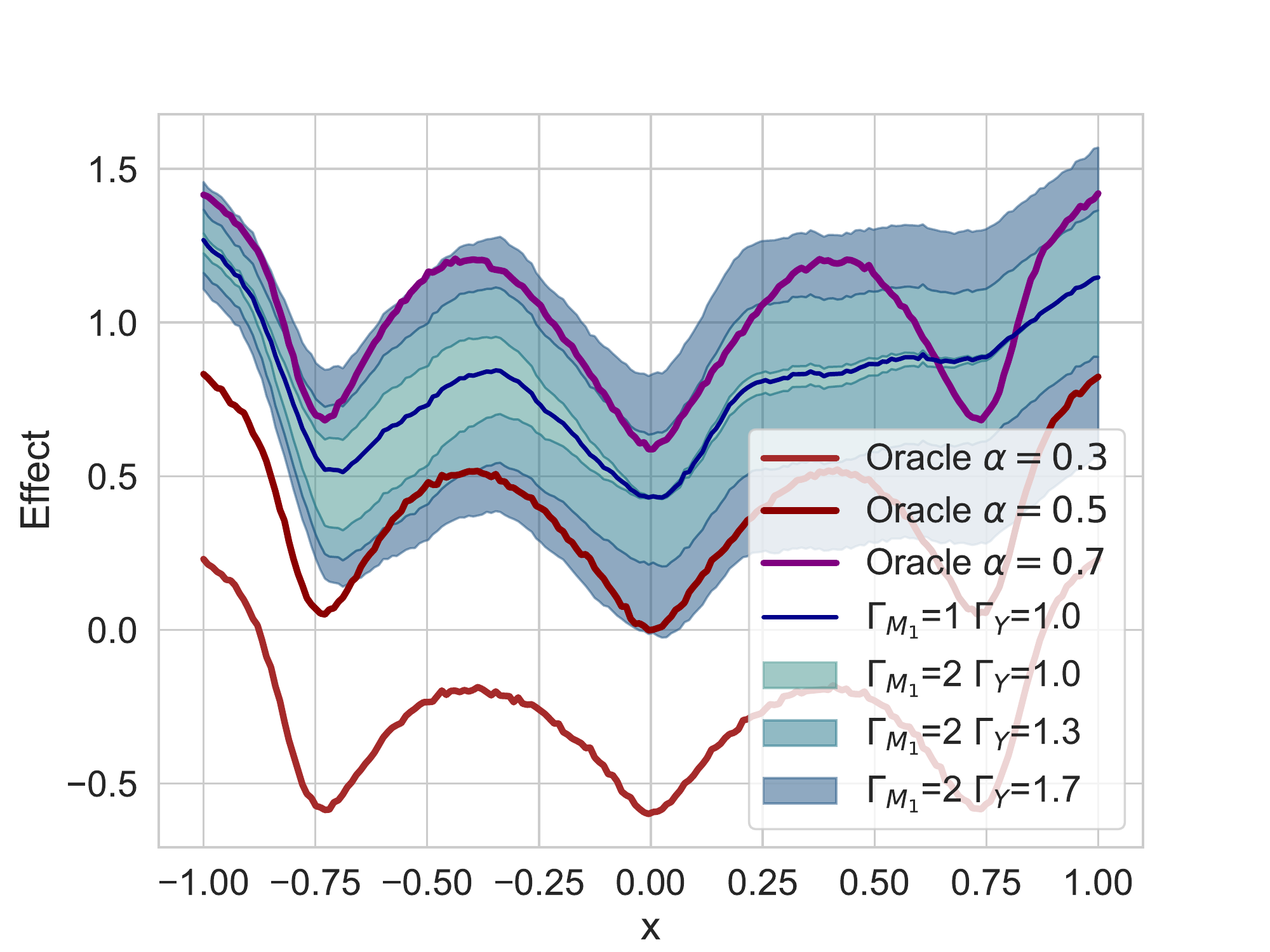}
\end{subfigure}
\begin{subfigure}{0.33\textwidth}
  \centering
  \includegraphics[width=1\linewidth]{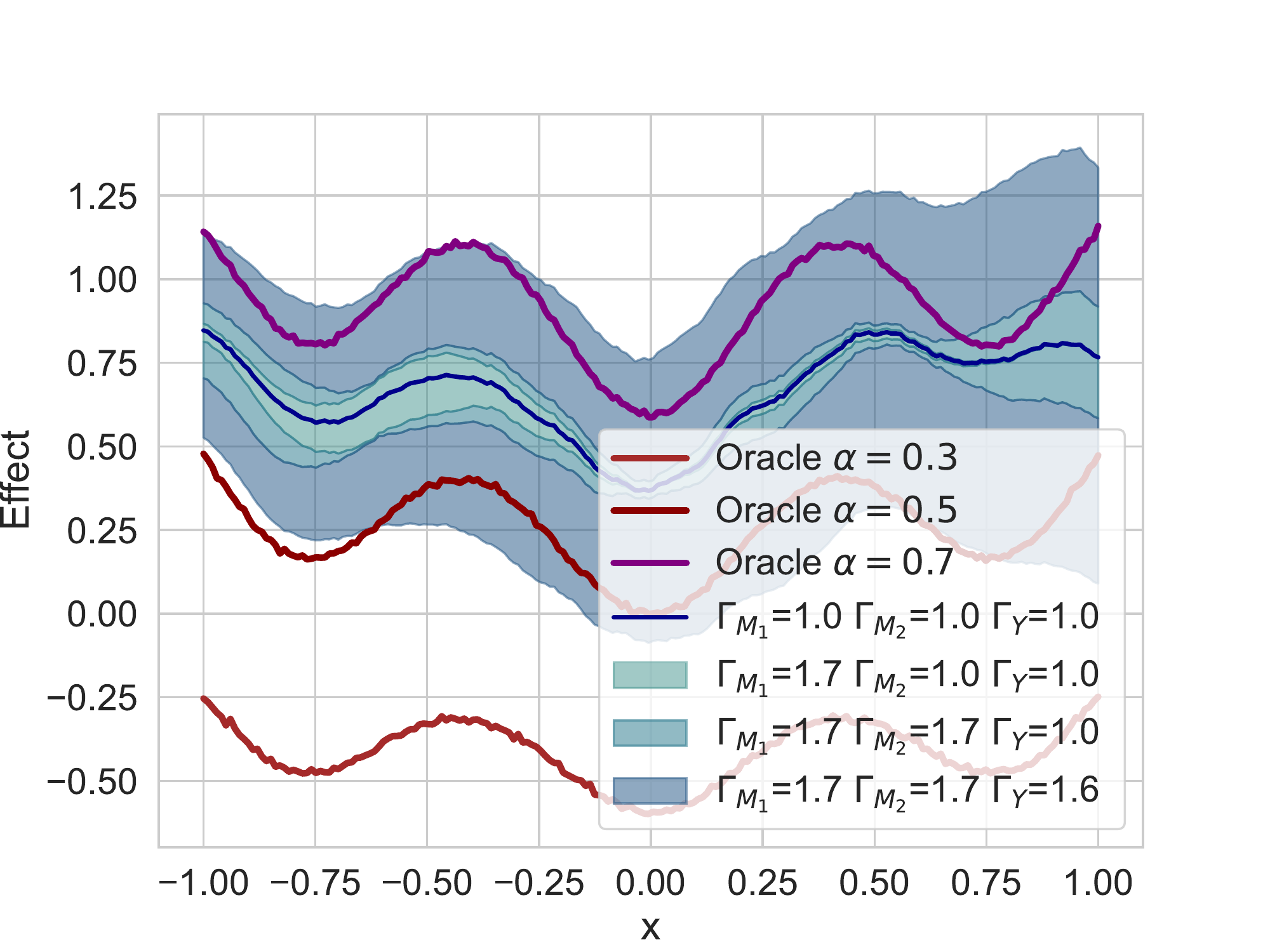}
\end{subfigure}
\begin{subfigure}{0.33\textwidth}
  \centering
  \includegraphics[width=1\linewidth]{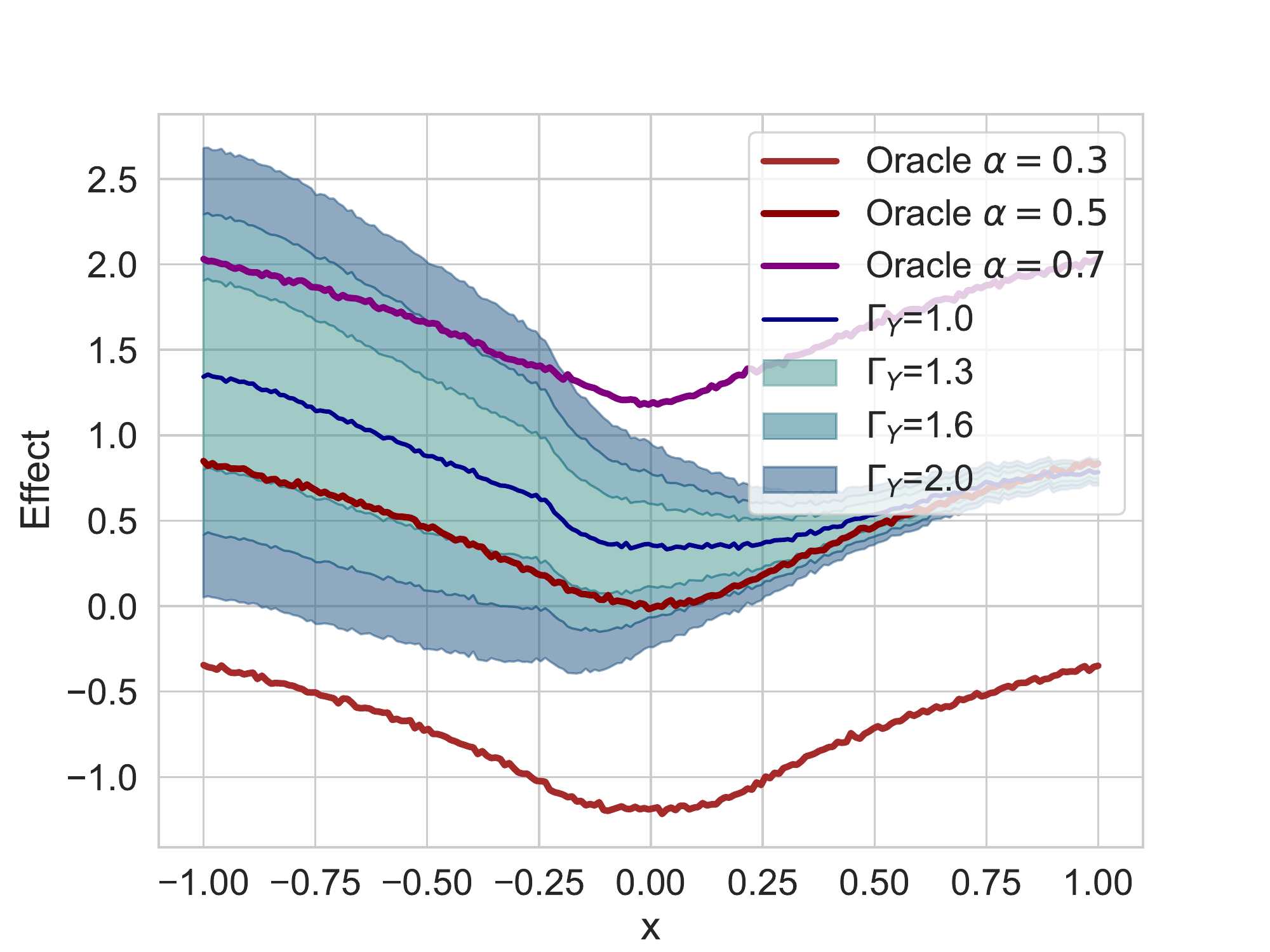}
\end{subfigure}%
\begin{subfigure}{0.33\textwidth}
  \centering
  \includegraphics[width=1\linewidth]{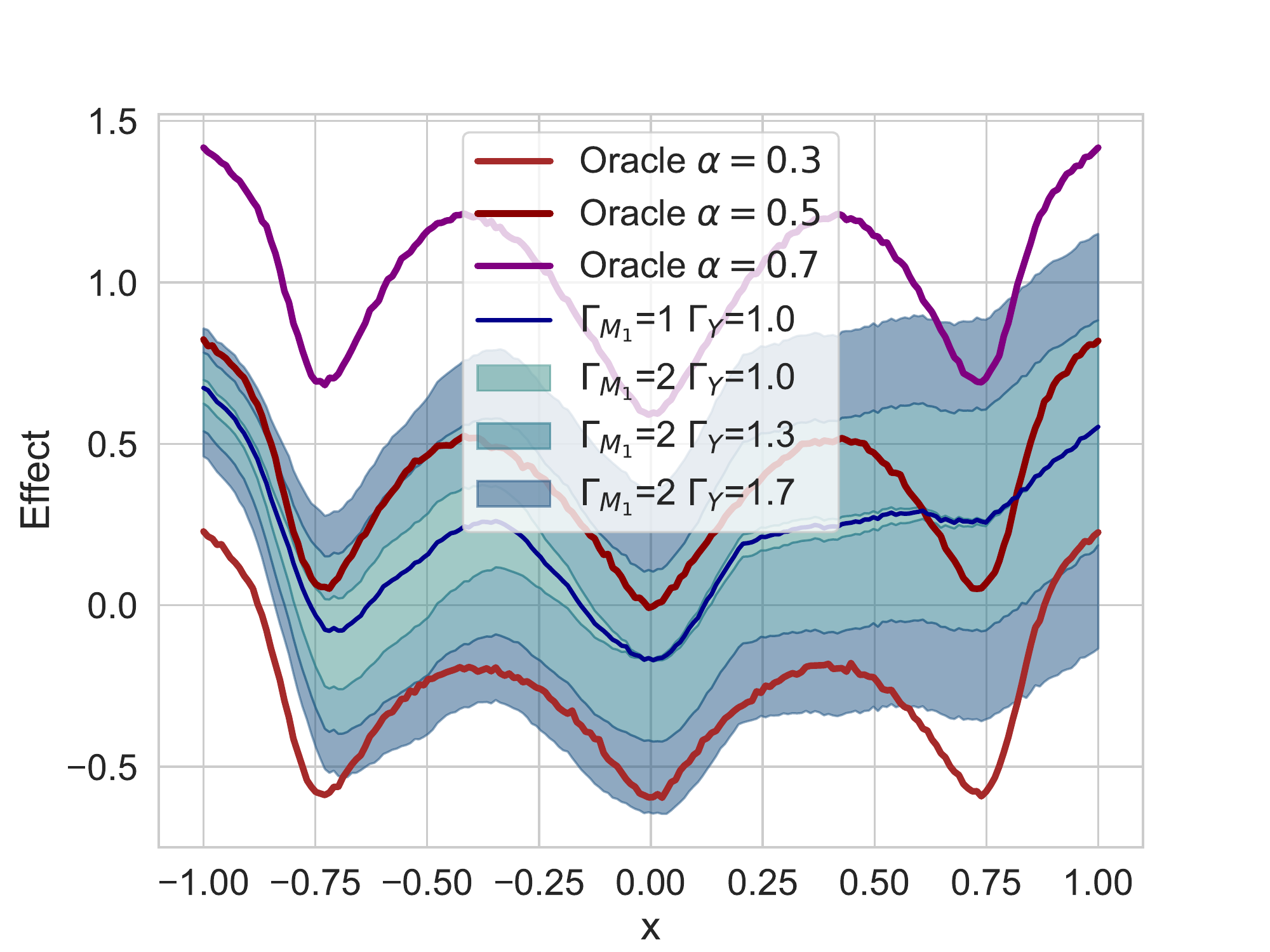}
\end{subfigure}
\begin{subfigure}{0.33\textwidth}
  \centering
  \includegraphics[width=1\linewidth]{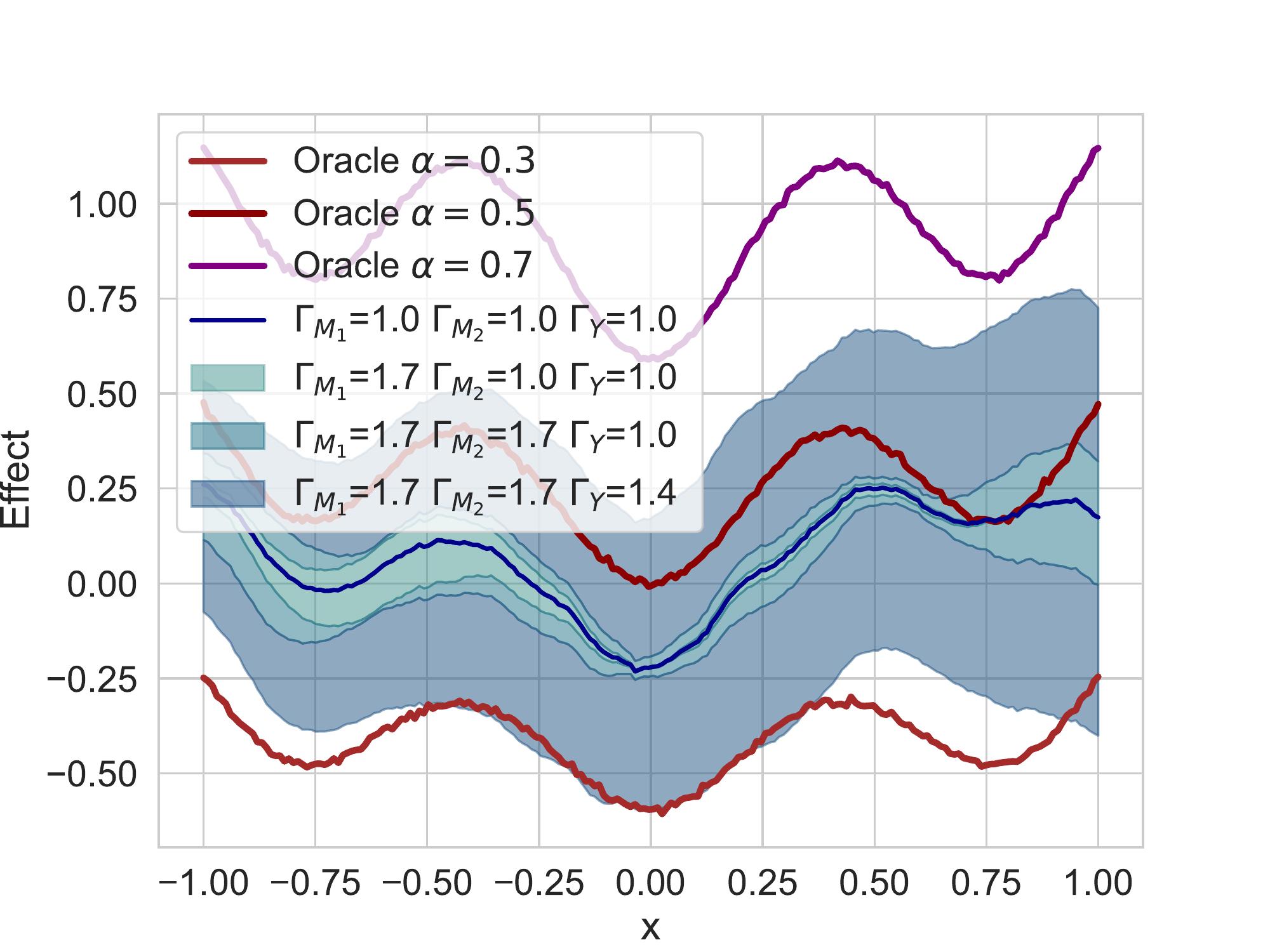}
\end{subfigure}
\begin{subfigure}{0.33\textwidth}
  \centering
  \includegraphics[width=1\linewidth]{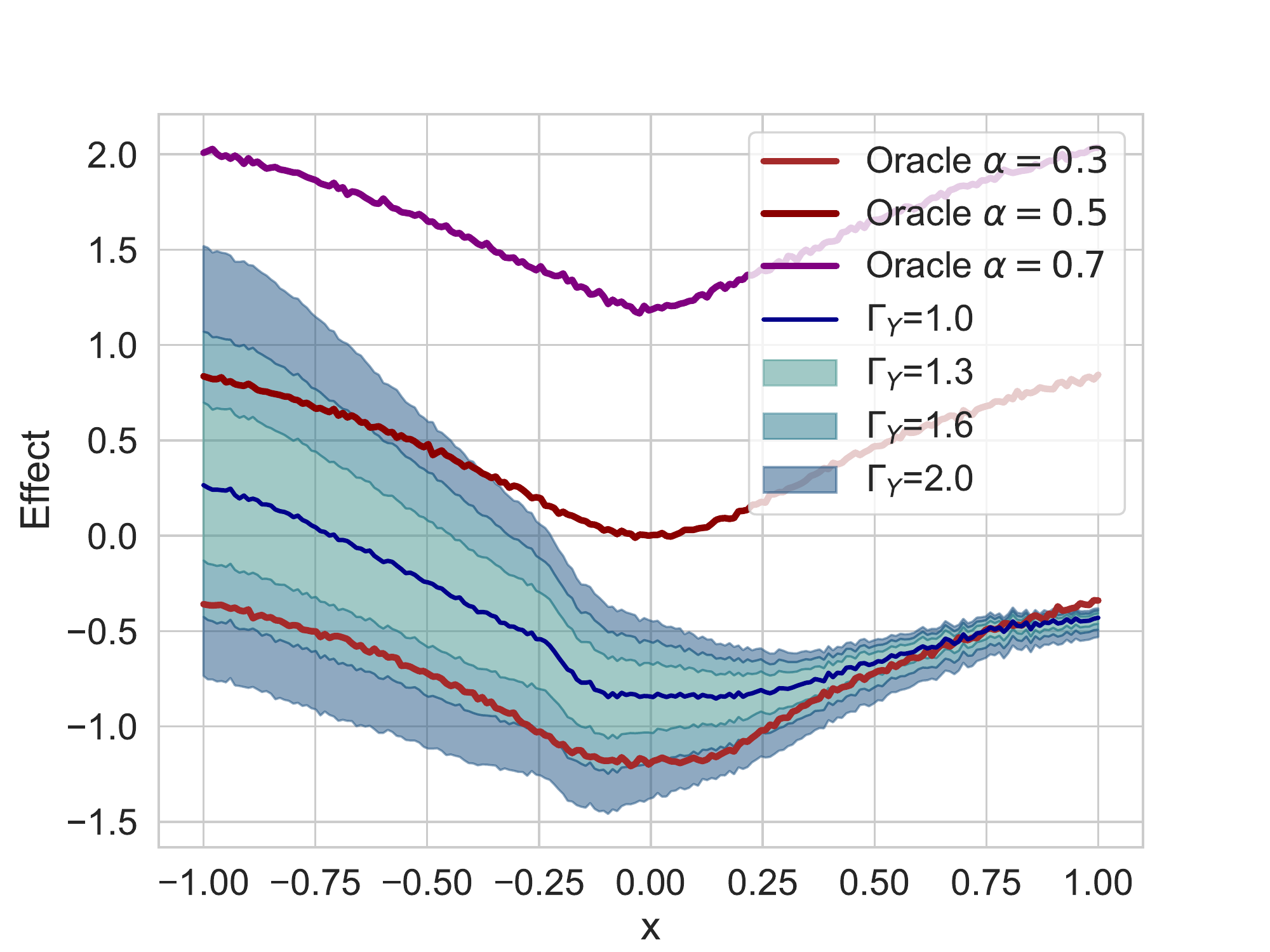}
\end{subfigure}%
\begin{subfigure}{0.33\textwidth}
  \centering
  \includegraphics[width=1\linewidth]{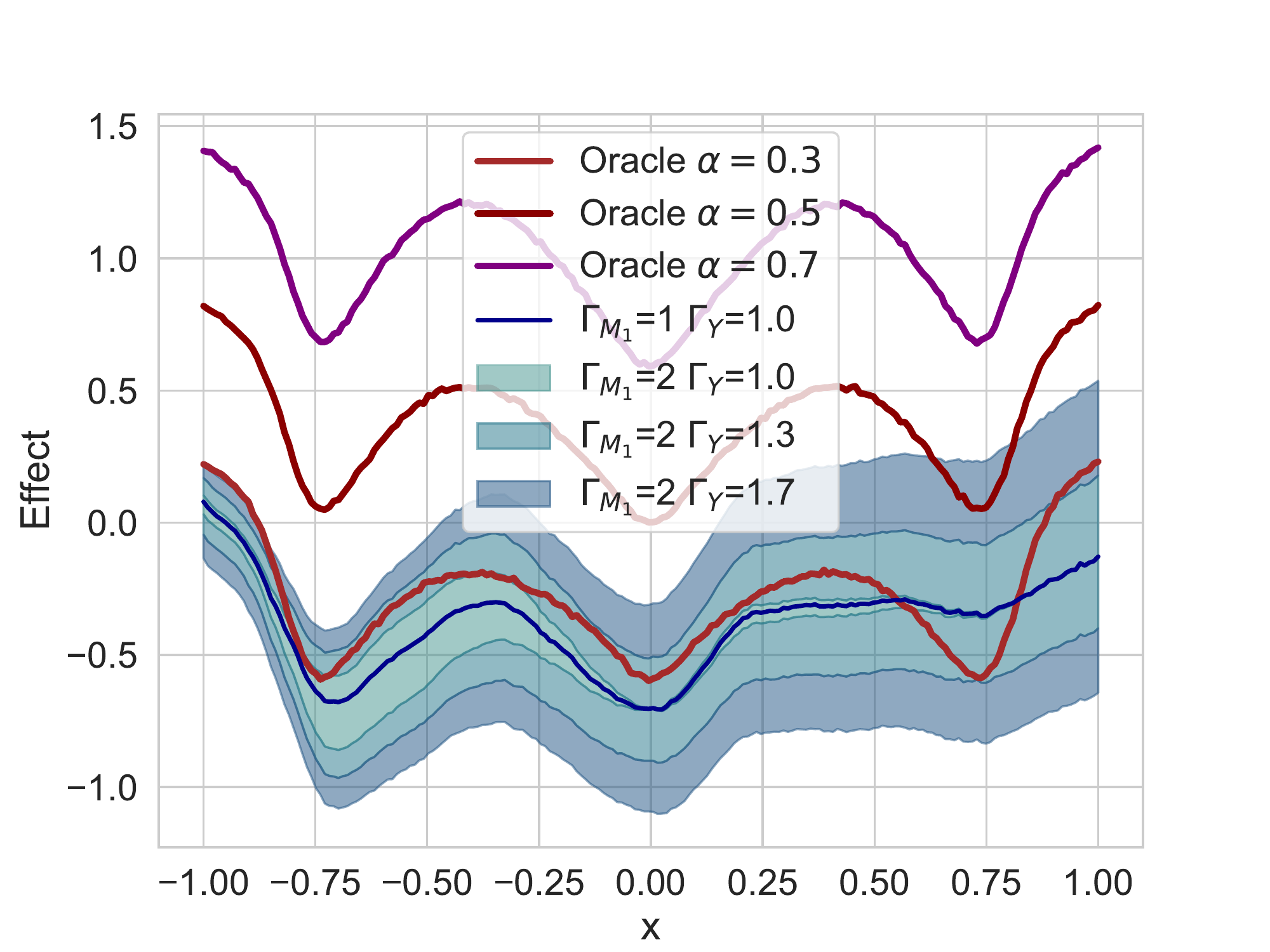}
\end{subfigure}
\begin{subfigure}{0.33\textwidth}
  \centering
  \includegraphics[width=1\linewidth]{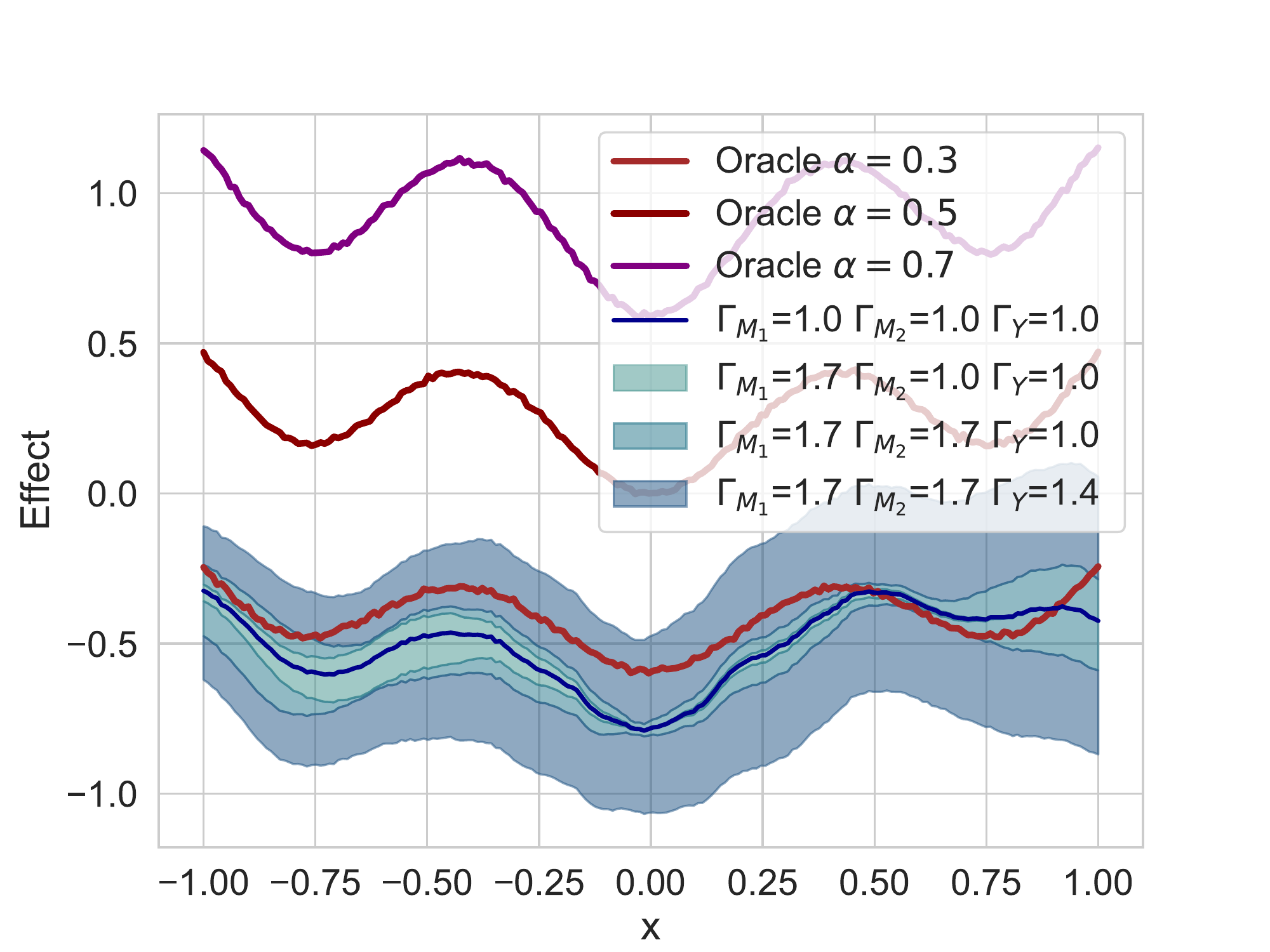}
\end{subfigure}
\caption{Results for distributional effects in the binary treatment setting using the same treatments as in Fig.~\ref{fig:sim_binary} (main paper). Settings (i)--(iii) are ordered from left to right. The top row shows the oracle sensitivity parameter $\Gamma^\ast_W$ (depending on $x$). Rows $2$, $3$, and $4$ show the bounds for the $\alpha$-quantiles of the interventional distribution with $\alpha = 0.7$, $\alpha = 0.5$, and $\alpha = 0.3$.}
\label{fig:sim_binary_quantile}
\end{figure}

\begin{figure}[ht]
 \centering
\begin{subfigure}{0.33\textwidth}
  \centering
  \includegraphics[width=1\linewidth]{figures/plot_gamma_continuous_0.6.pdf}
\end{subfigure}%
\begin{subfigure}{0.33\textwidth}
  \centering
  \includegraphics[width=1\linewidth]{figures/plot_gamma_continuous_m1_0.9_0.5.pdf}
\end{subfigure}
\begin{subfigure}{0.33\textwidth}
  \centering
  \includegraphics[width=1\linewidth]{figures/plot_gamma_continuous_m2_0.2_0.4_0.5.pdf}
\end{subfigure}
\begin{subfigure}{0.33\textwidth}
  \centering
  \includegraphics[width=1\linewidth]{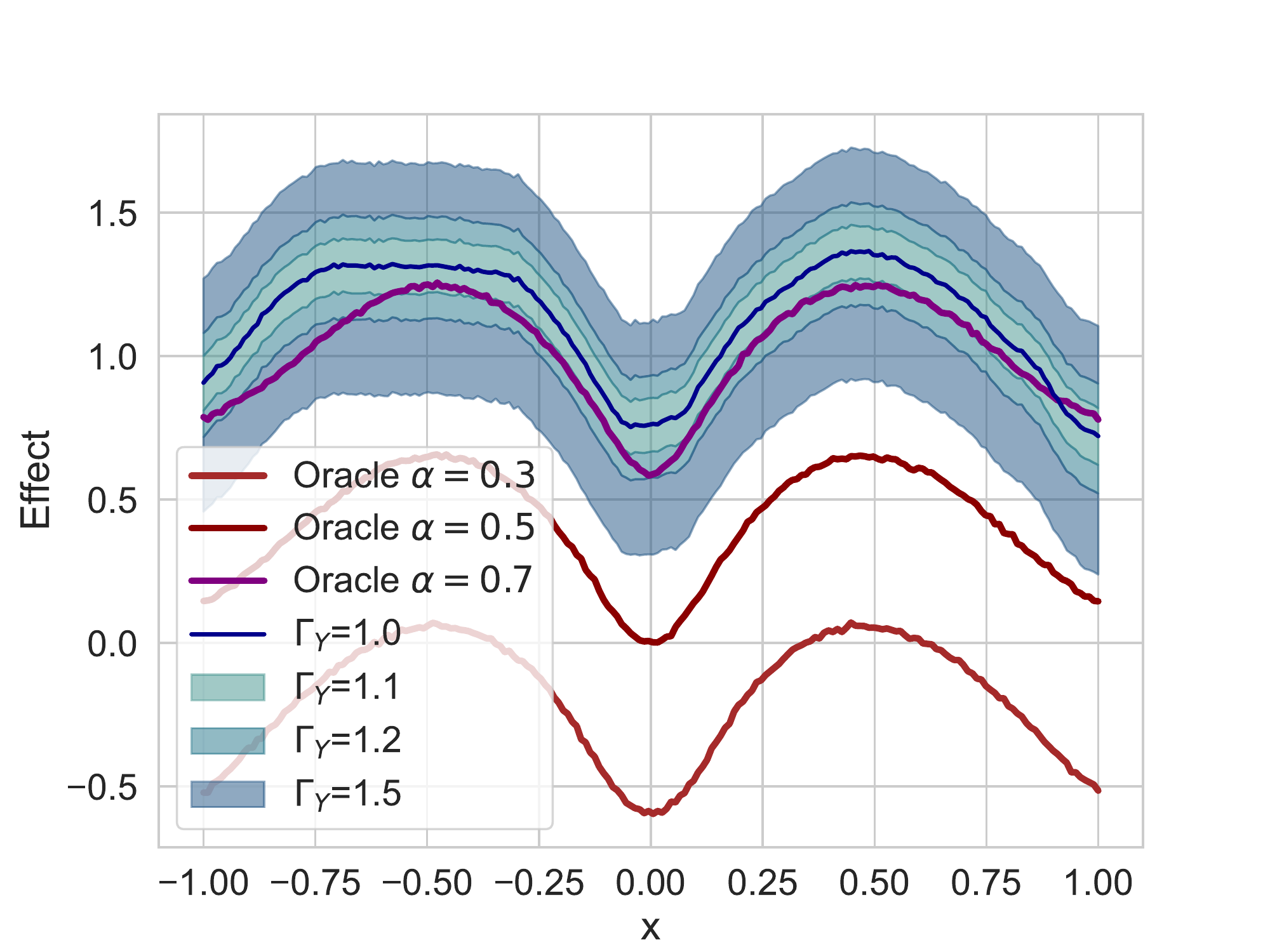}
\end{subfigure}%
\begin{subfigure}{0.33\textwidth}
  \centering
  \includegraphics[width=1\linewidth]{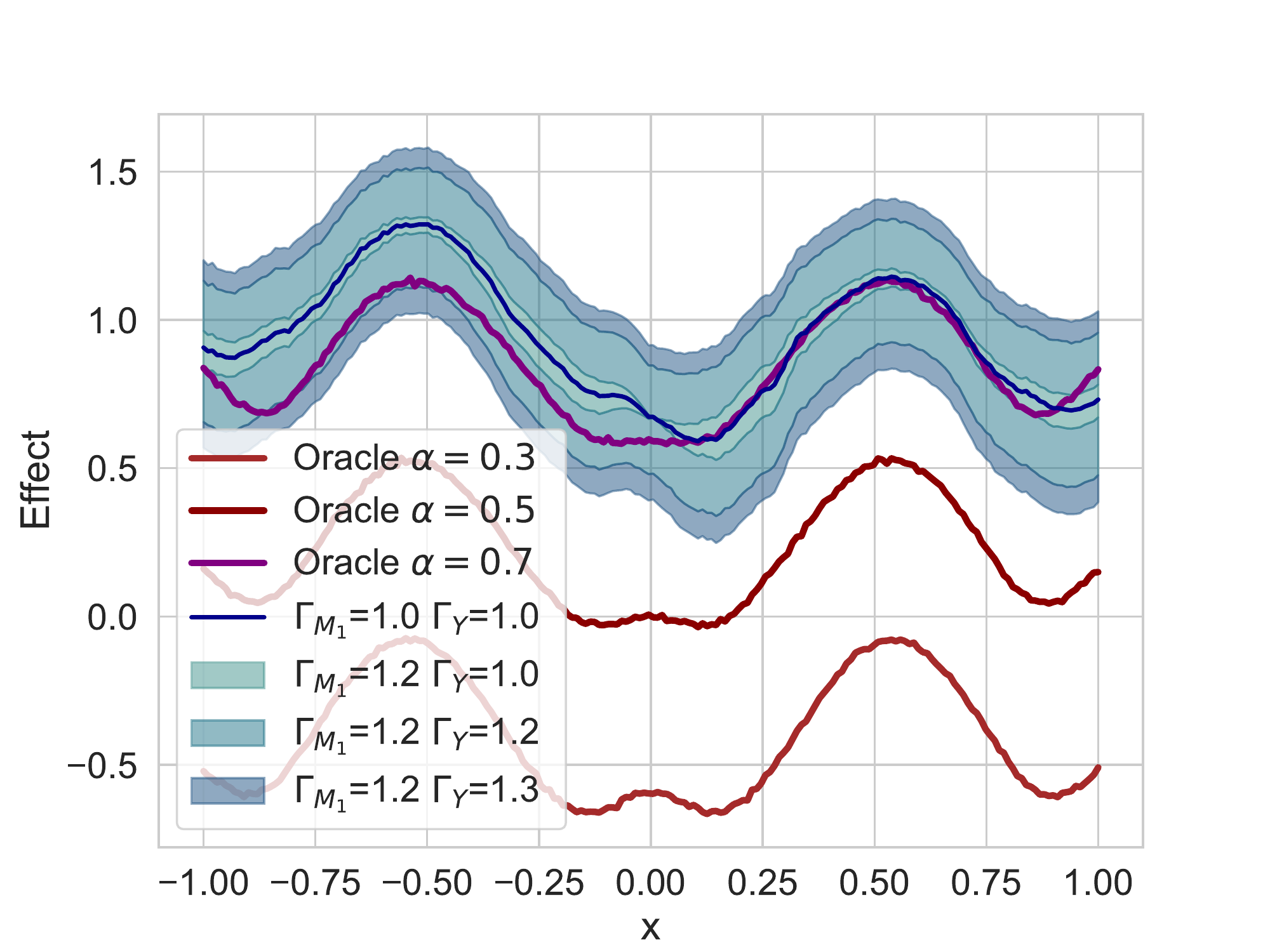}
\end{subfigure}
\begin{subfigure}{0.33\textwidth}
  \centering
  \includegraphics[width=1\linewidth]{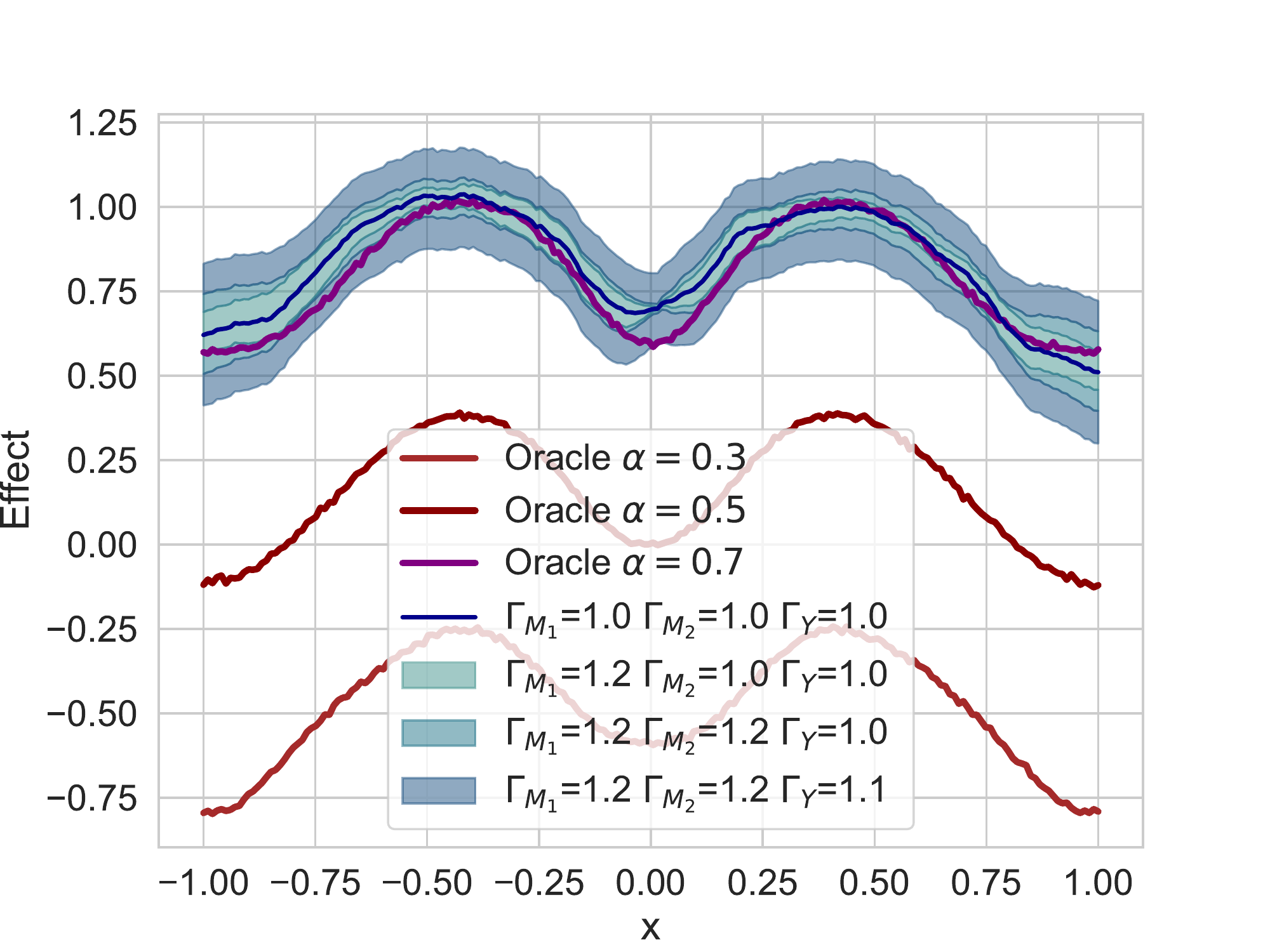}
\end{subfigure}
\begin{subfigure}{0.33\textwidth}
  \centering
  \includegraphics[width=1\linewidth]{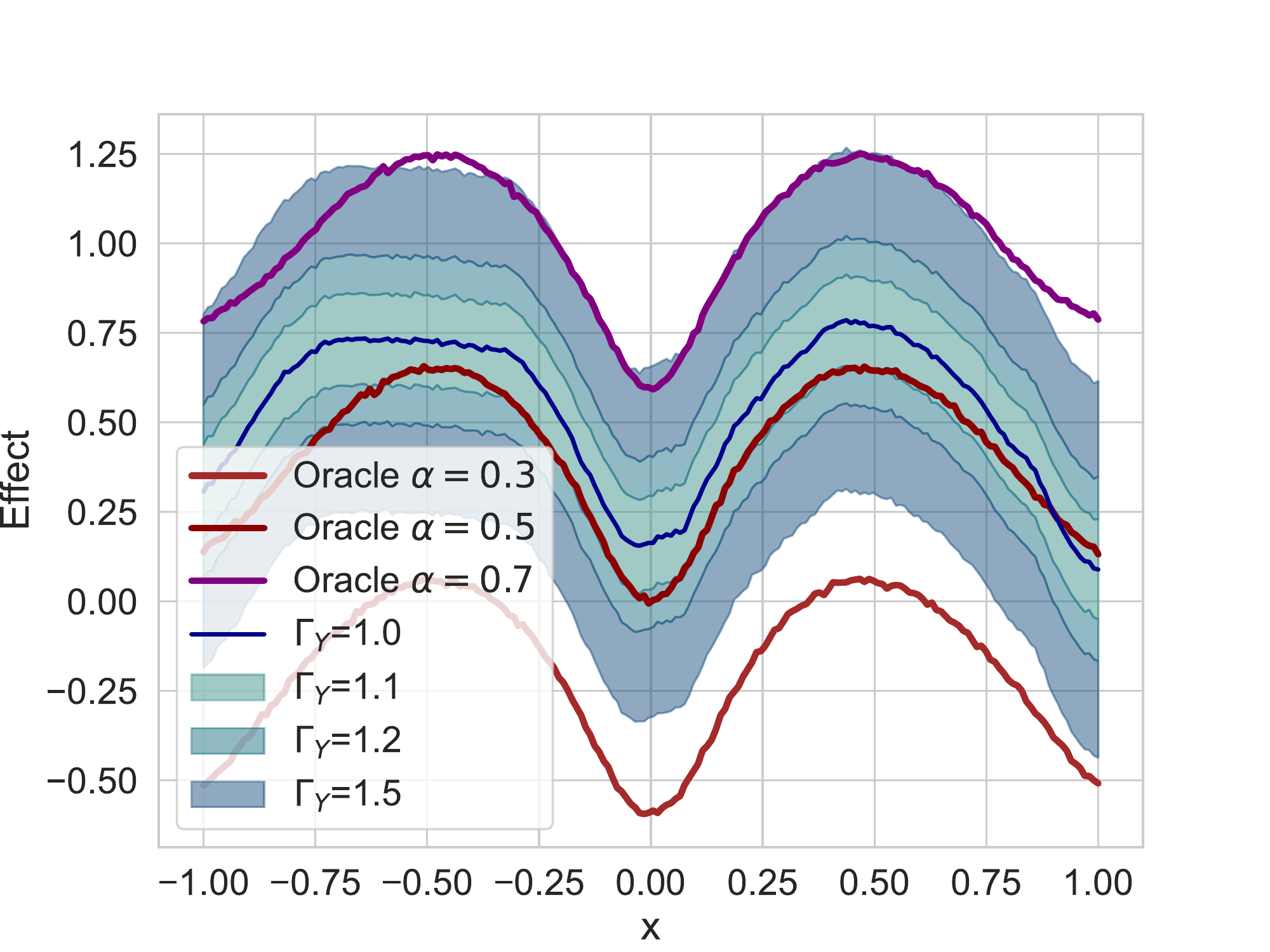}
\end{subfigure}%
\begin{subfigure}{0.33\textwidth}
  \centering
  \includegraphics[width=1\linewidth]{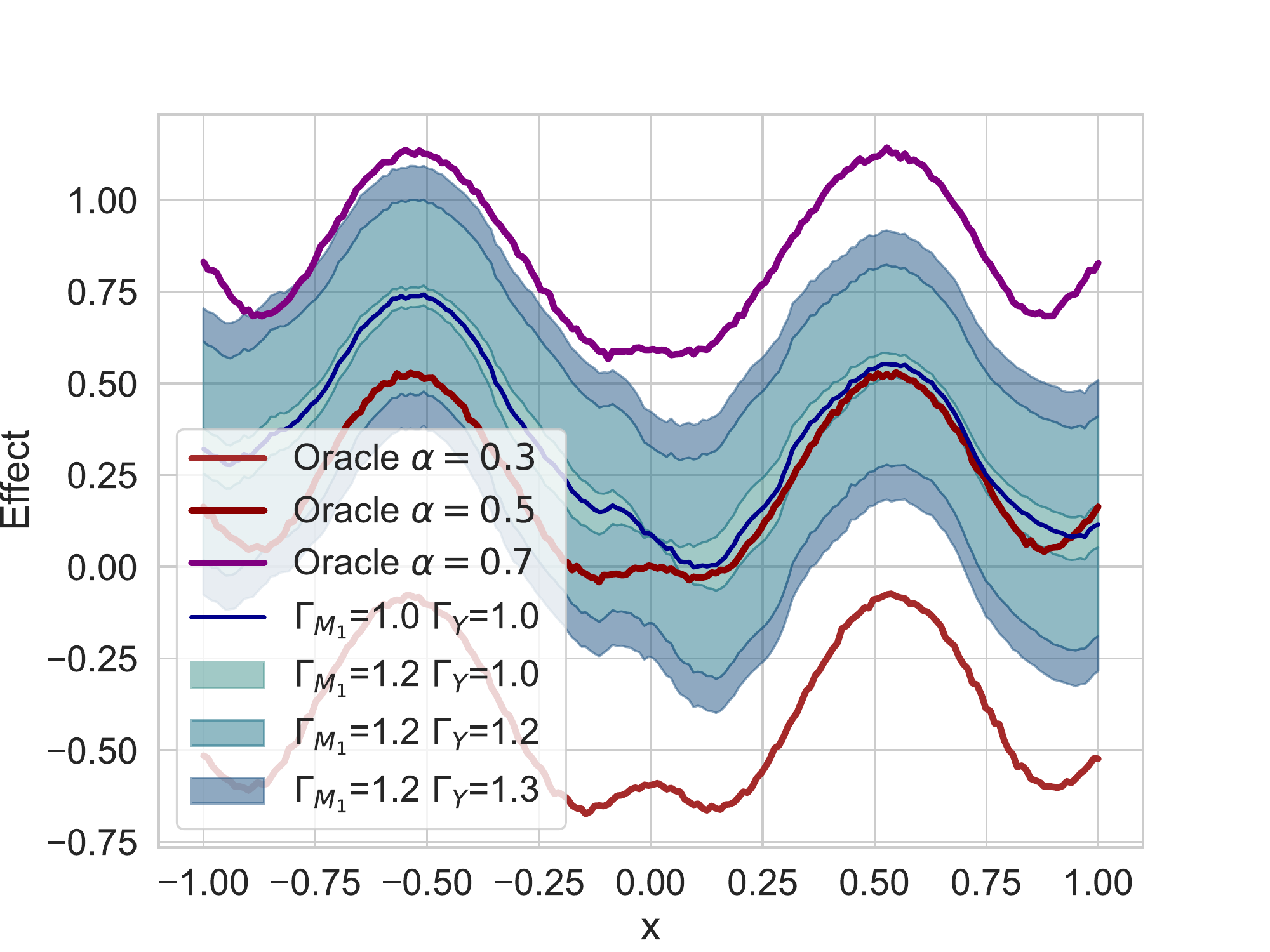}
\end{subfigure}
\begin{subfigure}{0.33\textwidth}
  \centering
  \includegraphics[width=1\linewidth]{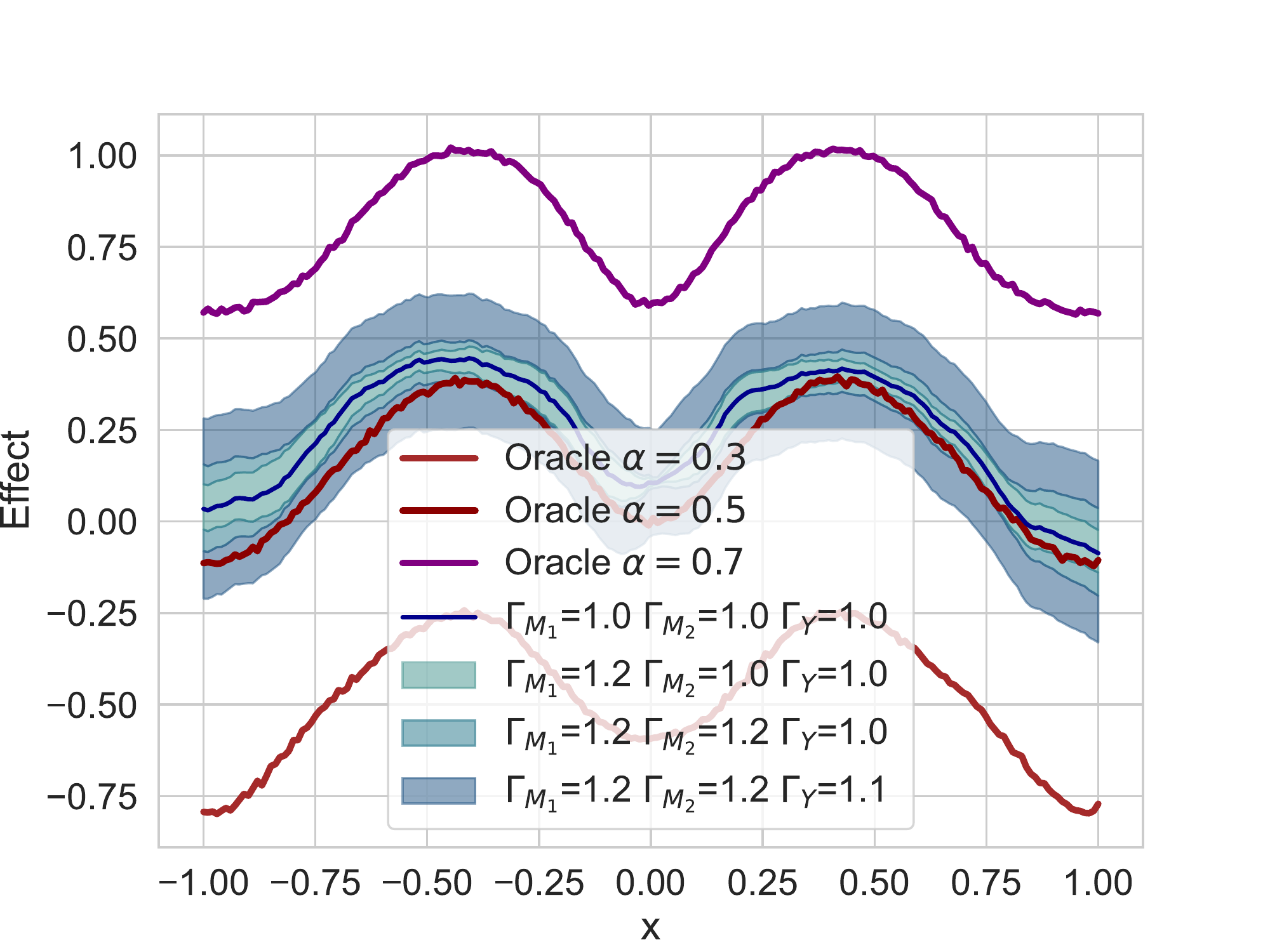}
\end{subfigure}
\begin{subfigure}{0.33\textwidth}
  \centering
  \includegraphics[width=1\linewidth]{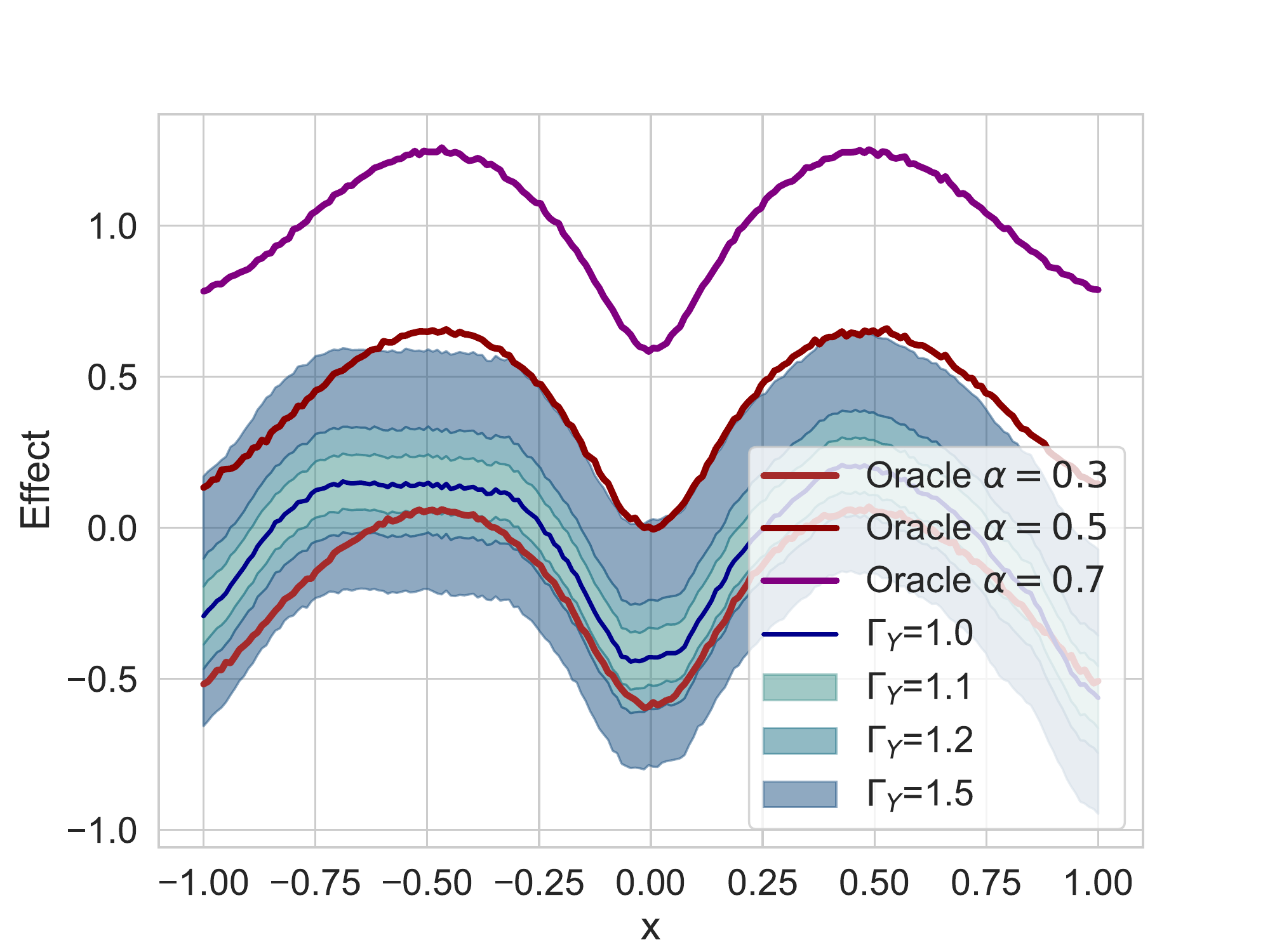}
\end{subfigure}%
\begin{subfigure}{0.33\textwidth}
  \centering
  \includegraphics[width=1\linewidth]{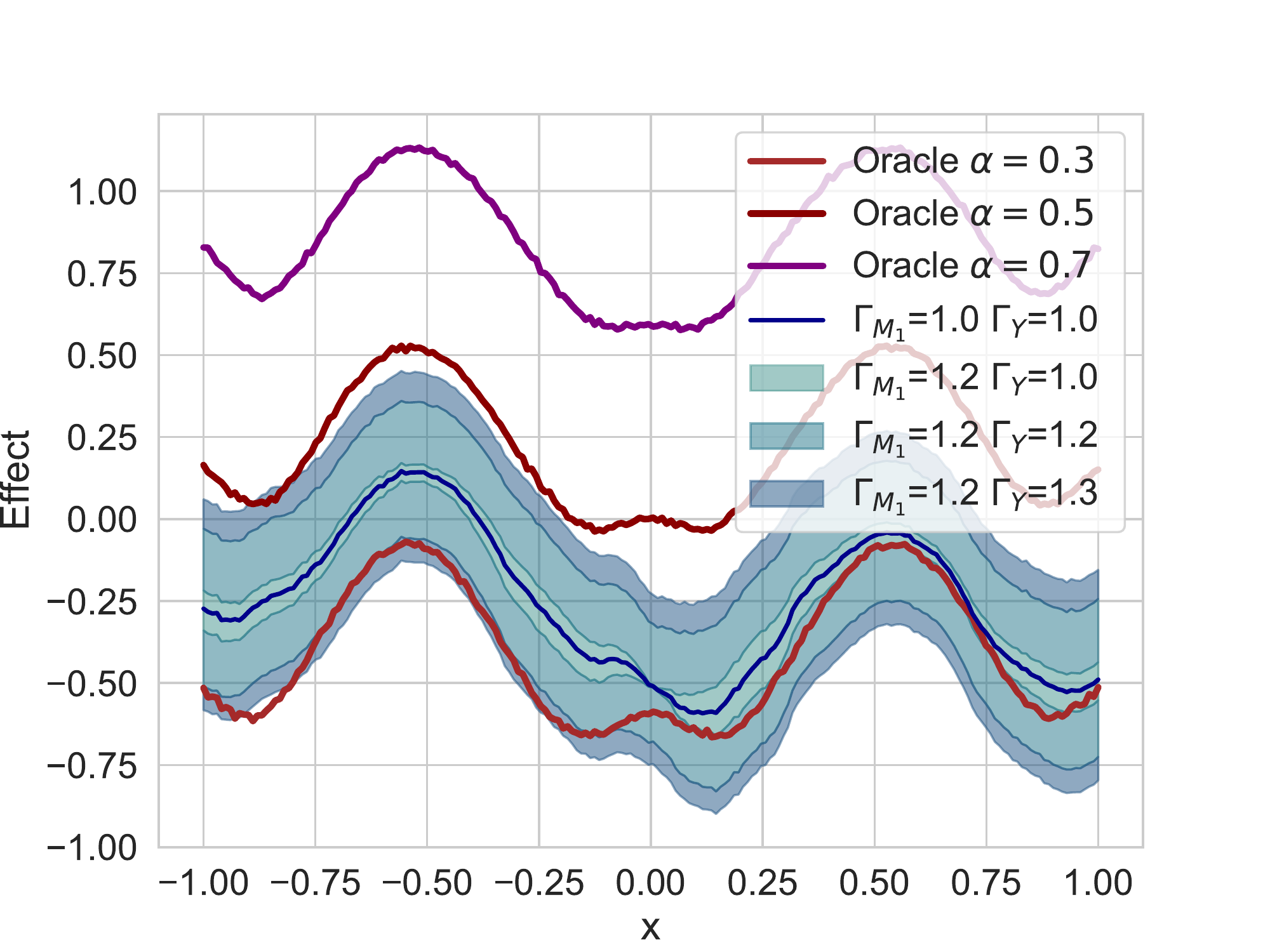}
\end{subfigure}
\begin{subfigure}{0.33\textwidth}
  \centering
  \includegraphics[width=1\linewidth]{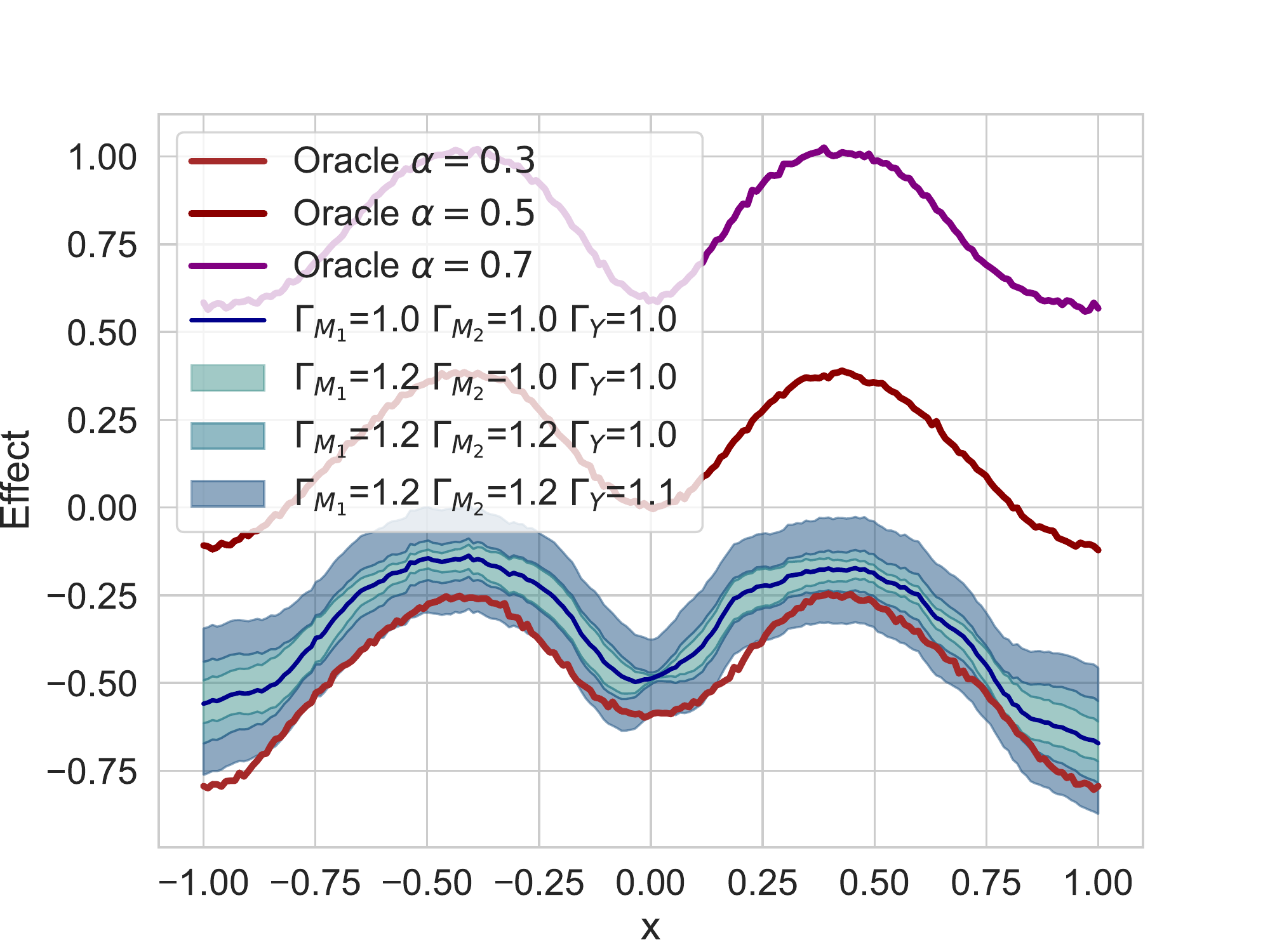}
\end{subfigure}
\caption{Results for distributional effects in the continuous treatment setting using the same treatments as in Fig.~\ref{fig:sim_continuous} (main paper). Settings (i)--(iii) are ordered from left to right. The top row shows the oracle sensitivity parameter $\Gamma^\ast_W$ (depending on $x$). Rows $2$, $3$, and $4$ show the bounds for the $\alpha$-quantiles of the interventional distribution with $\alpha = 0.7$, $\alpha = 0.5$, and $\alpha = 0.3$.}
\label{fig:sim_continuous_quantile}
\end{figure}

\subsection{Semi-synthetic data}

We provide additional results for the semi-synthetic IHDP data with unobserved confounding from \citeauthor{Jesson.2021}~\cite{Jesson.2021}.
Here, we demonstrate the effectiveness of our bounds for distributional effects for decision-making. We follow \citeauthor{Jesson.2021}~\cite{Jesson.2021} and evaluate our bounds by measuring the error rate of associated deferral
policies, which defer a prespecified fraction of test samples with the largest uncertainty to an expert. We mimic a high-stakes decision-making problem where a wrong decision to prescribe treatment is much worse than a wrong decision not to prescribe treatment. To do so, we measure performance
with a weighted error rate, that penalizes wrong treatment decisions twenty times heavier than wrong decisions not to treat. We then compare three different deferral policies: (i) the (standard) policy based on the expectation which treats whenever $\E[Y | \mathbf{X} = \mathbf{x}, A = 1] > \E[Y | \mathbf{X} = \mathbf{x}, A = 0]$, (ii)~a more conservative quantile policy that treats whenever $Q_q[Y | \mathbf{X} = \mathbf{x}, A = 1] > Q_{1-q}[Y | \mathbf{X} = \mathbf{x}, A = 0]$ with $q = 0.4$ ($Q_q$ denotes the q-quantile of $\mathbb{P}(Y | \mathbf{X} = \mathbf{x}, A = i))$, and (iii) the same quantile policy with $q = 0.2$. Note that our bounds for policy (i) coincide with sharp bounds for binary CATE from the literature \cite{Dorn.2022}. As done in \cite{Jesson.2021}, we report means and standard deviations of 400 random initializations of the IHDP data. The results (Fig.~\ref{fig:ihdp}) show that the conservative quantile policies improve over the expectation policy.

\begin{figure}[ht]
 \centering
  \includegraphics[width=0.5\linewidth]{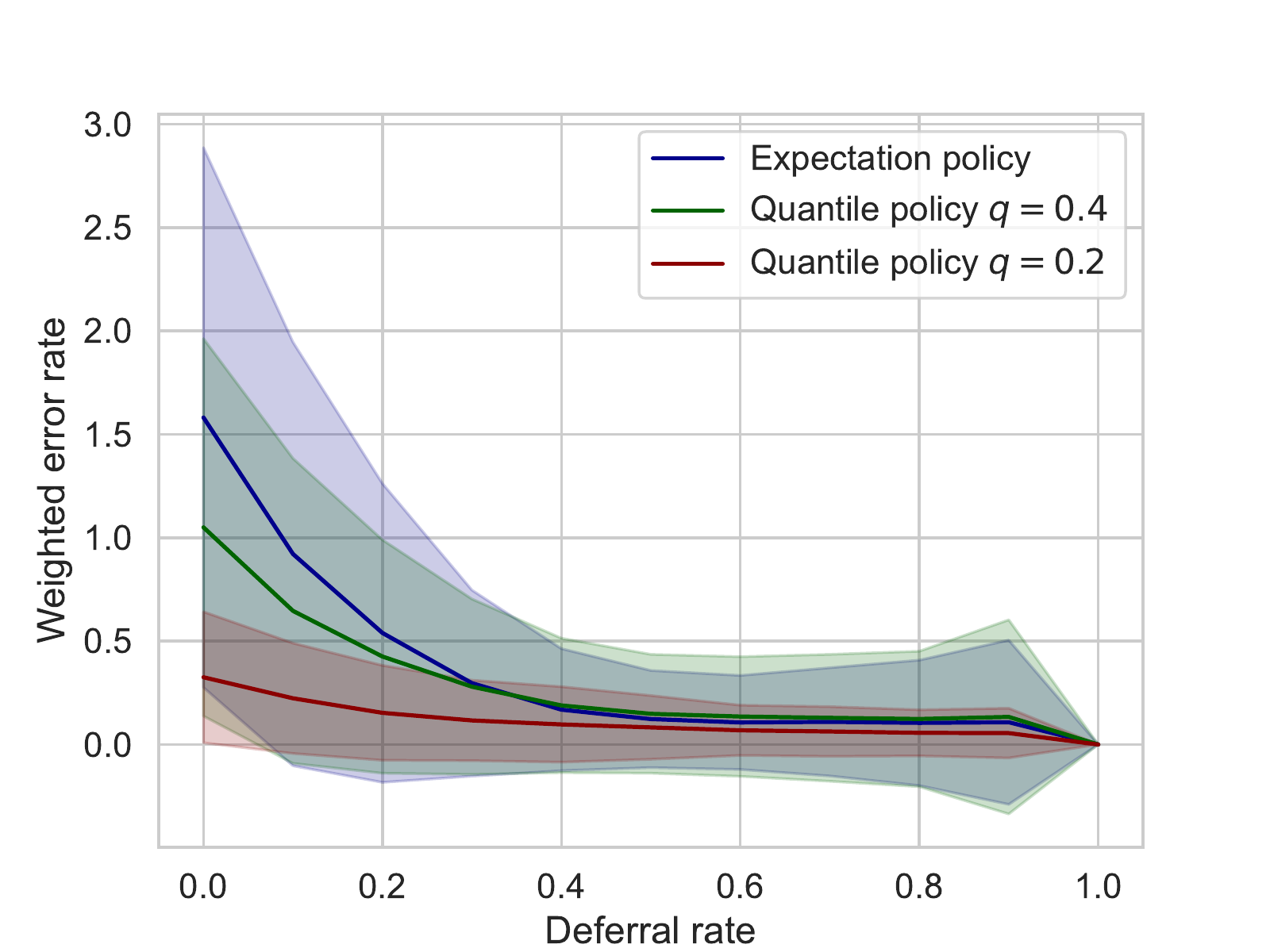}
\caption{results for the semi-synthetic IHDP data with unobserved confounding from \citeauthor{Jesson.2021}~\cite{Jesson.2021}.}
\label{fig:ihdp}
\end{figure}
\clearpage

\end{document}